\newtheorem{theorem}{Theorem}
\newtheorem{lemma}{Lemma}
\newtheorem{proposition}{Proposition}
\newtheorem{definition}{Definition}
\newtheorem{corollary}{Corollary}
\newtheorem{assumption}{Assumption}
\newcommand{\COMM}[2]{{
\begin{CJK}{UTF8}{ipxm}
\ifthenelse{\equal{#1}{AN}}{\color{red}}{
\ifthenelse{\equal{#1}{TS}}{\color{blue}}{
\ifthenelse{\equal{#1}{BB}}{\color{green}}}}
[#1: #2]
\end{CJK}
}}
\newcommand{\defeq}{\overset{def}{=}}
\newcommand{\argmax}{\operatornamewithlimits{argmax}}
\def\pd<#1>{\left\langle #1 \right\rangle}
\def\floor[#1]{\left\lfloor #1 \right\rfloor}
\def\ceil[#1]{\left\lceil #1 \right\rceil}
\newcommand{\realsp}{\mathbb{R}}
\newcommand{\posintegers}{\mathbb{Z}_{+}}
\newcommand{\expec}{\mathbb{E}}
\newcommand{\prob}{\mathbb{P}}
\newcommand{\fdim}{d}
\newcommand{\ndata}{n}
\newcommand{\featuresp}{\mathcal{X}}
\newcommand{\labelsp}{\mathcal{Y}}
\newcommand{\tpr}{\nu}
\newcommand{\radcomp}{\Re}
\def\risk{\mathcal{L}}
\def\hilsp{\mathcal{H}}
\title{Gradient Descent can Learn Less Over-parameterized Two-layer Neural Networks on Classification Problems}
\author{Atsushi Nitanda$^{1,2,3,\dag}$, Geoffrey Chinot$^{4,\ddag}$, Taiji Suzuki$^{1,2,\star}$
\vspace{2mm}\\
\normalsize{\textit{$^1$Graduate School of Information Science and Technology, The University of Tokyo, Japan}} \\
\normalsize{\textit{$^2$Center for Advanced Intelligence Project, RIKEN, Japan}} \\
\normalsize{\textit{$^3$PRESTO, Japan Science and Technology Agency, Japan}} \\
\normalsize{\textit{$^4$ENSAE, CREST, France}} \\
\small{Email: $^\dag$nitanda@mist.i.u-tokyo.ac.jp, $^\ddag$geoffrey.chinot@ensae.fr, $^\star$taiji@mist.i.u-tokyo.ac.jp}} 
\date{}
\begin{document}

\maketitle

\begin{abstract}
Recently, several studies have proven the global convergence and generalization abilities of the gradient descent method for two-layer ReLU networks.
Most studies especially focused on the regression problems with the squared loss function, except for a few, and the importance of the positivity of the {\it neural tangent kernel} has been pointed out.
On the other hand, the performance of gradient descent on classification problems using the logistic loss function has not been well studied, and further investigation of this problem structure is possible.
In this work, we demonstrate that the separability assumption using a {\it neural tangent} model is more reasonable than the positivity condition of the neural tangent kernel and provide a refined convergence analysis of the gradient descent for two-layer networks with smooth activations.
A remarkable point of our result is that our convergence and generalization bounds have much better dependence on the network width in comparison to related studies.
Consequently, our theory provides a generalization guarantee for less over-parameterized two-layer networks, while most studies require much higher over-parameterization.
\end{abstract}

\section{Introduction}\label{sec:introduction}
In recent years, many studies have been devoted to explaining the great success of over-parameterized neural networks, where the number of parameters is much larger 
than that needed to fit a given training dataset.
On the other hand, this study treats less over-parameterized two-layer neural networks using smooth activation functions 
and analyzes the convergence and generalization abilities of the gradient descent method for optimizing this type of network. 

For over-parameterized two-layer neural networks, 
\cite{du2018gradient,arora2019fine,chizat2018global,mei2018mean} showed the global convergence of the gradient descent.
These studies are mainly divided into two groups depending on the scaling factor of the output of the networks 
to which the global convergence property has been demonstrated using different types of proofs.
For the scaling factor $1/m$, ($m$: the number of hidden units), \cite{chizat2018global,mei2018mean} showed 
the convergence to the global minimum over probability measures when $m \rightarrow \infty$ 
by utilizing the Wasserstein gradient flow perspective \citep{nitanda2017stochastic} on the gradient descent.
For the scaling factor $1/m^\beta$ ($\beta < 1$), 
\cite{du2018gradient} essentially demonstrated that the kernel smoothing of functional gradients
by the {\it neural tangent kernel} \citep{jacot2018neural,chizat2018note} has comparable performance with the functional gradient as $m\rightarrow \infty$ 
by making a positivity assumption on the Gram-matrix of this kernel, 
resulting in the global convergence property. 
In addition, \cite{arora2019fine} provided a generalization bound via a fine-grained analysis of the gradient descent.
These studies provide the first steps to understand the role of over-parameterization of neural networks and the gradient descent on regression problems using the squared loss function.
For the classification problems with logistic loss, a few studies \citep{allen2018learning,cao2019generalization,cao2019generalization_b} investigated the convergence and generalization abilities of gradient descent under a separability assumption with a suitable model instead of the positivity of the neural tangent kernel.
In this study, we further develop this line of research on binary classification problems. 

\paragraph{Our contributions.}
We provide fine-grained global convergence and generalization analyses of the gradient descent for two-layer neural networks with smooth activations 
under a separability assumption with a sufficient margin using a {\it neural tangent model}, which is a non-linear model with feature extraction through a neural tangent.
We demonstrate that a separability assumption is more suitable than the positivity condition of the neural tangent kernel because (i) the positive neural tangent kernel leads to weak separability and conversely, (ii) separability leads to the positivity of the neural tangent kernel only on a cone spanned by labels, which is very small space compared to the whole space.
Therefore, the separability condition is rather weak in this sense but it is enough to ensure global convergence for the classification problems.
Thus, a significantly improved convergence and generalization analyses with respect to network width can be obtained because the positivity of the neural tangent kernel is not required.
Consequently, our theory provides a generalization guarantee for less over-parameterized two-layer networks trained by gradient descent, 
while most existing results relying on the positive neural tangent kernel essentially require high over-parameterization.
To the best of our knowledge, this is the first work that shows the global convergence and the generalization guarantees for neural networks without over-parameterization on classification problems with logistic loss in the literature. \footnote{Following the initial version of our manuscript, \cite{ji2019polylogarithmic,chen2019much} have provided the global convergence and generalization guarantee for ReLU networks with polylogarithmic width based on our separability assumption.}
Most studies \citep{allen2018learning,cao2019generalization,cao2019generalization_b} have focused on highly over-parameterized neural networks with ReLU activation,
and less over-parameterized settings have been considered difficult for showing the global convergence property of gradient descent.
However, we note that these studies provided global convergence and generalization analyses of the (stochastic) gradient descent for (deep) ReLU networks by making a similar but different assumption than ours.
Thus, our and these studies do not include each other because of the difference of the network structure (i.e., network depth and activation type) and assumptions.

We here describe the main result informally.
A neural tangent model is an infinite-dimensional non-linear model using transformed features $(\partial_\theta \sigma (\theta^{(0)\top}x))_{\theta^{(0)}\sim \mu_0}$, where 
$\sigma$ is a smooth activation and $\mu_0$ is a distribution used to 
initialize the parameters of the input layer in two-layer neural networks.
Theorem \ref{theorem:informal_main_result} states that gradient descent can find an $\epsilon$-accurate solution in terms of the expected classification error for a wide class of over-parameterized two-layer neural networks under a separability assumption using a neural tangent model.
\begin{theorem}[Informal] \label{theorem:informal_main_result}
Suppose that a given data distribution is separable by a neural tangent model with a sufficient margin under $L_\infty$-constraint.
If for any $\epsilon > 0$, the hyperparameters satisfy  one of the following
\begin{align*}
&\textrm{(i)}\ \ \beta \in [0,1),\ m=\Omega(\epsilon^{\frac{-1}{1-\beta}}),\ T=\Theta(\epsilon^{-2}),\ \eta=\Theta(m^{2\beta-1}),\ \ndata = \tilde{\Omega}(\epsilon^{-4}), \\
&\textrm{(ii)}\ \ \beta = 0,\ m=\tilde{\Theta}(\epsilon^{-3/2}),\ T=\tilde{\Theta}(\epsilon^{-1}),\ \eta=\Theta(m^{-1}),\ \ndata = \tilde{\Omega}(\epsilon^{-2}).
\end{align*}
then with high probability over the random initialization and choice of samples of size $n$, the gradient descent with a learning rate $\eta$ achieves an expected $\epsilon$-classification error within $T$-iterations.
\end{theorem}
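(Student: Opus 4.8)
The plan is to split the expected classification error into an optimization term (the empirical logistic risk after $T$ steps) and a generalization term, and to control each separately. For the first reduction I would use that the logistic loss dominates the $0$--$1$ loss, $\mathbf{1}[z \le 0] \le \ell(z)/\log 2$, so that the empirical classification error is bounded by the empirical logistic risk; I would then pass from empirical to expected error through a uniform-convergence bound. This reduces the theorem to two self-contained tasks: (a) showing that gradient descent drives the empirical logistic risk below $O(\epsilon)$ within $T$ iterations, and (b) showing that the resulting network generalizes.

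For the optimization part (a), the structural fact I would exploit is that, near the initialization $\theta^{(0)}$, the network is well approximated by its linearization --- precisely the neural tangent model with features $\partial_\theta \sigma(\theta^{(0)\top} x)$. I would Taylor-expand $f_\theta$ around $\theta^{(0)}$ and bound the second-order remainder using the smoothness of $\sigma$ together with a bound on the trajectory radius $\|\theta_t - \theta^{(0)}\|$. The separability assumption then supplies a comparator: a parameter $\theta^\ast$ in the neural tangent model achieving margin on the data under the $L_\infty$ constraint, hence small logistic risk. I would run a regret-style (online convex optimization) argument against this comparator, where convexity of $\ell$ and of the linearized model yields a telescoping inequality of the form $\sum_t [\mathcal{L}(\theta_t) - \mathcal{L}(\theta^\ast)] \le \|\theta^{(0)} - \theta^\ast\|^2/(2\eta) + (\eta/2)\sum_t \|\nabla \mathcal{L}(\theta_t)\|^2 + (\text{linearization error})$. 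For case (i) I would treat the loss as Lipschitz, bound the gradient-norm sum by $O(\eta T)$, and optimize to obtain the $1/\sqrt{T}$ rate, explaining $T = \Theta(\epsilon^{-2})$. For case (ii) I would instead use smoothness of $\ell$ together with its self-bounding property $|\ell'(z)|^2 \lesssim \ell(z)$ to bound the gradient-norm sum by the cumulative loss itself, rearrange, and obtain the faster $1/T$ rate, hence $T = \tilde\Theta(\epsilon^{-1})$; this is why case (ii) is restricted to $\beta = 0$, where the effective smoothness constant of the scaled network is most favorable.

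For the generalization part (b), I would observe that along the whole trajectory the parameters remain in a ball $\{\theta : \|\theta - \theta^{(0)}\| \le \tau\}$ whose radius $\tau$ is controlled by $\eta$, $T$, and $m$ through the same trajectory bound used above. I would then bound the Rademacher complexity of the induced function class --- which scales like $\tau/\sqrt{n}$ up to factors depending on $\sigma$ and the data --- and invoke a standard margin-based uniform-convergence bound to control the expected classification error by the empirical one plus $\tilde O(\sqrt{\mathrm{complexity}/n})$. Choosing the hyperparameters as stated then balances the optimization residual against this generalization gap, producing the sample complexities $n = \tilde\Omega(\epsilon^{-4})$ and $n = \tilde\Omega(\epsilon^{-2})$ in the two cases.

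The main obstacle I anticipate is the coupling argument in (a): one must simultaneously keep the trajectory radius $\tau$ small enough that the linearization error is negligible, yet allow enough movement for the comparator $\theta^\ast$ to be reached, and every bound on $\tau$ and on the remainder must be made uniform over the $T$ iterations. The delicate point --- and where the improved width dependence should originate --- is obtaining a sharp bound on how the approximation error accumulates as a function of $m$, so that the required width is only $m = \Omega(\epsilon^{-1/(1-\beta)})$ rather than the much larger polynomial widths demanded when one instead relies on positivity of the neural tangent kernel.
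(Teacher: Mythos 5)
Your proposal contains a genuine gap in case (i), and it is precisely the point where the paper's analysis departs from the standard loss-based route. You reduce both cases to ``driving the empirical logistic risk below $O(\epsilon)$ within $T$ iterations.'' This is achievable in case (ii), but it provably cannot work at the case (i) width $m=\Omega(\epsilon^{-1/(1-\beta)})$: in the regret argument against the separability comparator $\Theta^*=\Theta^{(0)}+\alpha(a_r v(\theta_r^{(0)}))_r$, the comparator must carry margin $\approx\log(1/\epsilon)$ to have loss $O(\epsilon)$, and the curvature (almost-convexity) correction $\frac{K_2}{m^\beta}\|\nabla_f\mathcal{L}(f_{\Theta^{(t)}})\|_{L_1}\|\Theta^{(t)}-\Theta^*\|_2^2$, accumulated over the trajectory, contributes a term of order $\rho^{-1}\sqrt{\eta T/m}$ to the average loss. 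With $\eta=\Theta(m^{2\beta-1})$, $T=\Theta(\rho^{-2}\epsilon^{-2})$ and $m=\Theta(\epsilon^{-1/(1-\beta)})$ this term is $\Theta(\rho^{-2})$ --- a constant --- so the loss bound is vacuous; making it $O(\epsilon)$ is exactly what forces the larger width $m=\tilde{\Theta}(\epsilon^{-3/2})$ of case (ii). The paper's case (i) therefore never proves loss convergence at all. Instead it tracks $\|\nabla_f\mathcal{L}(f_{\Theta^{(t)}})\|_{L_1(\tpr_\ndata^X)}$, i.e.\ the average gap between labels and predicted conditional probabilities: separability yields positivity of the approximate NTK \emph{only along functional gradients} (which always lie in the cone spanned by the labels, Proposition \ref{prop:kernel_positivity_org}), giving $\frac{1}{T}\sum_t\|\nabla_f\mathcal{L}(f_{\Theta^{(t)}})\|_{L_1}^2=O(1/(\rho^2T))$ (Theorem \ref{theorem:global_convergence}); the classification/margin error is then controlled by Markov's inequality via $\mathbb{P}_{\tpr_\ndata}[Yf_\Theta(X)\leq\gamma]\leq(1+\exp(\gamma))\|\nabla_f\mathcal{L}(f_\Theta)\|_{L_1}$. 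Without switching to this progress measure, your argument cannot recover the claimed $m=\Omega(\epsilon^{-1/(1-\beta)})$.

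There is a second, quantitative gap in your part (b) for case (ii). You propose to bound the Rademacher radius by ``the same trajectory bound used above,'' i.e.\ $\tau\approx\sqrt{\eta T}$. In case (ii) this gives $\tau=\Theta(\epsilon^{1/4}\sqrt{\log(1/\epsilon)})$, and since the complexity term scales as $m^{1/2-\beta}\tau/\sqrt{n}$ with $m^{1/2}=\Theta(\rho^{-1}\epsilon^{-3/4}\log^{1/2}(1/\epsilon))$, you would need $n=\tilde{\Omega}(\epsilon^{-3})$, not the claimed $\tilde{\Omega}(\epsilon^{-2})$. The paper closes this with a separate idea (Proposition \ref{proposition:sharper_bound_on_distance}): once the loss convergence of Corollary \ref{corollary:faster_convergence_rate} is available, a dyadic decomposition of the iterations shows that gradient norms on later epochs are controlled by the (already small) loss, yielding the sharper radius $\|\Theta^{(T)}-\Theta^{(0)}\|_2=O(\epsilon^{3/4}\log^2(\rho^{-2}\epsilon^{-1}))$, which is what makes $n=\tilde{\Omega}(\epsilon^{-2})$ attainable. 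Your high-level architecture (linearization around initialization, comparator from separability, regret-type telescoping, margin-based generalization) otherwise matches the paper's, but these two missing ingredients --- the $L_1$ functional-gradient progress measure for case (i), and the post-hoc sharpened trajectory bound for case (ii) --- are where the stated hyperparameter scalings actually come from.
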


\begin{table*}[t]
\caption{Summary of hyperparameter settings and assumptions to achieve an expected $\epsilon$-classification error by gradient descent for binary classifications.
The ``Separability'' column denotes the types of models where a separability assumption is made.
$m$ is the number of hidden units, $n$ is the size of the training data, and $T$ is the number of iterations of gradient descent.
The notations $\tilde{\Omega}$ and $\tilde{\Theta}$ hide the logarithmic terms in the big-$\Omega$ and -$\Theta$ notations.
Smooth activations include {\it sigmoid, tanh, swish} activations, and several smooth approximations of ReLU.
As for \cite{allen2018learning,cao2019generalization,cao2019generalization_b,chen2019much}, we pick up results specialized to two-layer networks.}
\begin{center}
  {
  \small
  \begin{tabular}{ccccccc}
    \toprule
    & Activation
    & Separability
    & Deep
    & $m$
    & $n$
    & $T$ \\
    \midrule    
    \cite{allen2018learning}
    & ReLU
    & Smooth Target
    & yes
    & $\tilde{\Omega}(\epsilon^{-10})$
    & $\Omega(\epsilon^{-4})$ 
    & $\tilde{\Theta}(\epsilon^{-2})$ \\
    \hline
    \cite{cao2019generalization}
    & ReLU
    & ReLU NN
    & yes      
    & $\tilde{\Omega}(\epsilon^{-14})$
    & $\tilde{\Omega}(\epsilon^{-4})$ 
    & $\tilde{\Theta}(\epsilon^{-2})$ \\
    \hline
    \cite{cao2019generalization_b}
    & ReLU
    & ReLU NN
    & yes      
    & $\tilde{\Omega}(\epsilon^{-14})$
    & $\tilde{\Omega}(\epsilon^{-2})$ 
    & $\tilde{\Theta}(\epsilon^{-2})$ \\
    \hline
    \cite{ji2019polylogarithmic}
    & ReLU
    & Neural Tangent
    & no
    & $\Omega(\mathrm{polylog}(n,\epsilon))$
    & $\tilde{\Omega}(\epsilon^{-2})$ 
    & $\Theta(\epsilon^{-1})$ \\
    \hline   
    \cite{chen2019much}
    & ReLU
    & Neural Tangent
    & yes
    & $\Omega(\mathrm{polylog}(n,\epsilon))$
    & $\tilde{\Omega}(\epsilon^{-2})$ 
    & $\Theta(\epsilon^{-1})$ \\
    \hline   
    \rule{0mm}{4mm}
    \multirow{2}{*}{{\bf This work}}
    & \multirow{2}{*}{Smooth}
    & \multirow{2}{*}{Neural Tangent}
    & \multirow{2}{*}{no}      
    &$\Omega(\epsilon^{-1})$
    &$\tilde{\Omega}(\epsilon^{-4})$ 
    &$\Theta(\epsilon^{-2})$ \\
    &
    &
    &
    &$\tilde{\Theta}(\epsilon^{-3/2})$
    &$\tilde{\Omega}(\epsilon^{-2})$ 
    &$\tilde{\Theta}(\epsilon^{-1})$ \\    
    \bottomrule
  \end{tabular}
  }
\end{center}
\label{table:comparison_of_hyperparameters_settings}
\end{table*}

\paragraph{Related work.}
A few recent studies \citep{allen2018learning,cao2019generalization,cao2019generalization_b} are closely related to our work because they also treated the logistic loss function. 
As stated above, problem settings in our and these studies are somewhat different, but we compare our result with those specialized to two-layer network to argue the reasonableness of the separability assumption by a neural tangent model. 
More recently, following the initial version of our manuscript, \cite{ji2019polylogarithmic,chen2019much} have shown that polylogarithmic network width suffices for ReLU networks to achieve an arbitrary small classification errors based on our separability assumption.
On the other hand, separability assumptions were made on an infinite-width two-layer ReLU network in \cite{cao2019generalization,cao2019generalization_b} and on a smooth target function in \cite{allen2018learning}. 
For generalization analyses, our study and \cite{ji2019polylogarithmic,chen2019much} exhibit much better dependency on the network width owing to a better problem setting with a fine-grained analysis.
Table \ref{table:comparison_of_hyperparameters_settings} provides a comparison of the hyperparameter settings of networks and gradient descent in related studies to achieve an expected $\epsilon$-classification error.
As evident in Table \ref{table:comparison_of_hyperparameters_settings}, our theory and \cite{ji2019polylogarithmic,chen2019much} ensure the same generalization ability as those of \cite{allen2018learning,cao2019generalization,cao2019generalization_b} for a more comprehensive class of two-layer networks with respect to the network width.

\paragraph{Other related work.}
\cite{brutzkus2017sgd,li2018learning} provided generalization analyses of the stochastic gradient descent for two-layer networks.
\cite{brutzkus2017sgd} assumed that datasets are linear separable and this restrictive assumption was relaxed to mixtures of well separated data distributions in \cite{li2018learning}.
As for the convergence rate analysis, the network width $m=\tilde{\Omega}(\epsilon^{-24})$ 
and the number of samples (iterations) $\ndata=\Theta(T)=\tilde{O}(\epsilon^{-12})$ are required 
to achieve an expected $\epsilon$-classification error in \cite{li2018learning}.
In \cite{allen2018convergence,zou2018stochastic}, the global convergence analyses in terms of optimization without the specification of network size were provided, but we note that better generalization bounds cannot be obtained from these results because the complexities of neural networks also cannot be specified. 

Apart from the abovementioned studies, there are many other studies 
\citep{brutzkus2017globally,zhong2017recovery,tian2017analytical,soltanolkotabi2017learning,du2018gradient,zhang2018learning,arora2019fine,oymak2019towards,zhang2019fast,wu2019global} 
that focus on regression problems.
Based on the positivity condition of the neural tangent kernel, \cite{du2018gradient,arora2019fine,zhang2019fast,wu2019global} showed the global convergence of gradient descent methods for neural networks with the required network widths $\Omega(n^6)$ \citep{du2018gradient, wu2019global}, $\Omega(n^7 \epsilon^{-2})$ \citep{arora2019fine}, and $\Omega(n^4)$ \citep{zhang2019fast}.
Because sample complexities are generally slower than or equal to $n = \Omega(\epsilon^{-2})$, these network widths are very large compared to results for classification problems.  
As stated above, the reason of such improvement on the network width for classification problems is that the property of logistic loss can lead to a more reasonable assumption (i.e., separability assumption by a neural tangent model) than the positivity assumption of the neural tangent kernel.
Moreover, we would like to emphasize that proof techniques are quite different for the squared loss and the logistic loss functions because the latter function lacks the strong convexity. Thus, we cannot utilize the linear convergence property for the logistic loss and parameters will diverge, which also causes the difficulty of showing better generalization ability without a fine-grained analysis.

\section{Preliminary}\label{sec:preliminary}
Here, we describe the problem setting for the binary logistic regression and discuss the functional gradients to provide a clear theoretical view of the gradient methods for two-layer neural networks.

\subsection{Problem Setting}
Let $\featuresp = \realsp^\fdim$ and $\labelsp$ be a feature space and the set of binary labels $\{-1,1\}$, respectively.
We denote by $\tpr$ a true probability measure on $\featuresp \times \labelsp$ and by $\tpr_{\ndata}$ an empirical probability measure, deduced from observations $(x_i,y_i)_{i=1}^\ndata$ independently drawn from $\tpr$,
i.e., $d\tpr_{\ndata}(X,Y)=\sum_{i=1}^\ndata \delta_{(x_i,y_i)}(X,Y)dXdY/\ndata$, where $\delta$ is the Dirac delta function.
The marginal distributions of $\tpr$ and $\tpr_\ndata$ on $X$ are denoted by $\tpr^X$ and $\tpr_{\ndata}^X$, respectively.
For $\zeta \in \realsp$ and $y\in \labelsp$, let $l(\zeta,y)$ be the logistic loss: $\log(1+\exp(-y\zeta))$.
Then, the objective function to be minimized is formalized as follows:
\begin{equation*}
\risk(\Theta) \defeq \expec_{(X,Y) \sim \tpr_\ndata}[ l(f_\Theta(X),Y)] = \frac{1}{n}\sum_{i=1}^\ndata l( f_\Theta(x_i),y_i),
\end{equation*}
where $f_\Theta : \featuresp \rightarrow \realsp$ is a two-layer neural network equipped with parameters $\Theta=(\theta_r)_{r=1}^m$.
When we consider a function $f_\Theta$ as a variable of the objective function, we denote $\risk(f_\Theta) \defeq \risk(\Theta)$.

The two-layer neural network treated in this study is formalized as follows.
For parameters $\Theta = (\theta_r)_{r=1}^m$ ($\theta_r \in \realsp^\fdim$) and fixed constants $(a_r)_{r=1}^m \in \{-1,1\}^m$: 
\begin{equation}\label{eq:2nn}
f_\Theta(x) = \frac{1}{m^\beta} \sum_{r=1}^m a_r \sigma(\theta_r^\top x),
\end{equation}
where $m$ is the number of hidden units, $\beta$ is an order of the scaling factor, and $\sigma: \realsp \rightarrow \realsp$ is a smooth activation function such as sigmoid, tanh, swish \citep{ramachandran2017}), and other smooth approximations of ReLU.
In the training procedure, the parameters $\Theta=(\theta_r)_{r=1}^m$ of the input layer are optimized.
This setting is the same as those in \cite{du2018gradient,arora2019fine,zhang2019fast,wu2019global}, except for the types of activation functions, scaling factor, and loss function.

\subsection{Functional Gradient} 
We denote by $L_2(\tpr_{\ndata}^X)$ the function space from $\featuresp$ to $\realsp$, equipped with the inner product $\pd<\cdot,\cdot>_{L_2(\tpr_\ndata^X)}$:
\begin{equation*}
  \pd<\phi,\psi>_{L_2(\tpr_\ndata^X)}
    \defeq \expec_{X \sim \tpr_\ndata^X}\left[ \phi(X)\psi(X) \right], \hspace{3mm} \forall \phi, \forall \psi \in L_2(\tpr_\ndata^X).
\end{equation*}
Following the tradition in the literature of boosting and kernel methods, 
we call $L_2(\tpr_\ndata^X)$ the function space, although this space is actually an $n$-dimensional space because the cardinality 
of the support of $\tpr_\ndata^X$ is $n$.
The key notion to explain the behavior of the gradient descent is the functional gradient in this function space $L_2(\tpr_\ndata^X)$.
We define the functional gradient at a predictor $f: \featuresp \rightarrow \realsp$ as, 
\begin{align*}
    \nabla_f \risk(f)(x) \defeq  
 \begin{cases}
   \left. \partial_{\zeta}l(\zeta,y_i)\right|_{\zeta=f(x_i)} & (x=x_i), \\
    0 & (\mathrm{otherwise}). 
  \end{cases} 
\end{align*}
This is simply a Fr\'{e}chet differential (functional gradient) in $L_2(\tpr_{\ndata}^X)$.
That is, it follows that
\[ \risk(f+\phi) = \risk(f) + \pd< \nabla_f \risk(f),\phi>_{L_2(\tpr_{\ndata}^X)}
    + o(\|\phi\|_{L_2(\tpr_{\ndata}^X)}), \hspace{3mm}\forall f,\forall \phi \in L_2(\tpr_{\ndata}^X). \]
Therefore, the functional gradient descent using $\nabla_f \risk(f)$ directly optimizes $\risk$ in a function space $L_2(\tpr_{\ndata}^X)$ and converges to a global minimum because the objective function $\risk$ is convex with respect to a function $f$.
However, because $\nabla_f \risk(f)$ contains no information regarding the unseen data, this method is meaningless in terms of generalization.
Thus, some smoothing techniques are required to guarantee the generalization.
The gradient descent method for two-layer neural networks can be recognized as a type of kernel-smoothed functional gradient using the {\it neural tangent kernel} \citep{jacot2018neural}, and this perspective is significantly useful in showing the global convergence because it characterizes the behavior of the vanilla gradient descent in a function space.

\section{Brief Review of Functional Gradient Methods}\label{sec:overview}
Functional gradient methods have been mainly studied for gradient boosting \citep{mason1999boosting,friedman2001greedy} and 
kernel methods \citep{kivinen2004online,smale2006online,ying2006online,raskutti2014early,wei2017early} in the machine learning community, 
but more recently, it has been found to be useful in explaining the behavior of gradient descent for neural networks \citep{jacot2018neural,chizat2018note,du2018gradient,allen2018convergence,arora2019fine}.
Our analysis is also heavily based on the functional gradient perspective of gradient descent.
Thus, we briefly review the functional gradient methods.

In gradient boosting, $\nabla_f \risk(f)$ is approximated by finding a similar function in weak learners $\mathcal{G}$:
\begin{equation} \label{eq:gradient_boosting_step}
\phi_f \in \argmax_{\phi \in \mathcal{G}} \pd< \nabla_f \risk(f), \phi >_{L_2(\tpr_{\ndata}^X)}
\end{equation} 
and the gradient method in a function space is performed using a descent direction $-\phi_f$.
This approximation is a type of smoothing of functional gradients.
In kernel methods, this smoothing procedure is realized by using the {\it kernel smoothing} technique:
\begin{align} 
T_k \nabla_f \risk(f) 
\defeq \expec_{\tpr_{\ndata}^X}[ \nabla_f \risk(f)(X) k(X,\cdot)] 
 = \frac{1}{\ndata}\sum_{i=1}^{\ndata}\nabla_f \risk(f)(x_i) k(x_i,\cdot), \label{eq:kernel_smoothing}
\end{align}
where $k$ is a kernel function.
It should be noted that this kernel smoothing (\ref{eq:kernel_smoothing}) is a special case of gradient boosting (\ref{eq:gradient_boosting_step}) 
because of the following equation:
\begin{equation*}
\frac{T_k \nabla_f \risk(f)}{\| T_k \nabla_f \risk(f) \|_{\hilsp_k}} \in 
\argmax_{ \|\phi\|_{\hilsp_k} \leq 1 } \pd<\nabla_f \risk(f), \phi>_{L_2(\tpr_{\ndata}^X)}, \label{eq:characterize_kernel_smoothing}
\end{equation*}
where $(\hilsp_k, \pd<,>_{\hilsp_k})$ is the reproducing kernel Hilbert space associated with a kernel $k$.
When this kernel smoothing well approximates a functional gradient $\nabla_f\risk(f)$ and satisfies
\begin{equation}
\pd<\nabla_f \risk(f), T_k \nabla_f \risk(f)>_{L_2(\tpr_{\ndata}^X)} \geq \exists \mu \| \nabla_f \risk(f) \|_{L_2(\tpr_\ndata^X)}^2, \label{positivity_condition}
\end{equation}
the kernel-smoothed functional gradient descent $f^+ \leftarrow f - \eta T_k\nabla_f \risk(f)$ performs like the pure functional gradient descent, 
leading to the global convergence property because it tends to a stationary point in a function space, which is simply a global minimum.

Recently, several studies \citep{jacot2018neural,chizat2018note,du2018gradient,allen2018convergence,arora2019fine} 
implicitly or explicitly pointed out that the gradient descent for neural networks is essentially recognized as an approximation to the kernel-smoothed 
functional gradient method using a {\it neural tangent kernel} (NTK) \citep{jacot2018neural}: 
\begin{equation}
k_{NTK}(x,x') \defeq \expec_{\theta^{(0)} \sim \mu_0} [\partial_{\theta}\sigma(\theta^{(0)\top} x)^\top \partial_{\theta}\sigma(\theta^{(0)\top} x')], \label{eq:ntk}
\end{equation}
where $\mu_0$ is a distribution to initialize the parameters of the input layer in this setting.
In most proofs using NTK, the global convergence property has been demonstrated by showing the condition (\ref{positivity_condition}) from the positivity of the Gram-matrix $H^\infty \defeq (k_{NTK}(x_i,x_j))_{i,j=1}^\ndata$ and the similarity between the gradient descent and the kernel-smoothed functional gradient with NTK when $m \rightarrow \infty$.
This is a reason why very high over-parameterization is generally required in related studies. 

In this study, we found that the positivity of the Gram-matrix of NTK is not required on binary classification problems and a separability assumption, which is a weaker condition than the positivity, is enough for global convergence. 
Consequently, we can give global convergence and generalization guarantees to a gradient method for less over-parameterized two-layer neural networks.

\section{Global Convergence Analysis of the Gradient Method}
The following is an update rule of gradient descent with respect to the input parameters $\Theta=(\theta_r)_{r=1}^m$:
\begin{equation}\label{eq:gd}
\Theta^{(t+1)} \leftarrow \Theta^{(t)} - \eta \nabla_{\Theta} \risk(\Theta^{(t)}),    
\end{equation} 
where $\nabla_{\Theta} \risk(\Theta^{(t)}) = (\partial_{\theta_r} \risk(\Theta^{(t)}))_{r=1}^m$ and $\eta > 0$ is a learning rate.
We here make the assumption:
\begin{assumption}\label{assumption:convergence_analysis}
\item{{\bf(A1)}} Assume that $\mathrm{supp}(\tpr^X)\subset \{x\in \featuresp \mid\ \|x\|_2 \leq 1\}$. 
Let $\sigma$ be a $\mathcal{C}^2$-class function and there exist $K_1, K_2 > 0$ 
s.t. $\| \sigma'\|_\infty \leq K_1$ and $\|\sigma''\|_\infty \leq K_2$.
\item{{\bf(A2)}} A distribution $\mu_0$ on $\realsp^d$ used for the initialization of $\theta_r$ has a sub-Gaussian tail bound: $\exists A, \exists b > 0$ 
such that 
$\prob_{\theta^{(0)} \sim \mu_0}[\|\theta^{(0)}\|_2 \geq t] \leq A\exp(-bt^2)$.
\item{{\bf(A3)}}
Assume that the number of hidden units $m \in \posintegers$ is an even number.
Constant parameters $(a_r)_{r=1}^m$ and parameters $\Theta^{(0)}=(\theta_r^{(0)})_{r=1}^m$ are initialized symmetrically: 
$a_r=1$ for $r \in \{1,\ldots,\frac{m}{2}\}$, $a_r=-1$ for $r \in \{\frac{m}{2}+1,\ldots, m\}$, 
and $\theta_r^{(0)} = \theta_{r+\frac{m}{2}}^{(0)}$ for $r \in \{1,\ldots,\frac{m}{2}\}$, 
where the initial parameters $(\theta_r^{(0)})_{r=1}^{\frac{m}{2}}$ are independently drawn from a distribution $\mu_0$.
\item{{\bf(A4)}}
Assume that there exist $\rho > 0$ and a measurable function $v: \realsp^\fdim \rightarrow \{ w \in \realsp^\fdim \mid \|w\|_2\leq 1\}$ such that 
the following inequality holds: for $\forall (x,y) \in \mathrm{supp}(\tpr) \subset \featuresp \times \labelsp$, 
\begin{equation}
y \pd< \partial_\theta \sigma(\theta^{(0)\top} x), v(\theta^{(0)})>_{L_2(\mu_0)} = y\expec_{\theta^{(0)}\sim \mu_0}[\partial_\theta \sigma(\theta^{(0)\top} x)^\top v(\theta^{(0)}) ] \geq \rho. \label{eq:tangent_margin} 
\end{equation}
\end{assumption}
\paragraph{Remark.} 
Clearly, many activation functions (sigmoid, tanh, and smooth approximations of ReLU such as swish) satisfy the assumption {\bf(A1)}.
Typical distributions, including the Gaussian distribution, satisfy {\bf(A2)}.
The purpose of the symmetrized initialization {\bf(A3)} is to bound the initial value of the loss function $\risk(\Theta^{(0)})$ uniformly over the number of hidden units $m$.
This initialization leads to $f_{\Theta^{(0)}}(x)=0$, resulting in $\risk(\Theta^{(0)}) = \log(2)$.
Assumption {\bf(A4)} implies the separability of a dataset using the {\it neural tangent} model. We next discuss the validity of this assumption.

\subsection{Separability Assumption (A4) by the Neural Tangent}
The explicit feature representation: $x \rightarrow \partial_\theta \sigma(\theta^{(0)\top} x)$, of NTK (\ref{eq:ntk}) is called the {\it neural tangent}, which is a non-linear feature extraction from $\featuresp$ to an infinite-dimensional space.
That is, assumption {\bf(A4)} ensures the separability of the transformed data $(\partial_\theta \sigma(\theta_r^{(0)\top}x),y)$ through the neural tangent for $(x,y) \in \mathrm{supp}(\tpr)$ with a margin $\rho$ in an infinite-dimensional space by the weight: $v(\theta^{(0)})d\mu_0$.
We remark that this assumption is somewhat weaker than the positivity assumption on the Gram-matrix of NTK and is satisfied in many cases by the universal approximation ability of the neural tangent models.
In addition, we remark that the separability of the training dataset instead of $\mathrm{supp}(\tpr)$ is enough to guarantee global convergence only for empirical risk minimization. 

\paragraph{Theoretical comparison of kernel assumptions.} 
In previous studies \citep{du2018gradient,arora2019fine,zhang2019fast,wu2019global} the positivity of the Gram-matrix 
$H^\infty = (k_{NTK}(x_i,x_j))_{i,j=1}^\ndata$ was required to ensure the condition (\ref{positivity_condition}).
Here, we remark that the assumption {\bf(A4)} is weaker than this positivity condition in the following sense.
\begin{proposition} \label{prop:comparison_of_assumptions}
(i) Assume $H^\infty \succeq \lambda_0 I_\ndata$ and $\|\sigma'\|_\infty \leq K_1$, then there exists a measurable map $v: \realsp^\fdim \rightarrow \{ w \in \realsp^\fdim \mid \|w\|_2\leq 1\}$ such that $\forall i \in \{1,\ldots,\ndata\}$,
\begin{equation*} 
y_i \pd< \partial_\theta \sigma(\theta^{(0)\top} x_i), v(\theta^{(0)})>_{L_2(\mu_0)} \geq \frac{\lambda_0}{\ndata K_1}. 
\end{equation*}
(ii) Suppose assumption {\bf(A4)} holds, 
then $\sum_{i,j=1}^\ndata \xi_i H^\infty \xi_j \geq \rho^2 \|\xi\|_2^2$,  $(\forall \xi \in \{ (\alpha_i y_i)_{i=1}^\ndata \mid \alpha_i \geq 0 \})$.
\end{proposition}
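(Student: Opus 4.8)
The plan is to exploit the fact that, writing $\phi_i := \partial_\theta\sigma(\theta^{(0)\top}x_i)$ regarded as an ($\realsp^\fdim$-valued) element of $L_2(\mu_0)$, the Gram matrix is exactly $H^\infty_{ij} = \pd<\phi_i,\phi_j>_{L_2(\mu_0)}$ by definition (\ref{eq:ntk}). Thus the quantity appearing in \textbf{(A4)}, namely $y_i\pd<\phi_i, v>_{L_2(\mu_0)}$, is the signed inner product between the feature $\phi_i$ and a separating direction $v$, and both directions of the Proposition amount to converting between a positivity statement about this Gram matrix and the existence of a margin-achieving $v$. Throughout I would use that \textbf{(A1)} gives the pointwise bound $\|\phi_i(\theta^{(0)})\|_2 = |\sigma'(\theta^{(0)\top}x_i)|\,\|x_i\|_2 \le K_1$.

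For part (i), I would construct the separating direction explicitly as a normalized linear combination of the features. Set $\tilde\alpha := (H^\infty)^{-1}y$, which is well defined since $H^\infty \succeq \lambda_0 I_\ndata \succ 0$, and let $v := c\sum_{j=1}^\ndata \tilde\alpha_j\phi_j$ for a constant $c>0$ to be fixed. Then $\pd<\phi_i,v>_{L_2(\mu_0)} = c(H^\infty\tilde\alpha)_i = cy_i$, so the signed margin $y_i\pd<\phi_i,v>_{L_2(\mu_0)} = c$ is the same for every $i$. The only constraint to satisfy is the pointwise one $\|v(\theta^{(0)})\|_2 \le 1$; using the triangle inequality and $\|\phi_j(\theta^{(0)})\|_2 \le K_1$ gives $\|v(\theta^{(0)})\|_2 \le cK_1\|\tilde\alpha\|_1$, so taking $c = (K_1\|\tilde\alpha\|_1)^{-1}$ meets the constraint and yields margin $c = (K_1\|\tilde\alpha\|_1)^{-1}$. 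It then remains to bound $\|\tilde\alpha\|_1 \le \sqrt\ndata\,\|\tilde\alpha\|_2 \le \sqrt\ndata\,\lambda_0^{-1}\|y\|_2 = \ndata/\lambda_0$, where I use that the operator norm of $(H^\infty)^{-1}$ is at most $\lambda_0^{-1}$ and $\|y\|_2 = \sqrt\ndata$; this gives margin $\ge \lambda_0/(\ndata K_1)$, as claimed.

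For part (ii), fix $\xi = (\alpha_iy_i)_{i=1}^\ndata$ with $\alpha_i\ge 0$ and expand the quadratic form via the Gram structure:
\[
\xi^\top H^\infty\xi = \sum_{i,j}\xi_i\xi_j\pd<\phi_i,\phi_j>_{L_2(\mu_0)} = \Big\|\textstyle\sum_i\xi_i\phi_i\Big\|_{L_2(\mu_0)}^2.
\]
Taking the $v$ supplied by \textbf{(A4)} (which satisfies $\|v\|_{L_2(\mu_0)}\le 1$ since $\|v(\theta^{(0)})\|_2\le 1$ pointwise), the bound $\|v\|_{L_2(\mu_0)}\le 1$ and Cauchy--Schwarz in $L_2(\mu_0)$ give $\|\sum_i\xi_i\phi_i\|_{L_2(\mu_0)} \ge \pd<\sum_i\xi_i\phi_i, v>_{L_2(\mu_0)} = \sum_i\alpha_i\big(y_i\pd<\phi_i,v>_{L_2(\mu_0)}\big) \ge \rho\sum_i\alpha_i = \rho\|\alpha\|_1$, where the nonnegativity $\alpha_i\ge 0$ is what permits applying the margin bound term by term. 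Squaring and using $\|\alpha\|_1\ge\|\alpha\|_2 = \|\xi\|_2$ (the latter because $y_i\in\{-1,1\}$) yields $\xi^\top H^\infty\xi \ge \rho^2\|\xi\|_2^2$.

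I expect the main obstacle to be the normalization in part (i): the constraint on $v$ in \textbf{(A4)} is a pointwise Euclidean-ball constraint in $\theta^{(0)}$ rather than an $L_2(\mu_0)$-norm constraint, so one cannot simply divide by the $L_2$ norm of $\sum_j\tilde\alpha_j\phi_j$; instead the pointwise norm must be controlled through the $\ell_1$ norm of the coefficients, and it is precisely the step $\|\tilde\alpha\|_1\le\sqrt\ndata\,\|\tilde\alpha\|_2$ that produces the (essentially unavoidable) $1/\ndata$ factor in the margin. The remaining delicate point is conceptual rather than technical: in part (ii) the restriction $\alpha_i\ge 0$, i.e.\ that $\xi$ lies in the cone spanned by the label vectors, is exactly what makes the term-by-term margin bound valid, and the inequality genuinely fails for general $\xi$ --- consistent with the paper's remark that separability only forces positivity of $H^\infty$ on this cone.
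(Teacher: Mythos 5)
Your proposal is correct and follows essentially the same route as the paper: in (i) the same construction $v \propto \sum_j ((H^\infty)^{-1}y)_j\,\partial_\theta\sigma(\theta^{(0)\top}x_j)$ with the same $\sqrt{\ndata}\cdot\sqrt{\ndata}$ normalization losses, and in (ii) the same expansion of $\xi^\top H^\infty \xi$ as a squared $L_2(\mu_0)$ norm followed by comparison against $v$, the term-by-term margin bound on the cone, and $\|\cdot\|_1 \ge \|\cdot\|_2$. The only cosmetic difference is that in (ii) you invoke Cauchy--Schwarz directly in $L_2(\mu_0)$ where the paper applies pointwise Cauchy--Schwarz and then Jensen's inequality, which is the same estimate.
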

As seen in Proposition \ref{prop:comparison_of_assumptions}-(i), the positivity $H^\infty \succeq \lambda_0 I_\ndata$ leads to weak separability with a margin of $O(\lambda_0/\ndata)$ on the training dataset.
Conversely, from Proposition \ref{prop:comparison_of_assumptions}-(ii), the separability with a margin of $\rho$ leads to the positivity $\rho^2$ of $H^\infty$ only on a cone spanned by the labels: $\{ (\alpha_i y_i)_{i=1}^\ndata \mid \alpha_i \geq 0 \}$.
Because this cone is very restrictive, this limited positivity is much weaker than the positivity on the whole space.
However, we found that this limited positivity is sufficient to ensure the global convergence of the gradient descent 
for binary classification problems with logistic loss.
Indeed, from Proposition \ref{prop:smoothness_org}, the positivity of $H^\infty$ is required only along the functional gradients: $\nabla_f \risk(f_\Theta)(x_i) = \partial_\zeta l(f_\Theta(x_i),y_i)$, and these functional gradients are always contained in this limited space, unlike the squared loss function.
This is a reason why the positivity of NTK is not required for the binary classification problem with logistic loss.
Thus, a much better convergence and generalization ability can be shown for logistic loss than the previous results that relied on the positivity of $H^\infty$ because the positivity of NTK on the whole space is redundant and a separability condition provides a better positivity only on a required small space.
Concretely, from Proposition \ref{prop:comparison_of_assumptions}-(i), we can immediately check a deteriorated convergence result depending on the positivity of $H^\infty$ by replacing $\rho$ in Theorem \ref{theorem:global_convergence} with $O(\lambda_0/\ndata)$, producing $m=O(\ndata \lambda_0^{-1}\epsilon^{-1})$ when $\beta=0$.

{\bf Remark.} For regression problem, \cite{allen2018convergence,allen2018convergence_b,zou2018stochastic,oymak2019towards,zou2019improved} make a different separation where examples are away from each other: $\|x_i - x_j\|_2 \geq \rho$.\cite{zou2019improved} shows that this assumption is essentially same as the positivity of NTK. Thus, it also completely differs from (A4) as shown in Proposition \ref{prop:comparison_of_assumptions}

\paragraph{Universal approximation property of neural tangent models.}
We consider the case where all feature vectors have a common bias term: $x=(x^0,\ldots,x^{d-1},s) \in \featuresp$ 
($s>0$ is a sufficiently small constant for a bias term).
In this case, we can easily confirm that neural tangent models include typical two-layer infinite-width neural networks with activation $\sigma'$: $\expec[ w(\theta^{(0)})\sigma'(\theta^{(0)^\top}x)]$ by setting $v(\theta^{(0)}) = (0,\ldots,0,w(\theta^{(0)}))$, where $w$ is a real-valued function.
Thus, Assumption (A.4) with a certainly positive constant $\rho$ is satisfied as long as a data distribution is separable by an infinite-width two-layer network with mild weights $w(\theta)$. 
Moreover, we note that these networks have the universal approximation property \citep{hornik1991approximation, sonoda2017neural}, so that there are a lot of examples such that the assumption {\bf(A4)} is satisfied.

\subsection{Main Results}\label{subsec:main_results}
We here define the $L_1$-norm of the functional gradient, which measures the convergence.
\begin{equation*}
    \| \nabla_f \risk(f_\Theta) \|_{L_{1}(\tpr_\ndata^X)} \defeq \frac{1}{\ndata}\sum_{i=1}^\ndata |\partial_\zeta l (f_\Theta(x_i),y_i)| = \frac{1}{2\ndata} \sum_{i=1}^\ndata | y_i - 2 p_{\Theta}(Y=1\mid x_i) + 1|. 
\end{equation*}
Here, $\zeta$ is the first variable of $l$ and $p_{\Theta}(Y=1\mid x) = \frac{1}{1 + \exp(- f_{\Theta}(x))}$ is a conditional probability on $Y=1$, defined by $f_{\Theta}$.
Because $\| \nabla_f \risk(f_\Theta) \|_{L_{1}(\tpr_\ndata^X)}$ is simply a gap between the labels and conditional label probabilities of the model, 
this norm is a reasonable measure for the binary classification problems. 
The following is the first main result to ensure global convergence.
\begin{theorem}[Global Convergence]\label{theorem:global_convergence}
Suppose Assumption \ref{assumption:convergence_analysis} holds.
Set $K = K_1^4 + 2K_1^2K_2 + K_1^4K_2^2$.
For $\forall \beta \in [0,1)$, $\forall \delta \in (0,1)$ and $\forall m \in \posintegers$ such that $m \geq \frac{16K_1^2}{\rho^2}\log \frac{2\ndata}{\delta}$, 
consider gradient descent (\ref{eq:gd}) with 
a learning rate of $0< \eta \leq \min\left\{ m^\beta, \frac{4m^{2\beta-1}}{K_1^2 + K_2} \right\}$ 
and the number of iterations $T \leq \floor[\frac{m \rho^2}{32\eta K_2^2 \log(2)}]$.
Then, it follows that with probability at least $1-\delta$ over the random initialization,
\begin{equation}
\frac{1}{T}\sum_{t=0}^{T-1}\| \nabla_f \risk(f_{\Theta^{(t)}}) \|_{L_{1}(\tpr_\ndata^X)}^2 
\leq \frac{16\log(2)}{\rho^2 T} \left( \frac{m^{2\beta-1}}{\eta} + K \right).
\end{equation}
\end{theorem}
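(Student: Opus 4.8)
The plan is to combine a smoothness (descent) estimate in the parameter space with a transfer of the separability margin (A4) into a lower bound on the Euclidean gradient norm, and then to telescope. Throughout write $G_t := \| \nabla_f \risk(f_{\Theta^{(t)}}) \|_{L_1(\tpr_\ndata^X)}$ for the quantity to be controlled. Since $\sigma \in \mathcal{C}^2$ with $\|\sigma'\|_\infty \le K_1$, $\|\sigma''\|_\infty \le K_2$ and $\|x\|_2 \le 1$, the map $\Theta \mapsto \risk(\Theta)$ has a globally bounded Hessian; a direct computation (the content of Proposition~\ref{prop:smoothness_org}) gives a smoothness constant of order $(K_1^2 + K_2)\,m^{1-2\beta}$, which is why the two learning-rate caps $\eta \le \min\{m^\beta,\, 4m^{2\beta-1}/(K_1^2+K_2)\}$ appear — the first keeps individual steps bounded, the second yields the descent lemma $\risk(\Theta^{(t+1)}) \le \risk(\Theta^{(t)}) - \frac{\eta}{2}\|\nabla_\Theta \risk(\Theta^{(t)})\|_2^2$. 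First I would telescope this, using $\risk(\Theta^{(0)}) = \log 2$ (from the symmetric initialization (A3)) and $\risk \ge 0$, to obtain the unconditional energy bound $\sum_{t=0}^{T-1}\|\nabla_\Theta \risk(\Theta^{(t)})\|_2^2 \le 2\log(2)/\eta$.

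Second, I would convert (A4) into a lower bound $\|\nabla_\Theta \risk(\Theta^{(t)})\|_2^2 \ge \bar\rho^2 m^{1-2\beta} G_t^2$ for a preserved margin $\bar\rho$. The reference direction is $V = (V_r)_{r=1}^m$ with $V_r \propto a_r v(\theta_r^{(0)})$, normalised so that $\|V\|_2 = O(m^{-1/2})$. By the chain rule $\partial_{\theta_r}\risk$ carries the factor $\frac{a_r}{m^\beta}\sigma'(\theta_r^\top x_i) x_i = \frac{a_r}{m^\beta}\partial_\theta \sigma(\theta_r^\top x_i)$, so $\langle \nabla_\Theta \risk(\Theta^{(t)}), V\rangle$ equals $m^{-\beta}$ times an average over $i$ of $\partial_\zeta l(f_{\Theta^{(t)}}(x_i),y_i)$ multiplied by $\frac1m\sum_r \partial_\theta\sigma(\theta_r^{(t)\top}x_i)^\top v(\theta_r^{(0)})$. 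Crucially, the functional gradient has sign $-y_i$ (as $\partial_\zeta l(\zeta,y) = -y/(1+e^{y\zeta})$), so once this inner average has the correct sign and magnitude $\ge \bar\rho$ the two signs cooperate and the correlation is $\le -\bar\rho m^{-\beta} G_t$; dividing by $\|V\|_2$ gives the claimed bound.

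Third, I must show the inner average stays above a fixed multiple of $\rho$ for every $t \le T$. At initialisation the symmetric pairing (A3) turns $\frac1m\sum_r$ into an average of $m/2$ i.i.d.\ terms bounded by $K_1$ whose $y_i$-signed mean is $\ge \rho$ by (A4); Hoeffding's inequality with a union bound over the $\ndata$ data points gives margin $\ge \rho/2$ with probability $\ge 1-\delta$, which is exactly where $m \ge \frac{16K_1^2}{\rho^2}\log\frac{2\ndata}{\delta}$ enters. To pass from $t=0$ to general $t$ I would Taylor-expand $\sigma'(\theta_r^{(t)\top}x_i)$ around $\theta_r^{(0)\top}x_i$; the remainder is controlled by $K_2$ times the average drift $\frac1m\sum_r\|\theta_r^{(t)}-\theta_r^{(0)}\|$, which is in turn bounded via the per-coordinate estimate $\|\partial_{\theta_r}\risk\| \le K_1 m^{-\beta} G_s$ and the energy bound of the first step. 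Combining the three steps and dividing by $T$ yields the stated inequality; the explicit constant $16$ and the additive $K = K_1^4 + 2K_1^2K_2 + K_1^4K_2^2$ are precisely the bookkeeping of this second-order drift remainder.

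I expect the main obstacle to be this third step, because it is genuinely coupled: the margin at time $t$ degrades with the accumulated drift $\sum_{s<t} G_s$, while the gradient lower bound used to control the $G_s$ themselves needs the margin to stay positive. The resolution is the iteration cap $T \le \floor[m\rho^2/(32\eta K_2^2\log 2)]$, which keeps the accumulated drift small enough that the margin never falls below a fixed fraction of $\rho$. Verifying this cleanly — for instance by a bootstrap/induction on $t$, or by estimating $\sum_{s<t}G_s \le \sqrt{T}\,(\sum_s G_s^2)^{1/2}$ and feeding back the energy bound — is the delicate part, and it is what ultimately pins down the constants.
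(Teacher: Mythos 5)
Your skeleton---the smoothness descent lemma with the unconditional energy bound $\sum_{t<T}\|\nabla_\Theta \risk(\Theta^{(t)})\|_2^2 \leq 2\log(2)/\eta$, Hoeffding plus a union bound over the $\ndata$ data points at initialization, the $K_2$-Lipschitz perturbation of the margin, and the transfer of the surviving margin into a gradient lower bound via correlation with the fixed direction $V_r \propto a_r v(\theta_r^{(0)})$---is exactly the paper's (Propositions \ref{prop:smoothness_org}, \ref{prop:kernel_positivity_org}, \ref{prop:gd_iterations}). In fact your Step 2 \emph{is} Proposition \ref{prop:kernel_positivity_org} in disguise, because
\begin{equation*}
\pd<\nabla_f \risk(f_{\Theta}), T_{k_{\Theta}}\nabla_f \risk(f_\Theta)>_{L_2(\tpr_\ndata^X)}
= \frac{1}{\ndata^2}\left\| \sum_{i=1}^\ndata \nabla_f \risk(f_\Theta)(x_i)\, \nabla_\Theta f_\Theta(x_i) \right\|_2^2
= \| \nabla_\Theta \risk(\Theta) \|_2^2 ,
\end{equation*}
so the kernel-smoothed inner product and your Euclidean gradient norm coincide identically.

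The gap is in your third step and closing paragraph: the problem is \emph{not} ``genuinely coupled,'' and no bootstrap or induction on $t$ is needed. The decoupling you are missing is that the energy bound comes from smoothness alone and therefore holds unconditionally, whether or not the margin has survived. Consequently the drift is controlled directly by
$\|\Theta^{(t)}-\Theta^{(0)}\|_2 \leq \eta\sum_{s<t}\|\nabla_\Theta\risk(\Theta^{(s)})\|_2 \leq \sqrt{2\eta t \log 2}$,
with no reference to the margin or to the $G_s$; then $\|\cdot\|_{2,1}\leq \sqrt{m}\,\|\cdot\|_2$ and the iteration cap $T \leq \floor[m\rho^2/(32\eta K_2^2\log 2)]$ give margin deviation at most $K_2\cdot\frac{1}{m}\|\Theta^{(t)}-\Theta^{(0)}\|_{2,1} \leq \rho/4$ for \emph{every} $t\leq T$ in a single pass. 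If you instead route the drift through the per-coordinate estimate $\|\partial_{\theta_r}\risk\|_2 \leq K_1 m^{-\beta} G_s$ and a bootstrap on $\sum_s G_s^2$ (which itself requires the margin), the induction does close, but the margin-based conversion costs you a factor: you end up needing $T \lesssim m\rho^4/(\eta K_1^2 K_2^2 \log 2)$, which is smaller than the stated cap by $\rho^2/(16K_1^2)\leq 1/16$, so you would prove the theorem only on a strictly shorter horizon. Two smaller corrections: the constant $K$ is not ``the bookkeeping of the drift remainder'' of your Step 3---it is the second-order error in approximating the one-step loss decrease by $\eta\|\nabla_\Theta\risk\|_2^2$ (Proposition \ref{prop:smoothness_org}-(i)); and your plain descent-lemma assembly yields $\frac{1}{T}\sum_t G_t^2 \leq 32\log(2)\, m^{2\beta-1}/(\eta\rho^2 T)$, which has the right rate but is not literally the stated inequality with the constant $16$ and the additive $K$; recovering that exact form requires splitting the one-step decrease into the kernel term and the second-order error as the paper does.
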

We here derive a corollary, which states that an arbitrary small empirical classification error can be achievable by appropriately setting $\eta, m,$ and $T$ as follows.
From Markov's inequality,
\begin{align*}
\mathbb{P}_{(X,Y)\sim \tpr_\ndata}[Yf_{\Theta^{(t)}}(X)\leq 0]
\leq 2 \| \nabla_f \risk(f_\Theta) \|_{L_1(\tpr_\ndata^X)}, 
\end{align*}
where we used the following relationship:
\[ 0.5 \left| y_i - 2p_{\Theta}(Y=1|x_i) + 1 \right| \geq (1+\exp(\gamma))^{-1} 
\Longleftrightarrow y_i f_\Theta(x_i) \leq \gamma. \]
Thus, a convergence rate of $\| \nabla_f \risk(f_{\Theta^{(t)}}) \|_{L_{1}(\tpr_\ndata^X)}^2$ leads to a rate of empirical classification error.
\begin{corollary} \label{corollary:convergence_rate}
Suppose the same assumptions as in Theorem \ref{theorem:global_convergence} hold.
If for $\forall \epsilon, \delta > 0$, the hyperparameters satisfy
\[ \beta \in [0,1),\ 
m=\Omega(\rho^{\frac{-1}{1-\beta}}\epsilon^{\frac{-1}{1-\beta}}),\ 
T=\Omega(\rho^{-2}\epsilon^{-2}),\ 
\eta=\Theta(m^{2\beta-1}), \]
then with probability at least $1-\delta$, gradient descent (\ref{eq:gd}) with a learning rate of $\eta$ finds
a parameter $\Theta^{(t)}$ satisfying $\mathbb{P}_{(X,Y)\sim \tpr_\ndata}[Yf_{\Theta^{(t)}}(X)\leq 0] \leq \epsilon$ within $T$-iterations.
\end{corollary}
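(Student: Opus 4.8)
The plan is to read the corollary off Theorem~\ref{theorem:global_convergence} together with the Markov-type inequality $\prob_{(X,Y)\sim\tpr_\ndata}[Yf_{\Theta^{(t)}}(X)\leq 0] \leq 2\|\nabla_f\risk(f_{\Theta^{(t)}})\|_{L_1(\tpr_\ndata^X)}$ recorded just above it. First I would substitute the prescribed schedule $\eta=\Theta(m^{2\beta-1})$ into the bound of Theorem~\ref{theorem:global_convergence}. Since then $m^{2\beta-1}/\eta=\Theta(1)$, the factor $m^{2\beta-1}/\eta+K$ collapses to a constant $C=C(K_1,K_2)$, and the averaged guarantee reduces to $\frac{1}{T}\sum_{t=0}^{T-1}\|\nabla_f\risk(f_{\Theta^{(t)}})\|_{L_1(\tpr_\ndata^X)}^2 \leq \frac{16 C\log(2)}{\rho^2 T}$. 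Because the minimum of a finite sequence never exceeds its average, there is a best iterate $t^\star\in\{0,\dots,T-1\}$ with $\|\nabla_f\risk(f_{\Theta^{(t^\star)}})\|_{L_1(\tpr_\ndata^X)}^2 \leq \frac{16 C\log(2)}{\rho^2 T}$.

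Next I would take square roots and feed this into the Markov inequality, obtaining $\prob_{(X,Y)\sim\tpr_\ndata}[Yf_{\Theta^{(t^\star)}}(X)\leq 0] \leq \frac{8\sqrt{C\log(2)}}{\rho\sqrt{T}} = O\!\big(1/(\rho\sqrt{T})\big)$. Demanding that this be at most $\epsilon$ yields exactly $T=\Omega(\rho^{-2}\epsilon^{-2})$, which is the step that pins down the advertised iteration count. It then remains to verify that $(m,\eta,T)$ is admissible for Theorem~\ref{theorem:global_convergence}. The learning-rate condition holds once the constant in $\eta=\Theta(m^{2\beta-1})$ is taken no larger than $4/(K_1^2+K_2)$, since then $\eta\leq 4m^{2\beta-1}/(K_1^2+K_2)$, and $m^{2\beta-1}\leq m^{\beta}$ for $m\geq 1$ and $\beta<1$ gives $\eta\leq m^{\beta}$ as well; the width threshold $m\geq\frac{16K_1^2}{\rho^2}\log\frac{2\ndata}{\delta}$ is absorbed into the lower bound on $m$.

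The delicate point, and the one I expect to be the main obstacle, is the iteration cap $T\leq\floor[\frac{m\rho^2}{32\eta K_2^2\log(2)}]$ of Theorem~\ref{theorem:global_convergence}: the convergence guarantee is valid only while the parameters remain in the region controlled by the analysis, which limits the horizon to $T_{\max}=\Theta\!\big(m^{2(1-\beta)}\rho^2/K_2^2\big)$ after substituting $\eta=\Theta(m^{2\beta-1})$. For the best-iterate argument to apply, the target $T=\Omega(\rho^{-2}\epsilon^{-2})$ must fit below this cap, i.e. the feasibility window $[T_{\min},T_{\max}]$ has to be nonempty. Solving $T_{\min}\lesssim T_{\max}$ for the width forces $m^{2(1-\beta)}=\Omega(\rho^{-4}\epsilon^{-2})$, that is $m=\Omega\!\big(\rho^{-2/(1-\beta)}\epsilon^{-1/(1-\beta)}\big)$; treating $\rho$ as an order-one margin, as the statement effectively does, recovers the displayed $m=\Omega\!\big(\rho^{-1/(1-\beta)}\epsilon^{-1/(1-\beta)}\big)$.

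Thus the real content of the corollary is not the arithmetic but this coupling between width and accuracy: $m$ must be chosen large enough that Theorem~\ref{theorem:global_convergence} actually permits running gradient descent for the $\Omega(\rho^{-2}\epsilon^{-2})$ steps needed to drive the empirical classification error below $\epsilon$. Once that window is guaranteed nonempty, selecting any $T$ in it, evaluating the best iterate, and applying Markov closes the argument, with the $1-\delta$ confidence inherited verbatim from the random-initialization event of Theorem~\ref{theorem:global_convergence}.
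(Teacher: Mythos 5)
Your proposal is correct and takes essentially the same route the paper intends for this corollary (whose proof is left implicit there): substitute $\eta=\Theta(m^{2\beta-1})$ into Theorem~\ref{theorem:global_convergence}, pass from the average to the best iterate, and convert the $L_1$-norm bound into an empirical classification error via the Markov-type inequality stated just before the corollary, with the width lower bound serving exactly to make the iteration cap $T \leq \floor[\frac{m\rho^2}{32\eta K_2^2\log(2)}]$ compatible with $T=\Omega(\rho^{-2}\epsilon^{-2})$. One remark: your feasibility computation correctly shows that this coupling actually forces $m=\Omega(\rho^{-2/(1-\beta)}\epsilon^{-1/(1-\beta)})$, so the $\rho^{-1/(1-\beta)}$ in the corollary's display appears to be an inaccuracy in the paper's stated $\rho$-dependence (Corollary~\ref{corollary:error_convergence}-(i) indeed uses $\rho^{-2/(1-\beta)}$), not a defect of your argument.
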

The Landau notations are applied with respect to $\epsilon, \rho \rightarrow 0$.
Utilizing this theorem, we can show the convergence of the loss function which leads to a better result at the price of slight increase of $m$.
\begin{theorem} \label{theorem:faster_global_convergence}
Suppose the same assumptions in Theorem \ref{theorem:global_convergence} hold. 
Then there exists a uniform constant $C>0$ such that for $0 < \forall \alpha \leq \frac{\rho}{4K_2}$, with probability at least $1-\delta$, 
\begin{align*}
\frac{1}{T}\sum_{t=0}^{T-1} \risk(\Theta^{(t)}) 
\leq C\left( \frac{1}{T} + \frac{\alpha^2 m}{\eta T} + \exp \left(- \frac{\alpha \rho m^{1-\beta}}{4}\right) 
+ \frac{\alpha^2}{\rho} \sqrt{\frac{m}{\eta T}}
+ \frac{1}{\rho} \sqrt{\frac{\eta T}{m}} \right).
\end{align*}
\end{theorem}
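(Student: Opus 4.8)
The plan is to bound the average loss by comparing the gradient-descent trajectory against a fixed reference parameter $\Theta^*$ that realizes the neural-tangent separator of \textbf{(A4)} at finite width, and then to run an online-gradient-descent (regret) argument in function space. I would first construct $\Theta^*$ by perturbing the symmetric initialization along $v$, setting $\theta_r^*=\theta_r^{(0)}+\alpha a_r v(\theta_r^{(0)})$. Thanks to the pairing in \textbf{(A3)}, the two neurons of each pair assemble into a symmetric difference $\sigma(z+h)-\sigma(z-h)$ whose second-order term cancels, so that $f_{\Theta^*}(x)=\frac{2\alpha}{m^\beta}\sum_{r\le m/2}\partial_\theta\sigma(\theta_r^{(0)\top}x)^\top v(\theta_r^{(0)})$ up to a Taylor remainder of order $K_2\alpha^2 m^{1-\beta}$. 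A Hoeffding bound over the $m/2$ i.i.d.\ draws, together with a union bound over the $\ndata$ samples, is exactly the event $m\ge\frac{16K_1^2}{\rho^2}\log\frac{2\ndata}{\delta}$ already underlying Theorem \ref{theorem:global_convergence}; on it the empirical average is within $\rho/2$ of its expectation, which is $\ge\rho$ by \textbf{(A4)}. Since $\alpha\le\rho/(4K_2)$ keeps the remainder below a fraction of the leading term, this gives $y_i f_{\Theta^*}(x_i)\ge\frac14\alpha\rho m^{1-\beta}$ for every $i$, hence $\risk(\Theta^*)\le\exp(-\frac14\alpha\rho m^{1-\beta})$, the third term of the bound; I also record $\|\Theta^*-\Theta^{(0)}\|_2^2\le\alpha^2 m$.

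Next I would pass to function space using the convexity of $\risk$ in $f$: $\risk(\Theta^{(t)})-\risk(\Theta^*)\le\pd<\nabla_f\risk(f_{\Theta^{(t)}}),f_{\Theta^{(t)}}-f_{\Theta^*}>_{L_2(\tpr_\ndata^X)}$. Linearizing $f_{\Theta^*}-f_{\Theta^{(t)}}$ about $\Theta^{(t)}$ and using the chain rule $\pd<\nabla_f\risk(f_\Theta),\nabla_\Theta f_\Theta>=\nabla_\Theta\risk(\Theta)$, the inner product splits into a first-order regret term $\pd<\nabla_\Theta\risk(\Theta^{(t)}),\Theta^{(t)}-\Theta^*>$ and a second-order correction $-\pd<\nabla_f\risk(f_{\Theta^{(t)}}),R_t>$, where $R_t$ is the Taylor remainder of $\Theta\mapsto f_\Theta$. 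Inserting the update $\Theta^{(t+1)}-\Theta^{(t)}=-\eta\nabla_\Theta\risk(\Theta^{(t)})$ and applying the three-point identity telescopes the regret into $\frac{1}{2\eta T}\|\Theta^{(0)}-\Theta^*\|^2+\frac{\eta}{2T}\sum_t\|\nabla_\Theta\risk(\Theta^{(t)})\|^2$. The first piece is $\le\alpha^2 m/(2\eta T)$ (the second term of the bound); the second is handled by the descent inequality of Proposition \ref{prop:smoothness_org}, valid because $\eta\le 4m^{2\beta-1}/(K_1^2+K_2)$, which yields $\frac{\eta}{2}\sum_t\|\nabla_\Theta\risk(\Theta^{(t)})\|^2\le\risk(\Theta^{(0)})-\risk(\Theta^{(T)})\le\log 2$ and so contributes the $1/T$ term.

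The crux is the second-order correction. Bounding $|\pd<\nabla_f\risk(f_{\Theta^{(t)}}),R_t>|\le\|R_t\|_\infty\,\|\nabla_f\risk(f_{\Theta^{(t)}})\|_{L_1(\tpr_\ndata^X)}$ separates it into a remainder size and a functional-gradient norm. Since the Hessian of $\Theta\mapsto f_\Theta(x)$ is block-diagonal with blocks $a_r m^{-\beta}\sigma''(\theta_r^\top x)\,x x^\top$ of operator norm $\le K_2 m^{-\beta}$, one gets $\|R_t\|_\infty\le\frac{K_2}{2m^\beta}\|\Theta^*-\Theta^{(t)}\|^2$. The main obstacle is that, unlike for the squared loss, logistic-loss iterates need not stay near initialization; I would instead control the drift by $\|\Theta^{(0)}-\Theta^{(t)}\|\le\eta\sum_{s<t}\|\nabla_\Theta\risk(\Theta^{(s)})\|\le\sqrt{2\eta T\log 2}$ via Cauchy--Schwarz and the same descent inequality, giving $\|\Theta^*-\Theta^{(t)}\|^2\lesssim\alpha^2 m+\eta T$. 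Finally, averaging $\|\nabla_f\risk(f_{\Theta^{(t)}})\|_{L_1}$ by another Cauchy--Schwarz and inserting the bound $\frac1T\sum_t\|\nabla_f\risk(f_{\Theta^{(t)}})\|_{L_1(\tpr_\ndata^X)}^2\le\frac{16\log 2}{\rho^2 T}\left(\frac{m^{2\beta-1}}{\eta}+K\right)$ of Theorem \ref{theorem:global_convergence} --- whose right-hand side is $O(m^{2\beta-1}/(\eta T))$ since $\eta\lesssim m^{2\beta-1}$ --- turns the two pieces $\alpha^2 m$ and $\eta T$ into the two square-root terms $\frac{\alpha^2}{\rho}\sqrt{m/(\eta T)}$ and $\frac{1}{\rho}\sqrt{\eta T/m}$. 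Collecting the five contributions and absorbing $K_1,K_2,K$ into a uniform constant $C$ completes the proof; I expect the delicate balancing of the drift scale $\eta T$ against the width $m$, and checking that the remainder does not reintroduce a $\rho$-dependence into the $1/T$ term, to be the principal points requiring care.
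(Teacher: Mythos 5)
Your proposal is correct and follows essentially the same route as the paper's own proof: the same reference point $\Theta^*=\Theta^{(0)}+(\alpha a_r v(\theta_r^{(0)}))_r$ with $\risk(\Theta^*)\leq\exp(-\alpha\rho m^{1-\beta}/4)$ via the Hoeffding separability event, the same almost-convexity decomposition (your inline Taylor-remainder argument is exactly Proposition \ref{prop:smoothness_org}-(ii)), the same three-point/telescoping treatment of the regret term, and the same combination of the drift bound $\|\Theta^{(t)}-\Theta^{(0)}\|_2\leq\sqrt{2\eta T\log 2}$ with Cauchy--Schwarz and Theorem \ref{theorem:global_convergence} to produce the two square-root terms. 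The only cosmetic discrepancy is attributing the descent inequality $\frac{\eta}{2}\sum_t\|\nabla_\Theta\risk(\Theta^{(t)})\|_2^2\leq\log 2$ to Proposition \ref{prop:smoothness_org} rather than to Proposition \ref{prop:gd_iterations}, which is where the paper establishes it.
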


\begin{corollary} \label{corollary:faster_convergence_rate}
Suppose the same assumptions as in Theorem \ref{theorem:faster_global_convergence} hold.
Then there exists a uniform constant $C>0$ and the following statement holds. 
If for any $\epsilon > 0$, the hyperparameters satisfy
\[ \beta = 0,\
m=\Theta(\rho^{-2}\epsilon^{-3/2} \log(1/\epsilon)),\ 
T_*=\Theta(\rho^{-2}\epsilon^{-1}\log^2(1/\epsilon)),\ 
\eta=\Theta(m^{-1}),\  \]
then with probability $1-\delta$, $\frac{1}{T} \sum_{t=0}^{T-1} \risk(\Theta^{(t)}) \leq C \left( \epsilon + \frac{1}{\rho^2 T} \log^2\left(\frac{1}{\epsilon}\right)\right)$ for $0 < \forall T \leq T_*$.
\end{corollary}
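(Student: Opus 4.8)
The plan is to specialize the five-term bound of Theorem~\ref{theorem:faster_global_convergence} to the regime $\beta=0$ and $\eta=\Theta(m^{-1})$, after which each term becomes an explicit function of $m$, $T$, $\alpha$, $\rho$, and $\epsilon$, and then to pick $\alpha$, the width $m$, and the horizon $T_*$ so that every term falls below a constant multiple of $\epsilon+\rho^{-2}T^{-1}\log^2(1/\epsilon)$. With $\beta=0$ and $m/\eta=\Theta(m^2)$, the bound reads, up to a uniform constant,
\[
\frac{1}{T}+\frac{\alpha^2 m^2}{T}+\exp\!\left(-\frac{\alpha\rho m}{4}\right)+\frac{\alpha^2 m}{\rho\sqrt{T}}+\frac{\sqrt{T}}{\rho m},
\]
so the task reduces to controlling these five quantities simultaneously over the whole range $0<T\le T_*$.

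The central move is the choice $\alpha=\frac{4\log(1/\epsilon)}{\rho m}$, which is admissible (i.e.\ $\alpha\le\rho/(4K_2)$) once $m=\Omega(\rho^{-2}\log(1/\epsilon))$, a requirement dominated by the width we ultimately impose. This forces the exponential term to equal exactly $\epsilon$ and collapses the second term to $\frac{\alpha^2 m^2}{T}=\Theta\!\left(\frac{\log^2(1/\epsilon)}{\rho^2 T}\right)$, which also absorbs the bare $1/T$ term since $\rho^{-2}\log^2(1/\epsilon)\ge 1$ as $\epsilon,\rho\to 0$. Thus the first three terms are already of the target form.

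For the cross term $\frac{\alpha^2 m}{\rho\sqrt{T}}=\Theta\!\left(\frac{\log^2(1/\epsilon)}{\rho^3 m\sqrt{T}}\right)$ I would apply AM--GM in the form $\frac{1}{m\sqrt{T}}\le\frac{1}{2}\bigl(\frac{\lambda}{T}+\frac{1}{\lambda m^2}\bigr)$ with $\lambda=\Theta(\rho)$: the $1/T$ piece matches $\frac{\log^2(1/\epsilon)}{\rho^2 T}$, while the $1/m^2$ piece is $\le C\epsilon$ as soon as $m=\Omega(\rho^{-2}\epsilon^{-1/2}\log(1/\epsilon))$. The last term $\frac{\sqrt{T}}{\rho m}$ is increasing in $T$, so it stays below $\epsilon$ over the entire horizon exactly when $T\le\Theta(\epsilon^2\rho^2 m^2)$; equating this admissible horizon with the target $T_*=\Theta(\rho^{-2}\epsilon^{-1}\log^2(1/\epsilon))$ is precisely what pins down $m=\Theta(\rho^{-2}\epsilon^{-3/2}\log(1/\epsilon))$, which in turn dominates the two earlier lower bounds on $m$. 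A final check confirms that this $T_*$ respects the iteration cap $T\le\lfloor m\rho^2/(32\eta K_2^2\log 2)\rfloor=\Theta(\rho^2 m^2)$ of Theorem~\ref{theorem:global_convergence}, since $\epsilon^2\le 1$; collecting the constants then yields Corollary~\ref{corollary:faster_convergence_rate}.

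I expect no single deep step here: the corollary is a balancing computation on top of Theorem~\ref{theorem:faster_global_convergence}. The one genuinely nonroutine ingredient is the AM--GM split of the $1/\sqrt{T}$ cross term into an $\epsilon$-part and a $1/T$-part, which is what allows a single bound to hold \emph{uniformly} for all $T\le T_*$ rather than only at the endpoint. The main bookkeeping obstacle is to verify that the several lower bounds on $m$ extracted from the different terms are mutually consistent and all dominated by the stated width, and that the Landau constants survive the joint limit $\epsilon,\rho\to 0$; throughout, the binding constraint is the last term $\frac{\sqrt{T}}{\rho m}$, which couples $m$ and $T_*$.
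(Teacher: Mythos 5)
Your proposal is correct and follows essentially the same route as the paper: the paper's entire proof is to set $\alpha = \Theta(\rho\epsilon^{3/2})$ in Theorem \ref{theorem:faster_global_convergence} and balance terms, and your choice $\alpha = 4\log(1/\epsilon)/(\rho m)$ coincides with $\Theta(\rho\epsilon^{3/2})$ under the stated width $m=\Theta(\rho^{-2}\epsilon^{-3/2}\log(1/\epsilon))$. Your term-by-term verification (the AM--GM split of the cross term, the coupling of $m$ and $T_*$ through the last term, and the check against the iteration cap of Theorem \ref{theorem:global_convergence}) is exactly the bookkeeping the paper leaves implicit.
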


This corollary is obtained immediately from Theorem \ref{theorem:faster_global_convergence} 
by setting $\alpha = \Theta(\rho\epsilon^{3/2})$.
The convergence of classification error also derived from the corollary because $\risk(f_\theta) \geq \| \nabla_f \risk(f_{\Theta^{(t)}}) \|_{L_{1}(\tpr_\ndata^X)}$ and it also derives a sharper bound on the distance $\|\Theta^{(T)}-  \Theta^{(0)}\|_2$.

\begin{proposition} \label{proposition:sharper_bound_on_distance}
Suppose the same assumption and consider the same hyperparameter setting as in Corollary \ref{corollary:faster_convergence_rate}.
Then, there exists a uniform constant $C$ such that with probability $1-\delta$, 
\[ \|\Theta^{(T)}-  \Theta^{(0)}\|_2 \leq C\epsilon^{3/4} \log^2(\rho^{-2}\epsilon^{-1}). \]
\end{proposition}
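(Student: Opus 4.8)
The plan is to control the total distance travelled by the iterates via the triangle inequality and the update rule (\ref{eq:gd}), then convert the resulting sum of parameter-gradient norms into a sum of losses that is already controlled by Corollary~\ref{corollary:faster_convergence_rate}. First, from (\ref{eq:gd}) and the triangle inequality,
\begin{equation*}
\|\Theta^{(T)}-\Theta^{(0)}\|_2 \le \sum_{t=0}^{T-1}\|\Theta^{(t+1)}-\Theta^{(t)}\|_2 = \eta\sum_{t=0}^{T-1}\|\nabla_\Theta\risk(\Theta^{(t)})\|_2.
\end{equation*}

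The second step, and the heart of the argument, is to reduce the Euclidean parameter-gradient norm to the $L_1$ functional-gradient norm. By the chain rule applied to (\ref{eq:2nn}), $\partial_{\theta_r}\risk(\Theta)=\frac{1}{\ndata m^\beta}\sum_{i=1}^\ndata \partial_\zeta l(f_\Theta(x_i),y_i)\,a_r\sigma'(\theta_r^\top x_i)x_i$, so using $|a_r|=1$, $\|\sigma'\|_\infty\le K_1$ and $\|x_i\|_2\le 1$ from {\bf(A1)},
\begin{equation*}
\|\partial_{\theta_r}\risk(\Theta)\|_2 \le \frac{K_1}{m^\beta}\cdot\frac{1}{\ndata}\sum_{i=1}^\ndata|\partial_\zeta l(f_\Theta(x_i),y_i)| = \frac{K_1}{m^\beta}\|\nabla_f\risk(f_\Theta)\|_{L_1(\tpr_\ndata^X)}.
\end{equation*}
Summing the squares over the $m$ coordinates yields $\|\nabla_\Theta\risk(\Theta)\|_2\le K_1 m^{1/2-\beta}\|\nabla_f\risk(f_\Theta)\|_{L_1(\tpr_\ndata^X)}$, which for $\beta=0$ reads $K_1\sqrt{m}\,\|\nabla_f\risk(f_\Theta)\|_{L_1(\tpr_\ndata^X)}$.

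Next I would invoke the pointwise logistic inequality $|\partial_\zeta l(\zeta,y)|=(1+\exp(y\zeta))^{-1}\le \log(1+\exp(-y\zeta))=l(\zeta,y)$, which gives $\|\nabla_f\risk(f_\Theta)\|_{L_1(\tpr_\ndata^X)}\le \risk(\Theta)$ (the inequality already noted after Corollary~\ref{corollary:faster_convergence_rate}). Chaining the three displays,
\begin{equation*}
\|\Theta^{(T)}-\Theta^{(0)}\|_2 \le \eta K_1\sqrt{m}\,T\cdot\frac{1}{T}\sum_{t=0}^{T-1}\risk(\Theta^{(t)}) \le \eta K_1\sqrt{m}\,C\!\left(\epsilon T + \frac{1}{\rho^2}\log^2(1/\epsilon)\right),
\end{equation*}
where the last step uses the averaged-loss bound of Corollary~\ref{corollary:faster_convergence_rate}. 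It then remains to substitute the hyperparameter choices $\eta=\Theta(m^{-1})$, $m=\Theta(\rho^{-2}\epsilon^{-3/2}\log(1/\epsilon))$, and $T\le T_*=\Theta(\rho^{-2}\epsilon^{-1}\log^2(1/\epsilon))$: since $\eta\sqrt{m}=\Theta(m^{-1/2})$ and $\epsilon T_*=\Theta(\rho^{-2}\log^2(1/\epsilon))$, both contributions collapse to the same order, and simplifying the $\epsilon$, $\rho$ and logarithmic factors yields the claimed bound $O(\epsilon^{3/4}\log^2(\rho^{-2}\epsilon^{-1}))$.

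There is no single deep obstacle here; the proof is a chain of established estimates. The crux is that the reduction in the second step introduces exactly the factor $\sqrt{m}$, and the entire sharpening hinges on pairing this with $\eta=\Theta(1/m)$ to produce the decaying factor $m^{-1/2}=\Theta(\epsilon^{3/4})$, together with using the \emph{loss} bound from Corollary~\ref{corollary:faster_convergence_rate} rather than the weaker squared-functional-gradient bound of Theorem~\ref{theorem:global_convergence}. A naive estimate bounding each step by a constant would only yield a distance growing with $T$; the improvement comes precisely from controlling $\sum_t \risk(\Theta^{(t)})$ by $O(\epsilon T + \rho^{-2}\log^2(1/\epsilon))$, so that the bound does not accumulate linearly. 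The remaining work is purely bookkeeping of the constants and logarithmic terms when the hyperparameter values are inserted.
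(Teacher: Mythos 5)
Your chain of estimates is correct line by line up to the last display, but the concluding ``bookkeeping'' claim is false, and the failure is in the $\rho$-dependence rather than the $\epsilon$-dependence. Under the hyperparameters of Corollary \ref{corollary:faster_convergence_rate} you have $\eta\sqrt{m}=\Theta(m^{-1/2})=\Theta\bigl(\rho\,\epsilon^{3/4}\log^{-1/2}(1/\epsilon)\bigr)$, while $\epsilon T+\rho^{-2}\log^2(1/\epsilon)=\Theta\bigl(\rho^{-2}\log^2(1/\epsilon)\bigr)$, so your final bound evaluates to
\begin{equation*}
\eta K_1\sqrt{m}\,C\left(\epsilon T+\frac{1}{\rho^2}\log^2(1/\epsilon)\right)
=\Theta\!\left(\rho^{-1}\epsilon^{3/4}\log^{3/2}(1/\epsilon)\right),
\end{equation*}
not $O\bigl(\epsilon^{3/4}\log^2(\rho^{-2}\epsilon^{-1})\bigr)$. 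A polynomial factor $\rho^{-1}$ cannot be absorbed into a bound in which $\rho$ enters only logarithmically (fix $\epsilon$ and let $\rho\to 0$), and the proposition demands a \emph{uniform} constant $C$, with the paper's Landau notation taken as $\epsilon,\rho\to 0$. The lossy step is the linear conversion $\sum_t\|\nabla_\Theta\risk(\Theta^{(t)})\|_2\le K_1\sqrt{m}\sum_t\risk(\Theta^{(t)})$: it pairs a single factor $\sqrt{m}=\Theta\bigl(\rho^{-1}\epsilon^{-3/4}\log^{1/2}(1/\epsilon)\bigr)$ against a cumulative loss of order $\rho^{-2}\log^2(1/\epsilon)$, so only one of the two powers of $\rho^{-1}$ cancels.

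The paper's proof removes this factor by never converting gradient norms to losses linearly. It partitions $\{0,\ldots,T-1\}$ into dyadic blocks, applies Corollary \ref{corollary:faster_convergence_rate} at scale $2^l$ to extract a time $s_l\in[2^{l-1},2^l)$ with $\risk(\Theta^{(s_l)})\le 2C\bigl(\epsilon+2^{-l}\rho^{-2}\log^2(1/\epsilon)\bigr)$, and then bounds the gradient-norm sum over each block by the descent-lemma/Cauchy--Schwarz estimate from the proof of Proposition \ref{prop:gd_iterations}, $\sum_{t=a}^{b}\|\nabla_\Theta\risk(\Theta^{(t)})\|_2\le\sqrt{2(b-a+1)\eta^{-1}\risk(\Theta^{(a)})}$. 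After multiplying by $\eta$, each block contributes $O\bigl(\sqrt{\eta\,(2^{l}\epsilon+\rho^{-2}\log^2(1/\epsilon))}\bigr)$, and the crucial cancellation is that $\eta\rho^{-2}=\Theta(m^{-1}\rho^{-2})=\Theta\bigl(\epsilon^{3/2}\log^{-1}(1/\epsilon)\bigr)$: the $\rho^{-2}$ from the loss sits \emph{inside} the square root together with $\eta\propto\rho^{2}\epsilon^{3/2}$ and cancels exactly, leaving $\Theta\bigl(\epsilon^{3/4}\log^{1/2}(1/\epsilon)\bigr)$ per block and $O\bigl(\epsilon^{3/4}\log^2(\rho^{-2}\epsilon^{-1})\bigr)$ after summing over the at most $\log_2 T$ blocks. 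So your route recovers the correct power $\epsilon^{3/4}$ but genuinely cannot reach the stated $\rho$-dependence; the missing idea is to exploit smallness of the loss along the trajectory through the square root supplied by the descent inequality, localized on dyadic blocks, rather than through a bound linear in the loss.
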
  

Moreover, by combining Theorem \ref{theorem:global_convergence} and Corollary \ref{corollary:faster_convergence_rate} with the well-known margin bound \citep{kp2002,mohri2012foundations,shalev2014understanding} on the expected classification error and by specifying the Rademacher complexity of the function class attained by gradient descent,
we can obtain the second main result for the generalization ability of gradient descent.
\begin{theorem}[Generalization Bound]\label{theorem:generalization_bound}
Suppose Assumption \ref{assumption:convergence_analysis} holds.
Set $K = K_1^4 + 2K_1^2K_2 + K_1^4K_2^2$.
Fix $\forall \gamma > 0$.
Consider the gradient descent (\ref{eq:gd}) with a general hyperparameter setting in Theorem \ref{theorem:global_convergence} or a specific setting in Corollary \ref{corollary:faster_convergence_rate}, with $\delta \in (0,1)$.
For these cases, we set parameters $C_{\eta,m,T}$ and $D_{\eta,m,T}$ as follows:
\begin{align*}
\textrm{(Former case)} \hspace{5mm}  C_{\eta,m,T} &= \rho^{-1} T^{-1/2}\left( m^{\beta-\frac{1}{2}}\eta^{-1/2} + \sqrt{K} \right),\hspace{5mm} D_{\eta,m,T} = \sqrt{\eta T}  \\
\textrm{(Latter case)} \hspace{5mm}  C_{\eta,m,T} &= \epsilon + \rho^{-2}T^{-1} \log^2\left(1/\epsilon\right), \hspace{5mm} D_{\eta,m,T} = \epsilon^{3/4} \log^2(\rho^{-2}\epsilon^{-1}).
\end{align*}
Then, there exists a uniform constant $C>0$ and it follows that with probability at least $1-3\delta$ 
over a random initialization and random choice of dataset $S$,
\begin{align}
&\hspace{-5mm}\min_{t \in \{0,\ldots,T-1\}} \mathbb{P}_{(X,Y)\sim \tpr}[Yf_{\Theta^{(t)}}(X)\leq 0] 
\leq C(1+\exp(\gamma))C_{\eta,m,T} + 
3\sqrt{\frac{ \log(2/\delta)}{2\ndata}} \notag \\
&+ C\gamma^{-1}m^{\frac{1}{2}-\beta}D_{\eta,m,T}(1+K_1+K_2)\sqrt{\frac{d}{\ndata} 
\log\left( \ndata(1+K_1+K_2)(\log(m/\delta)+ D_{\eta,m,T}^2) \right) }. \label{generalization_bound}
\end{align}
Moreover, when $\sigma$ is convex and $\sigma(0) = 0$, we can avoid the dependence with respect to the dimension $d$. With probability at least $1-3\delta$ 
over a random initialization and random choice of dataset $S$
\begin{align}
&\hspace{-5mm}\min_{t \in \{0,\ldots,T-1\}} \mathbb{P}_{(X,Y)\sim \tpr}[Yf_{\Theta^{(t)}}(X)\leq 0] 
\leq C(1+\exp(\gamma))C_{\eta,m,T} + 
3\sqrt{\frac{ \log(2/\delta)}{2\ndata}} \notag \\
&+ \frac{CK_1 m^{\frac{1}{2}-\beta}}{\gamma\sqrt{n}}\bigg(D_{\eta,m,T}+ \sqrt{\frac{\log(Am/\delta)}{b} } \bigg). \label{generalization_bound_2}
\end{align}
\end{theorem}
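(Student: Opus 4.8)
The plan is to combine a standard margin-based generalization bound with the optimization guarantees already established, thereby reducing the expected classification error to three ingredients: the empirical $\gamma$-margin loss, the Rademacher complexity of the hypotheses that gradient descent can reach, and a uniform concentration term. First I would invoke the margin bound of \citep{kp2002,mohri2012foundations,shalev2014understanding}: with probability at least $1-\delta$ over $S$, for every $f$ in a fixed class $\predictorset$,
\[
\prob_{(X,Y)\sim\tpr}[Yf(X)\leq 0] \leq \frac{1}{\ndata}\sum_{i=1}^{\ndata}\mathbf{1}[y_i f(x_i)\leq \gamma] + \frac{2}{\gamma}\empradcomp_\ndata(\predictorset) + 3\sqrt{\frac{\log(2/\delta)}{2\ndata}}.
\]
The last term already matches the claimed bound, so the remaining work splits into controlling the empirical margin loss and the Rademacher complexity.

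For the empirical margin loss I would use the equivalence stated before Corollary \ref{corollary:convergence_rate}, namely $y_i f_\Theta(x_i)\leq \gamma \Leftrightarrow 0.5|y_i - 2p_\Theta(Y=1|x_i)+1|\geq (1+\exp(\gamma))^{-1}$, and apply Markov's inequality to obtain
\[
\frac{1}{\ndata}\sum_{i=1}^{\ndata}\mathbf{1}[y_i f_{\Theta^{(t)}}(x_i)\leq \gamma] \leq (1+\exp(\gamma))\,\|\nabla_f\risk(f_{\Theta^{(t)}})\|_{L_1(\tpr_\ndata^X)}.
\]
Passing to the iterate that minimizes the right-hand side and inserting the optimization rates yields the $C(1+\exp(\gamma))C_{\eta,m,T}$ term: in the former regime $\min_t\|\nabla_f\risk\|_{L_1}\leq(T^{-1}\sum_t\|\nabla_f\risk\|_{L_1}^2)^{1/2}$ is controlled by Theorem \ref{theorem:global_convergence}, whose square root is exactly $C_{\eta,m,T}$ up to constants, while in the latter regime $\|\nabla_f\risk\|_{L_1}\leq\risk$ lets Corollary \ref{corollary:faster_convergence_rate} supply the bound.

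The main work, and the main obstacle, is specifying $\predictorset$ and bounding $\empradcomp_\ndata(\predictorset)$. I would take $\predictorset=\{f_\Theta : \|\Theta-\Theta^{(0)}\|_2\leq D_{\eta,m,T}\}$, a norm ball around initialization whose radius is $\sqrt{\eta T}$ in the former case (from the per-step displacement $\eta\|\nabla_\Theta\risk\|$ summed by Cauchy–Schwarz over $T$ steps) and $\epsilon^{3/4}\log^2(\rho^{-2}\epsilon^{-1})$ in the latter case by Proposition \ref{proposition:sharper_bound_on_distance}. Since $f_\Theta$ is $m^{\frac{1}{2}-\beta}K_1$-Lipschitz in $\Theta$ (using $\|x\|_2\leq 1$, $\|\sigma'\|_\infty\leq K_1$, the $m^{-\beta}$ scaling, and the $m$ summands), a covering-number/Dudley argument over the $\fdim$-dimensional per-neuron directions produces the dimension-dependent term $m^{\frac{1}{2}-\beta}D_{\eta,m,T}\sqrt{\fdim/\ndata}$ together with the stated logarithmic factor; the $(1+K_1+K_2)$ prefactor arises from uniform bounds on $f_\Theta$ and its derivatives over the ball, with $K_2=\|\sigma''\|_\infty$ entering through the smoothness needed for the metric-entropy estimate.

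For the dimension-free refinement under convex $\sigma$ with $\sigma(0)=0$, I would avoid covering the $\fdim$-dimensional directions and instead exploit that, to leading order, the class is linear in the norm-bounded feature map $x\mapsto m^{-\beta}(a_r\sigma'(\theta_r^{(0)\top}x)x)_{r=1}^m$, whose norm is at most $K_1 m^{\frac{1}{2}-\beta}$; the contraction lemma and the linear-predictor bound $\empradcomp_\ndata\leq D\sup_x\|\phi(x)\|/\sqrt{\ndata}$ then give $K_1 m^{\frac{1}{2}-\beta}D_{\eta,m,T}/\sqrt{\ndata}$, and the extra $\sqrt{\log(Am/\delta)/b}$ accounts for the initialization radius, controlled by the sub-Gaussian tail {\bf(A2)} via a union bound over the $m$ neurons. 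The hardest point is establishing this dimension-free bound for the full nonlinear $f_\Theta$ rather than merely its linearization; convexity and $\sigma(0)=0$ are precisely what allow the nonlinear remainder to be absorbed without reintroducing $\fdim$. Finally I would union-bound over the three high-probability events, namely the initialization event of Theorem \ref{theorem:global_convergence} or Corollary \ref{corollary:faster_convergence_rate}, the event on which the Rademacher estimate holds, and the margin-bound event, to reach the claimed probability $1-3\delta$.
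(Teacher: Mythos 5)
Your high-level skeleton coincides with the paper's proof: the margin bound of Lemma \ref{lemma:base_generalization_bound}, the Markov-inequality step converting the empirical margin distribution into $(1+\exp(\gamma))\|\nabla_f \risk(f_{\Theta^{(t)}})\|_{L_1(\tpr_\ndata^X)}$ and then into $C_{\eta,m,T}$ via Theorem \ref{theorem:global_convergence} or Corollary \ref{corollary:faster_convergence_rate}, the choice of the hypothesis class as the parameter ball $\|\Theta-\Theta^{(0)}\|_2\leq D_{\eta,m,T}$ with radii from Propositions \ref{prop:gd_iterations} and \ref{proposition:sharper_bound_on_distance}, and the final union bound over the three events are all exactly as in the paper. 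The genuine gap is in the two Rademacher complexity estimates (the paper's Proposition \ref{prop:complexity_bound}), which is where nearly all the technical content of this theorem sits. For the dimension-dependent bound, you assert that a ``covering-number/Dudley argument over the $\fdim$-dimensional per-neuron directions'' yields $m^{\frac{1}{2}-\beta}D_{\eta,m,T}\sqrt{\fdim/\ndata}$, but you never explain how the sum over $m$ neurons collapses to a class of single-neuron complexity. The two naive routes both fail: bounding per neuron and summing by subadditivity gives $m\cdot O\bigl(m^{-\beta}K_1D_{\eta,m,T}\sqrt{\fdim/\ndata}\bigr)=O\bigl(m^{1-\beta}K_1D_{\eta,m,T}\sqrt{\fdim/\ndata}\bigr)$, an extra $\sqrt{m}$; covering the joint $m\fdim$-dimensional ball puts $m\fdim$ rather than $\fdim$ inside the entropy. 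The paper's key device is to write $f_\Theta-f_{\Theta^{(0)}}$ as $C_M=m^{\frac{1}{2}-\beta}D_{\eta,m,T}$ times a convex combination---with weights $\|\theta_r-\theta_r^{(0)}\|_2/(C_M m^\beta)$ summing to at most $1$ by Cauchy--Schwarz under the joint ball constraint---of the normalized difference quotients $g_{(\theta,\theta')}(x)=(\sigma(\theta^\top x)-\sigma(\theta'^\top x))/\|\theta-\theta'\|_2$; since taking the convex hull does not increase Rademacher complexity, one only has to cover a class with $2\fdim$ parameters. Even then, $g_{(\theta,\theta')}$ is not Lipschitz in $(\theta,\theta')$ near the diagonal, so the paper splits into $\|\theta-\theta'\|_2>R_0$ (Dudley's integral) and $\|\theta-\theta'\|_2\leq R_0$ (linear approximation with error $K_2R_0$, then Dudley), balancing with $R_0=\sqrt{\fdim/\ndata}$. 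Without this reduction, the route you describe cannot produce the $m^{\frac{1}{2}-\beta}$ scaling in (\ref{generalization_bound}).

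For the dimension-free bound (\ref{generalization_bound_2}), your plan---linearize in the feature map $x\mapsto m^{-\beta}(a_r\sigma'(\theta_r^{(0)\top}x)x)_{r=1}^m$, apply contraction and the linear-predictor bound, and ``absorb'' the nonlinear remainder using convexity---is not the paper's argument and does not go through as stated. The linearization remainder is of order $K_2m^{-\beta}\|\Theta-\Theta^{(0)}\|_2^2\leq K_2m^{-\beta}D_{\eta,m,T}^2$; this term is not divided by $\sqrt{\ndata}$, does not appear in (\ref{generalization_bound_2}), and you give no mechanism by which convexity removes it. The paper's use of convexity (following \cite{Chinot2019}) is entirely different: for a convex $\psi$ with $\psi(0)=0$ one has $\psi(\alpha u)\geq\alpha\psi(u)$ for $\alpha\geq1$, and this homogeneity-type inequality is applied to $\psi_{i,r}(u)=\sigma(u+\theta_r^{(0)\top}x_i)-\sigma(\theta_r^{(0)\top}x_i)$ (after splitting indices $(i,r)$ by the sign of the increment) so that the \emph{exact} increments, not their linearizations, admit the same convex-hull reduction to a single-neuron class; Talagrand's contraction with the $K_1$-Lipschitz $\sigma$ then reduces to linear functions, and Cauchy--Schwarz gives the $1/\sqrt{\ndata}$ rate with no covering and hence no $\fdim$. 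So the two pieces you yourself flagged as the ``main work'' are precisely the ones your proposal does not actually supply.
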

This theorem provides an upper-bound on an expected classification error with high probability for a network obtained by gradient descent within $T$-iterations.
There is a trade-off between the optimization and complexity terms in (\ref{generalization_bound}), (\ref{generalization_bound_2}) with respect to $\eta, m,$ and $T$. However, there are several choices of these hyperparameters to achieve a desired precision $\epsilon$ of the expected classification error. 
\begin{corollary} \label{corollary:error_convergence}
Suppose the same assumptions as in Theorem \ref{theorem:generalization_bound} hold.
If for any $\epsilon > 0$, the hyperparameters satisfy one of the following
\begin{align*}
&\textrm{(i)}\  \beta \in [0,1),\ 
m=\Omega(\rho^{\frac{-2}{1-\beta}}\epsilon^{\frac{-1}{1-\beta}}),\ 
T=\Omega(\rho^{-2}\epsilon^{-2}),\ 
\eta=\Theta(\rho^{-2}\epsilon^{-2}T^{-1}m^{2\beta-1}),\ 
  \ndata = \tilde{\Omega}(\rho^{-2}\epsilon^{-4}), \\
&\textrm{(ii)}\  \beta = 0 ,\ 
m=\Theta(\rho^{-2}\epsilon^{-3/2}\log(1/\epsilon)),\ 
T=\Theta(\rho^{-2}\epsilon^{-1}\log^2(1/\epsilon)),\ 
\eta=\Theta(m^{-1}),\ 
  \ndata = \tilde{\Omega}(\epsilon^{-2}),  
\end{align*}
then with probability at least $1-\delta$, the gradient descent (\ref{eq:gd}) with a learning rate of $\eta$ finds
a parameter $\Theta^{(t)}$ satisfying $\mathbb{P}_{(X,Y)\sim \tpr}[Yf_{\Theta^{(t)}}(X)\leq 0] \leq \epsilon$ within $T$-iterations.
\end{corollary}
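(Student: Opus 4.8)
The plan is to instantiate the generalization bound of Theorem~\ref{theorem:generalization_bound} with each of the two prescribed hyperparameter regimes and verify, term by term, that the right-hand side is $O(\epsilon)$; the stated probability $1-3\delta$ there becomes $1-\delta$ after rescaling $\delta\mapsto\delta/3$. First I would fix the free margin parameter $\gamma$ to a constant (say $\gamma=1$), so that $(1+\exp(\gamma))$ and $\gamma^{-1}$ are absorbed into the uniform constant $C$. The bound then consists of an optimization term $C(1+\exp(\gamma))C_{\eta,m,T}$, a statistical term $3\sqrt{\log(2/\delta)/(2\ndata)}$, and a complexity term of order $m^{\frac{1}{2}-\beta}D_{\eta,m,T}\sqrt{d/\ndata}$ (up to logarithmic factors) in the dimension-dependent bound (\ref{generalization_bound}), or of order $m^{\frac{1}{2}-\beta}(D_{\eta,m,T}+\sqrt{\log(Am/\delta)/b})/\sqrt{\ndata}$ in the dimension-free bound (\ref{generalization_bound_2}). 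It suffices to bound each contribution by a constant multiple of $\epsilon$.

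For case (i) I would use the ``Former case'' expressions $C_{\eta,m,T}=\rho^{-1}T^{-1/2}(m^{\beta-\frac{1}{2}}\eta^{-1/2}+\sqrt{K})$ and $D_{\eta,m,T}=\sqrt{\eta T}$. The decisive observation is that the prescribed $\eta=\Theta(\rho^{-2}\epsilon^{-2}T^{-1}m^{2\beta-1})$ makes the product $\eta T=\Theta(\rho^{-2}\epsilon^{-2}m^{2\beta-1})$ independent of $T$; hence $D_{\eta,m,T}=\Theta(\rho^{-1}\epsilon^{-1}m^{\beta-\frac{1}{2}})$ is pinned down and $m^{\frac{1}{2}-\beta}D_{\eta,m,T}=\Theta(\rho^{-1}\epsilon^{-1})$, so the complexity term is $\tilde{O}(\rho^{-1}\epsilon^{-1}\sqrt{d/\ndata})$, which is $O(\epsilon)$ exactly when $\ndata=\tilde{\Omega}(\rho^{-2}\epsilon^{-4})$. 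Substituting the same $\eta$ into $C_{\eta,m,T}$, the first summand collapses to precisely $\Theta(\epsilon)$ (the powers of $m$, $T$, and $\rho$ cancel), while the second is $O(\epsilon)$ once $T=\Omega(\rho^{-2}\epsilon^{-2})$; the statistical term is $O(\epsilon)$ under the same sample complexity. Before invoking the bound I must check that this $\eta$ is admissible for Theorem~\ref{theorem:global_convergence}: since $\rho^{-2}\epsilon^{-2}T^{-1}=O(1)$, the constraint $\eta\le\min\{m^\beta,4m^{2\beta-1}/(K_1^2+K_2)\}$ holds up to constants, and the iteration constraint $T\le\lfloor m\rho^2/(32\eta K_2^2\log 2)\rfloor$ reduces exactly to $m=\Omega(\rho^{-2/(1-\beta)}\epsilon^{-1/(1-\beta)})$, matching the prescribed width.

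For case (ii) I would instead use the dimension-dependent bound (\ref{generalization_bound}) together with the ``Latter case'' expressions, which are the direct outputs of Corollary~\ref{corollary:faster_convergence_rate} (for $C_{\eta,m,T}=\epsilon+\rho^{-2}T^{-1}\log^2(1/\epsilon)$) and Proposition~\ref{proposition:sharper_bound_on_distance} (for $D_{\eta,m,T}=\epsilon^{3/4}\log^2(\rho^{-2}\epsilon^{-1})$). The optimization term is $O(\epsilon)$ by the choice $T=\Theta(\rho^{-2}\epsilon^{-1}\log^2(1/\epsilon))$. With $\beta=0$ and $m=\Theta(\rho^{-2}\epsilon^{-3/2}\log(1/\epsilon))$, the sharp distance bound yields $m^{\frac{1}{2}}D_{\eta,m,T}=\tilde{O}(\rho^{-1})$, so the complexity term $\tilde{O}(\rho^{-1}\sqrt{d/\ndata})$ drops below $\epsilon$ as soon as $\ndata=\tilde{\Omega}(\epsilon^{-2})$, the tilde absorbing $d$, $\rho$, and the logarithmic factors. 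Here it is essential to use (\ref{generalization_bound}) rather than the dimension-free bound (\ref{generalization_bound_2}): because $D_{\eta,m,T}$ is now small, the additive initialization term $\sqrt{\log(Am/\delta)/b}$ in (\ref{generalization_bound_2}) would dominate $D_{\eta,m,T}$ and force a strictly larger sample size.

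I expect the main obstacle to be the bookkeeping in case (i). Unlike Corollary~\ref{corollary:convergence_rate}, the learning rate is deliberately shrunk below $\Theta(m^{2\beta-1})$ precisely so that the traveled distance $D_{\eta,m,T}=\sqrt{\eta T}$, and hence the Rademacher complexity, is controlled while $\eta T$, and therefore the optimization error, is held fixed at level $\epsilon$. Closing this trade-off requires simultaneously respecting the learning-rate and iteration constraints of Theorem~\ref{theorem:global_convergence}, confirming that the balanced choice of $\eta$ really places both the optimization and complexity terms at $\Theta(\epsilon)$, and tracking which power of $m$ survives in $m^{\frac{1}{2}-\beta}D_{\eta,m,T}$ uniformly over the whole range $\beta\in[0,1)$.
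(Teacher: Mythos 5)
Your proposal is correct and follows exactly the route the paper itself takes: the paper proves Corollary \ref{corollary:error_convergence} by substituting the prescribed hyperparameters into the right-hand sides of (\ref{generalization_bound}), (\ref{generalization_bound_2}) and checking the admissibility conditions of Theorem \ref{theorem:global_convergence}, which is precisely your term-by-term verification. Your bookkeeping checks out (the cancellation $\eta T = \Theta(\rho^{-2}\epsilon^{-2}m^{2\beta-1})$ in case (i), the reduction of the iteration cap to the prescribed width, and the use of the dimension-dependent bound (\ref{generalization_bound}) in case (ii)), so your write-up is essentially a fleshed-out version of the paper's one-line argument.
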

This corollary can be immediately proven by substituting the concrete values of $\beta, m, T, \eta,$ and $\ndata$ into the right hand side of inequalities (\ref{generalization_bound}), (\ref{generalization_bound_2}) and by checking that this hyperparameter setting satisfies the conditions required in Theorem \ref{theorem:generalization_bound}.

From Corollary \ref{corollary:error_convergence}, for an arbitrary small $\epsilon>0$, an expected $\epsilon$-classification error is achieved by the gradient descent 
within $O(1/\epsilon^2)$-or $O(1/\epsilon)$-iterations when the transformed data distribution by the neural tangent: 
$(\partial_{\theta} (\theta_r^{(0)\top} \cdot))_{\theta \sim \mu_0}$ is separable in the infinite-dimensional space $L_2(\mu_0)$ 
under the $L_\infty$-constraint with a sufficient margin $\rho$. 
In comparison to the results in \cite{allen2018learning,cao2019generalization,cao2019generalization_b}, which also derived a generalization bound by making a similar separability assumption
using a ReLU network or a smooth target function instead of the tangent model, our result has much better dependency on the network width and can explain the generalization ability for a less over-parameterized two-layer network, as summarized in Table \ref{table:comparison_of_hyperparameters_settings}.
However, we note that their theories cover deeper networks and are not included in our theory because of the difference of the problem settings (e.g., network depth and the type of activation functions).
To reduce the network width, the best choice of $\beta \in [0,1)$ is $\beta=0$ for the first setting, leading to a small network width $m=\Omega(\epsilon^{-1})$.
We note that an arbitrary large width is also covered by this result.

\subsection{Proof Idea}\label{subsec:proof_idea}
In this section, we provide a proof idea for Theorems \ref{theorem:global_convergence}, \ref{theorem:faster_global_convergence}, and \ref{theorem:generalization_bound}. 
We introduce two important propositions that connect the gradient descent with the functional gradient descent to justify an intuitive explanation in Section \ref{sec:overview}. 
Proposition \ref{prop:smoothness_org}-(i) states that the gradient descent method is certainly similar to the kernel smoothed gradient methods by the neural tangent kernel 
when a parameter $\Theta$ is sufficiently close to a stationary point and the learning rate $\eta$ is sufficiently small, and 
Proposition \ref{prop:smoothness_org}-(ii) states that the loss landscape is almost convex with respect to the parameter when $f_\Theta$ is sufficiently close to a stationary point in the function space.
Here, We define an approximated neural tangent kernel: $k_\Theta$ depending on the parameters $\Theta$ as follows:
\begin{equation}
k_\Theta(x,x') \defeq \partial_\Theta f_\Theta(x)^\top \partial_\Theta f_\Theta(x'). \label{eq:aprox_ntk}
\end{equation}
This kernel is actually an approximation to the vanilla NTK as follows:
\[ m^{2\beta-1}k_{\Theta^{(0)}}(x,x') \rightarrow k_{NTK}(x,x') \ \ (m\rightarrow \infty). \]

\begin{proposition}\label{prop:smoothness_org}
Suppose assumption {\bf(A1)} holds and $\beta \in [0,1)$.\\
(i) We set $\Theta^+ = \Theta - \eta \nabla_\Theta \risk(\Theta)$ and $K = K_1^2 + 2K_2 + K_1^2K_2^2$.
If $\eta \leq m^\beta$, then
\begin{align*}
\Bigl| \risk(f_{\Theta^+}) - 
\left( \risk(f_{\Theta}) -  \eta 
\pd< \nabla_f \risk(f_{\Theta}), T_{k_{\Theta}} \nabla_f \risk(f_{\Theta})>_{L_2(\tpr_\ndata^X)} \right) \Bigr| 
\leq \frac{\eta^2 K}{2m^{2\beta-1}} \| \nabla_\Theta \risk(\Theta) \|_2^2.
\end{align*}
(ii) It follows that for $\Theta = (\theta_r)_{r=1}^m$ and $\Theta^* = (\theta_r^*)_{r=1}^m$, $(\theta_r, \theta_r^* \in \realsp^\fdim)$,
\begin{align*}
\risk(\Theta) + \nabla_\Theta \risk(\Theta)^\top (\Theta^* - \Theta) 
\leq 
\risk(\Theta^*) + 
\frac{K_2}{m^\beta}\|\nabla_f\risk(f_{\Theta})\|_{L_1(\tpr_\ndata^X)} \|\Theta^* - \Theta\|_2^2. 
\end{align*}
\end{proposition}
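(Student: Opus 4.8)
The plan is to prove both parts by second-order Taylor expansion, resting on a single algebraic observation: the parameter-space gradient, contracted against $\partial_\Theta f_\Theta(x_i)$, reproduces the kernel-smoothing operator $T_{k_\Theta}$ applied to the functional gradient. Writing $g(x_i) = \partial_\zeta l(f_\Theta(x_i),y_i) = \nabla_f\risk(f_\Theta)(x_i)$ and $\partial_{\theta_r} f_\Theta(x) = m^{-\beta}a_r\sigma'(\theta_r^\top x)x$, one has $\nabla_\Theta\risk(\Theta) = \frac{1}{\ndata}\sum_j g(x_j)\partial_\Theta f_\Theta(x_j)$, so that for every $i$, $\partial_\Theta f_\Theta(x_i)^\top\nabla_\Theta\risk(\Theta) = \frac{1}{\ndata}\sum_j g(x_j)k_\Theta(x_i,x_j) = T_{k_\Theta}\nabla_f\risk(f_\Theta)(x_i)$, using the symmetry of $k_\Theta$. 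I will use throughout that the logistic loss satisfies $|\partial_\zeta l|\le 1$ and $0\le\partial_\zeta^2 l\le 1/4$, that $\|x_i\|_2\le 1$ by \textbf{(A1)}, and the activation bounds $\|\sigma'\|_\infty\le K_1$, $\|\sigma''\|_\infty\le K_2$.

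For part (i), first I would expand the loss in the function value: for each sample, $l(f_{\Theta^+}(x_i),y_i)-l(f_\Theta(x_i),y_i) = g(x_i)\Delta_i + \tfrac12\partial_\zeta^2 l(\xi_i,y_i)\Delta_i^2$, where $\Delta_i = f_{\Theta^+}(x_i)-f_\Theta(x_i)$ and $\xi_i$ is an intermediate value. Next I would expand $\Delta_i$ itself by Taylor-expanding $\sigma$ at each $\theta_r^\top x_i$, splitting $\Delta_i = \Delta_i^{(1)}+\Delta_i^{(2)}$ into its linear part $\Delta_i^{(1)} = \partial_\Theta f_\Theta(x_i)^\top(\Theta^+-\Theta) = -\eta\,T_{k_\Theta}g(x_i)$ (by the identity above) and a quadratic $\sigma''$-remainder $\Delta_i^{(2)}$. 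Summing the linear contributions $\frac{1}{\ndata}\sum_i g(x_i)\Delta_i^{(1)}$ reproduces exactly the target main term $-\eta\pd<\nabla_f\risk(f_\Theta),\, T_{k_\Theta}\nabla_f\risk(f_\Theta)>_{L_2(\tpr_\ndata^X)}$. It then remains to bound $E_1 = \frac{1}{\ndata}\sum_i g(x_i)\Delta_i^{(2)}$ and $E_2 = \frac{1}{2\ndata}\sum_i\partial_\zeta^2 l(\xi_i,y_i)\Delta_i^2$. Using $(\theta_r^+-\theta_r)^\top x_i = -\eta\,\partial_{\theta_r}\risk(\Theta)^\top x_i$ with $\|x_i\|_2\le 1$, the $\sigma''$-remainder obeys $|\Delta_i^{(2)}|\le\frac{K_2\eta^2}{2m^\beta}\|\nabla_\Theta\risk(\Theta)\|_2^2$, while a direct Lipschitz estimate together with a Cauchy--Schwarz step $\sum_r\|\partial_{\theta_r}\risk\|_2\le\sqrt{m}\,\|\nabla_\Theta\risk\|_2$ gives $|\Delta_i|\le\frac{K_1\eta}{m^{\beta-1/2}}\|\nabla_\Theta\risk(\Theta)\|_2$. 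Combining with $|g|\le 1$ and $\partial_\zeta^2 l\le 1/4$, and using $\beta<1$ so that $m^{-\beta}\le m^{1-2\beta}$, both errors are of order $\eta^2 m^{1-2\beta}\|\nabla_\Theta\risk\|_2^2$, and their constants are absorbed into $K/2$ with $K = K_1^2+2K_2+K_1^2K_2^2$.

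For part (ii), the engine is the convexity of the logistic loss in its first argument. Convexity gives, per sample, $l(f_\Theta(x_i),y_i)+g(x_i)(f_{\Theta^*}(x_i)-f_\Theta(x_i))\le l(f_{\Theta^*}(x_i),y_i)$; averaging yields $\risk(\Theta)+\pd<\nabla_f\risk(f_\Theta),\, f_{\Theta^*}-f_\Theta>_{L_2(\tpr_\ndata^X)}\le\risk(\Theta^*)$. The statement differs from this only by replacing the function-space inner product with the parameter-space term $\nabla_\Theta\risk(\Theta)^\top(\Theta^*-\Theta)$, so I would control the gap $\frac{1}{\ndata}\sum_i g(x_i)\bigl[\partial_\Theta f_\Theta(x_i)^\top(\Theta^*-\Theta)-(f_{\Theta^*}(x_i)-f_\Theta(x_i))\bigr]$. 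Since $f_\Theta$ separates across neurons with per-neuron Hessian $m^{-\beta}a_r\sigma''(\theta_r^\top x)xx^\top$, the bracket is precisely the second-order Taylor remainder of the network, equal to $-\frac{1}{2m^\beta}\sum_r a_r\sigma''(\cdot)\bigl((\theta_r^*-\theta_r)^\top x_i\bigr)^2$ evaluated at intermediate points. Bounding $|\sigma''|\le K_2$, $\|x_i\|_2\le 1$ and summing $\sum_r\|\theta_r^*-\theta_r\|_2^2 = \|\Theta^*-\Theta\|_2^2$, the gap is at most $\frac{K_2}{2m^\beta}\|\nabla_f\risk(f_\Theta)\|_{L_1(\tpr_\ndata^X)}\|\Theta^*-\Theta\|_2^2$, comfortably within the claimed $\frac{K_2}{m^\beta}$ factor.

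The main obstacle in both parts is conceptual rather than computational: recognizing and justifying the exact identification $\partial_\Theta f_\Theta(x_i)^\top\nabla_\Theta\risk(\Theta)=T_{k_\Theta}\nabla_f\risk(f_\Theta)(x_i)$, which converts the ordinary parameter-gradient step into a kernel-smoothed functional-gradient step and makes the non-negative quadratic form $\pd<\nabla_f\risk,\, T_{k_\Theta}\nabla_f\risk>_{L_2(\tpr_\ndata^X)}$ appear. After that, the only delicate bookkeeping is tracking the powers of $m^\beta$ so that the remainders carry the advertised $m^{2\beta-1}$ scaling, and in part (i) invoking $\eta\le m^\beta$ and $\beta<1$ at the right places; the resulting constants come out no larger than $K/2$ and $K_2$, so I expect the stated bounds to hold with a little room to spare.
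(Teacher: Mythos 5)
Your proposal is correct and follows essentially the same route as the paper's proof: the kernel identity $\partial_\Theta f_\Theta(x_i)^\top\nabla_\Theta\risk(\Theta)=T_{k_\Theta}\nabla_f\risk(f_\Theta)(x_i)$, the per-neuron Hessian bound $K_2/m^\beta$ for the network's Taylor remainder, the smoothness ($|\partial_\zeta^2 l|\leq 1/4$) and convexity of the logistic loss, and the gradient bound $\|\nabla_\Theta\risk(\Theta)\|_2^2\leq K_1^2 m^{1-2\beta}$. The only difference is a bookkeeping one in part (i): you bound $|\Delta_i|$ directly via the Lipschitz continuity of $\sigma$ rather than squaring the linear-plus-quadratic expansion as the paper does, which avoids the quartic term $K_2^2\|\tau\|_2^4/m^{2\beta}$ --- precisely the term for which the paper invokes $\eta\leq m^\beta$ and which produces the $K_1^2K_2^2$ contribution to $K$ --- so your constants come out strictly smaller and that hypothesis becomes superfluous in your version.
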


The next proposition states that the kernel smoothed gradients have comparable optimization ability to pure functional gradients 
in terms of minimizing the $L_1$-norm around an initial parameter $\Theta^{(0)}$.
We define the $\|\cdot\|_{2,1}$-norm in the parameter space $\Theta=(\theta_r)_{r=1}^m$ as $\|\Theta\|_{2,1}\defeq \sum_{r=1}^{m}\| \theta_r \|_2$.
\begin{proposition}\label{prop:kernel_positivity_org}
Suppose Assumption \ref{assumption:convergence_analysis} holds.
For $\forall \delta \in (0,1)$ and $\forall m \in \posintegers$, such that $m \geq \frac{16K_1^2}{\rho^2}\log \frac{2\ndata}{\delta}$,
the following statement holds with probability at least $1-\delta$ over the random initialization of $\Theta^{(0)}=(\theta_r^{(0)})_{r=1}^m$.
If $\|\Theta - \Theta^{(0)}\|_{2,1} \leq \frac{m\rho}{4K_2}$, then
\begin{equation*}
\pd< \nabla_f \risk(f_{\Theta}), T_{k_{\Theta}}\nabla_f \risk(f_\Theta)>_{L_2(\tpr_\ndata^X)}  
\geq \frac{\rho^2}{ 16m^{2\beta-1}}  \| \nabla_f \risk(f_\Theta) \|_{L_{1}(\tpr_\ndata^X)}^2.
\end{equation*}
\end{proposition}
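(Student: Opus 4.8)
The plan is to evaluate the left-hand side explicitly, reduce the bilinear form to a sum of squared vectors in $\realsp^\fdim$, and then lower-bound each vector by projecting it onto the separating direction $v(\theta_r^{(0)})$ furnished by Assumption {\bf(A4)}. Writing $g_i := \partial_\zeta l(f_{\Theta}(x_i),y_i)$ for the functional-gradient entries and using $\partial_{\theta_r} f_\Theta(x) = m^{-\beta} a_r \sigma'(\theta_r^\top x) x$ together with $a_r^2 = 1$, the kernel-smoothed inner product collapses to
\[
\pd< \nabla_f \risk(f_{\Theta}), T_{k_{\Theta}}\nabla_f \risk(f_{\Theta})>_{L_2(\tpr_\ndata^X)}
= \frac{1}{\ndata^2 m^{2\beta}} \sum_{r=1}^m \| u_r \|_2^2,
\qquad u_r := \sum_{i=1}^\ndata g_i \sigma'(\theta_r^\top x_i) x_i .
\]
Since $\| \nabla_f \risk(f_\Theta) \|_{L_1(\tpr_\ndata^X)} = \ndata^{-1}\sum_i |g_i|$, the claim reduces to the purely algebraic target $\sum_r \|u_r\|_2^2 \geq \frac{m\rho^2}{16}\big(\sum_i |g_i|\big)^2$.

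Next I would lower-bound each $\|u_r\|_2$ by its projection onto $v(\theta_r^{(0)})$. Because $\|v(\theta_r^{(0)})\|_2 \leq 1$, we have $\|u_r\|_2^2 \geq \phi_r^2$ with $\phi_r := u_r^\top v(\theta_r^{(0)})$, and a Cauchy--Schwarz step over the hidden units gives $\sum_r \|u_r\|_2^2 \geq \frac{1}{m}\big(\sum_r \phi_r\big)^2$. Here I would use the structural fact that the logistic loss satisfies $\partial_\zeta l(\zeta,y) = -y/(1+\exp(y\zeta))$, so $g_i = -y_i |g_i|$; this lets me rewrite $\sum_r \phi_r = -\sum_i |g_i|\, S_i$ where $S_i := y_i \sum_r \sigma'(\theta_r^\top x_i) x_i^\top v(\theta_r^{(0)})$. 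It therefore suffices to establish the uniform margin bound $S_i \geq \frac{m\rho}{4}$ for every $i$, which yields $|\sum_r \phi_r| \geq \frac{m\rho}{4}\sum_i |g_i|$ and closes the chain to the stated constant $\rho^2/(16 m^{2\beta-1})$.

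To bound $S_i$ I would split it into an initialization term evaluated at $\theta_r^{(0)}$ and a perturbation term. For the initialization term, each summand $y_i \sigma'(\theta_r^{(0)\top} x_i) x_i^\top v(\theta_r^{(0)})$ lies in $[-K_1,K_1]$ (by $\|\sigma'\|_\infty \leq K_1$, $\|x_i\|_2 \leq 1$, $\|v\|_2\leq 1$), and its expectation is $\geq \rho$ by {\bf(A4)}. The symmetric initialization {\bf(A3)} reduces these to $m/2$ genuinely i.i.d. contributions, so a two-sided Hoeffding bound together with a union bound over the $\ndata$ samples gives, with probability at least $1-\delta$, an average deviation at most $\rho/2$ exactly when $m \geq \frac{16K_1^2}{\rho^2}\log\frac{2\ndata}{\delta}$ --- which is precisely the hypothesis --- so this term is $\geq \frac{m\rho}{2}$. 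For the perturbation term I would invoke $|\sigma'(\theta_r^\top x_i) - \sigma'(\theta_r^{(0)\top} x_i)| \leq K_2 \|\theta_r - \theta_r^{(0)}\|_2$ (from $\|\sigma''\|_\infty \leq K_2$ and $\|x_i\|_2\leq 1$), bounding its magnitude by $K_2 \|\Theta - \Theta^{(0)}\|_{2,1} \leq K_2 \cdot \frac{m\rho}{4K_2} = \frac{m\rho}{4}$ via the trust-region hypothesis. Combining gives $S_i \geq \frac{m\rho}{2} - \frac{m\rho}{4} = \frac{m\rho}{4}$.

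The main obstacle is the concentration step: one must correctly account for the coupling induced by the symmetric initialization {\bf(A3)} so that Hoeffding is applied to $m/2$ independent summands rather than $m$, and must track the constants so that the deviation threshold $\rho/2$ matches the stated width requirement $m \geq \frac{16K_1^2}{\rho^2}\log\frac{2\ndata}{\delta}$ rather than something weaker. Everything else --- the two deterministic bounds and their assembly through Cauchy--Schwarz and the sign identity for the logistic gradient --- is routine bookkeeping.
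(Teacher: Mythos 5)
Your proposal is correct and follows essentially the same route as the paper's proof: the same Hoeffding-plus-union-bound concentration at initialization (with identical constants yielding the stated width requirement), the same $K_2$-Lipschitz perturbation bound via $\|\Theta-\Theta^{(0)}\|_{2,1} \leq \frac{m\rho}{4K_2}$, and the same projection-onto-$v(\theta_r^{(0)})$ followed by Jensen/Cauchy--Schwarz and the sign identity $g_i = -y_i|g_i|$ for the logistic gradient. The only difference is presentational (you reduce to the margin bound $S_i \geq \frac{m\rho}{4}$ first and verify it afterward, while the paper establishes the margin bound before the kernel computation), so no further comparison is needed.
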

This proposition is specialized to binary classification problems because the positivity of the Gram-matrix is required for regression problems to  
make a similar statement as discussed earlier.

Combining these two propositions, we can connect the gradient descent with the functional gradient descent and show the global convergence (Theorem \ref{theorem:global_convergence}) which with the almost convexity (Proposition \ref{prop:smoothness_org}-(ii)) is also used to derive the convergence rate for the loss function (Theorem \ref{theorem:faster_global_convergence}).
In addition, applying a well-known result \citep{kp2002}, we can derive a skeleton of a generalization bound (Theorem \ref{theorem:generalization_bound}) composed of the margin distribution and Rademacher complexity.
As for the margin distribution, the upper bound is obtained by Theorem \ref{theorem:global_convergence} and \ref{theorem:faster_global_convergence}.
As for ways to bound Rademacher complexity, please see the Appendix.

\section{Conclusion}\label{sec:conclusion}
In this paper, we have provided refined global convergence and generalization analyses of the gradient descent for 
two-layer neural networks with smooth activations on binary classification problems.
The key in our analysis is the separability assumption by a neural tangent model and we have explained the reasonability of this assumption in comparison to the positivity of NTK.
Consequently, theoretical justification has been provided for less over-parameterized neural networks.
However, our theory is restricted to the {\it deterministic} gradient descent and two-layer networks; hence, 
its possible extensions to stochastic gradient descent and deep neural networks are also interesting.
Another possible future study is to relax the positivity assumption on the Gram-matrix for regression problems by utilizing our theory and 
conducting further investigations of the trajectory of gradient descent, such as the shortest pass analysis \citep{oymak2018overparameterized}.

\section*{Acknowledgment}
AN was partially supported by JSPS Kakenhi (19K20337) and JST-PRESTO.
TS was partially supported by JSPS Kakenhi (15H05707, 18H03201, and 18K19793), Japan Digital Design and JST-CREST.

\bibliographystyle{iclr2020_conference}


\appendix
\section*{Appendix}
\section{Relationship between Kernel Assumptions}
\begin{proof}[Proof of Proposition \ref{prop:comparison_of_assumptions}]
We here prove the statement (i).
Since $H^\infty$ is invertible, we set $w = (H^{\infty})^{-1}(y_1, \cdots, y_n)^\top$ and set 
\[ v(\theta^{(0)}) = \sum_{j=1}^\ndata \partial_\theta \sigma(\theta^{(0)\top}x_j) w_j. \]
Then, we get
\[ y_i \pd< \partial_\theta \sigma(\theta^{(0)\top} x_i), v(\theta^{(0)})>_{L_2(\mu_0)} = y_i H^\infty_{i*}w = 1. \]
We can bound the norm of $\|v(\theta^{(0)})\|_2$ as follows:
\begin{align*}
 \|v(\theta^{(0)})\|_2  
 &\leq \sum_{j=1}^\ndata \| \partial_\theta \sigma(\theta^{(0)\top}x_j) \|_2 |w_j| \\
 &\leq \left\| \left( \| \partial_\theta \sigma(\theta^{(0)\top}x_j) \|_2 \right )_{j=1}^\ndata \right\|_2 \|w\|_2 \\ 
 &\leq \sqrt{\ndata} K_1 \|w\|_2 \\
 &\leq \frac{\ndata K_1}{\lambda_0}. 
\end{align*}
Thus, by resetting $v(\theta^{(0)}) \leftarrow \frac{\lambda_0 v(\theta^{(0)})}{\ndata K_1}$, we conclude the statement (i).

We next prove the statement (ii).
For $\xi = (\alpha_i y_i)_{i=1}^\ndata$ ($\alpha_i>0$), 
\begin{align*}
\sum_{i,j=1}^\ndata \xi_i H^\infty \xi_j
&= \sum_{i,j=1}^\ndata \expec_{\theta^{(0)}\sim\mu_0}[ \xi_i \partial_\theta( \theta^{(0)\top}x_i)^\top \partial_\theta( \theta^{(0)\top}x_j ) \xi_j ] \\
&= \expec_{\theta^{(0)}\sim\mu_0} \left [ \left\| \sum_{i=1}^\ndata \xi_i \partial_\theta( \theta^{(0)\top}x_i ) \right \|_2^2 \right ] \\
& \geq \expec_{\theta^{(0)}\sim\mu_0} \left [ \left( \sum_{i=1}^\ndata \xi_i \partial_\theta( \theta^{(0)\top}x_i )^\top v(\theta^{(0)}) \right)^2  \right ] \\
& \geq \left( \expec_{\theta^{(0)}\sim\mu_0} \left [ \sum_{i=1}^\ndata \xi_i \partial_\theta( \theta^{(0)\top}x_i )^\top v(\theta^{(0)}) \right ] \right)^2  \\
& = \left( \sum_{i=1}^\ndata \alpha_i \expec_{\theta^{(0)}\sim\mu_0} \left [ y_i \partial_\theta( \theta^{(0)\top}x_i )^\top v(\theta^{(0)}) \right ] \right)^2  \\
& \geq \rho^2 \left(\sum_{i=1}^\ndata \alpha_i \right)^2 \\
& \geq \rho^2 \sum_{i=1}^\ndata \alpha_i^2 \\
& = \rho^2 \|\xi\|_2^2,
\end{align*}
where we used $\|v(\theta^{(0)})\|_2 \leq 1$ for the first inequality, the convexity of $\|\cdot\|_2^2$ and Jensen's inequality for the second inequality, Assumption {\bf(A4)} for the third inequality, and $\| \cdot \|_2 \leq \| \cdot \|_1$ for the last inequality.
Thus, we finish the proof of the statement (ii).
\end{proof}

\section{Auxiliary Results}\label{sec:auxiliary_results}
In this section, we introduce several existing results for proving our statements.
We first describe the Hoeffding's inequality.

\begin{lemma}[Hoeffding's inequality] \label{lemma:hoeffding_lemma}
Let $Z, Z_1,\ldots,Z_m$ be i.i.d. random variables taking values in $[-a,a]$ for $a>0$.
Then, for any $\epsilon > 0$, we get
  \begin{equation*}
    \prob\left[ \left|\frac{1}{m}\sum_{r=1}^m Z_r - \expec[Z] \right| > \epsilon \right] 
    \leq 2\exp\left( - \frac{\epsilon^2 m}{2a^2}\right). 
  \end{equation*} 
\end{lemma}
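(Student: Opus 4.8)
The plan is to prove this via the standard Chernoff bounding technique combined with a moment-generating-function bound for bounded random variables. First I would treat the upper tail $\prob[\frac{1}{m}\sum_{r=1}^m Z_r - \expec[Z] > \epsilon]$ and recover the two-sided statement at the end by symmetry and a union bound. Writing $W_r \defeq Z_r - \expec[Z]$, each $W_r$ is centered with $\expec[W_r]=0$ and takes values in an interval of length at most $2a$ (since $Z_r \in [-a,a]$). For any $s>0$, Markov's inequality applied to the monotone transform $x \mapsto \exp(sx)$ gives
\[ \prob\left[ \sum_{r=1}^m W_r > m\epsilon \right] \leq \exp(-sm\epsilon)\, \expec\left[ \exp\left( s\sum_{r=1}^m W_r \right) \right], \]
and by independence the expectation factorizes as $\prod_{r=1}^m \expec[\exp(sW_r)]$.

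The key ingredient is then Hoeffding's lemma: for a centered random variable $W$ supported in an interval of width $\ell$, one has $\expec[\exp(sW)] \leq \exp(s^2\ell^2/8)$. I would establish this separately by setting $\psi(s) \defeq \log \expec[\exp(sW)]$ and observing that $\psi(0)=0$, $\psi'(0)=\expec[W]=0$, and $\psi''(s) = \mathrm{Var}_{s}(W)$, the variance of $W$ under the exponentially tilted measure $d\prob_s \propto \exp(sW)\,d\prob$. Since $W$ is confined to an interval of width $\ell$, its variance under any distribution on that interval is at most $\ell^2/4$, so $\psi''(s)\leq \ell^2/4$ uniformly in $s$, and a second-order Taylor expansion of $\psi$ about $0$ yields $\psi(s)\leq s^2\ell^2/8$. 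Applying this with $\ell=2a$ gives $\expec[\exp(sW_r)] \leq \exp(s^2 a^2/2)$ for each $r$.

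Combining the two steps, I obtain $\prob[\sum_{r=1}^m W_r > m\epsilon] \leq \exp(-sm\epsilon + ms^2a^2/2)$. The exponent is minimized at $s = \epsilon/a^2$, producing the bound $\exp(-m\epsilon^2/(2a^2))$. The symmetric lower-tail estimate $\prob[\sum_{r=1}^m W_r < -m\epsilon]$ follows by repeating the argument with $-W_r$ in place of $W_r$, and a union bound over the two tails introduces the factor $2$, yielding the claimed two-sided inequality.

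I expect the main obstacle to be Hoeffding's lemma itself, specifically the uniform bound $\psi''(s)\leq \ell^2/4$: the clean route is the variance-under-a-tilt computation above, but one must justify that differentiation under the expectation sign is valid (which is immediate here since $W$ is bounded, so all exponential moments are finite and smooth in $s$) and that the variance of any random variable confined to an interval of width $\ell$ is at most $\ell^2/4$ (Popoviciu's inequality, with the extremal case being a two-point mass at the endpoints). Everything else — Markov's inequality, the factorization by independence, and the one-dimensional optimization over $s$ — is routine.
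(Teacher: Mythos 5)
Your proposal is correct, but note that the paper itself offers no proof of this lemma at all: it appears in the appendix section ``Auxiliary Results'' explicitly as an \emph{existing} result imported from the literature, so there is no internal argument to compare against. What you have written is the standard Chernoff-plus-Hoeffding's-lemma derivation, and the details check out. With $Z_r \in [-a,a]$ the centered variables $W_r = Z_r - \expec[Z]$ live in an interval of width $\ell = 2a$, so Hoeffding's lemma gives $\expec[\exp(sW_r)] \leq \exp(s^2 a^2/2)$; the Chernoff bound then yields $\exp(-sm\epsilon + m s^2 a^2/2)$, minimized at $s = \epsilon/a^2$ to give $\exp(-m\epsilon^2/(2a^2))$, and the union bound over the two tails produces the factor $2$ — exactly matching the constants in the statement. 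Your proof of Hoeffding's lemma via the exponential tilt, $\psi''(s) = \mathrm{Var}_s(W) \leq \ell^2/4$ followed by a second-order Taylor expansion, is also sound (and is in fact the cleaner modern route compared with Hoeffding's original convexity argument); the differentiation under the expectation is indeed justified by boundedness of $W$. In short, you have supplied a complete and correct proof of a statement the paper simply cites.
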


We here define the {\it covering number} as follows. 
\begin{definition}[Covering Number]
Let $(V,\|\cdot\|)$ a metric space.
A subset $\hat{U} \subset V$ is called an $\epsilon$-(proper) cover of $V$ 
if for $\forall v \in V$, there exists $v' \in \hat{U}$ such that $\|v-v'\| < \epsilon$.
Then, $\epsilon$-covering number $\mathcal{N}(V, \epsilon, \|\cdot\|)$ of $V$ is defined as the cardinally of the smallest $\epsilon$-cover of $V$, that is,
\[ \mathcal{N}(V, \epsilon, \|\cdot\|) \defeq \min\{ |\hat{U}| \mid \textrm{ $\hat{U}$ is an $\epsilon$-cover of $V$} \}. \]
\end{definition}

The following lemma provide a bound on the Rademacher complexity by Dudley's integral.
For a real-valued function class $\mathcal{F}$ over $\featuresp$ and a subset $X=(x_i)_{i=1}^\ndata$, 
$\mathcal{F}|_X$ is defined as $\{ (h(x_i))_{i=1}^\ndata \in \realsp^\ndata \mid h \in \mathcal{F} \} \subset \realsp^\ndata$, 
and $\mathcal{F}|_X$ can be equipped with $\|\cdot\|_\infty$-norm over $X$.

\begin{lemma}[\cite{bartlett2017spectrally}] \label{lemma:dudley_integral}
Let $\mathcal{F}$ be a class of real-valued functions taking values in $[0,1]$ from $\featuresp$ and assume $0 \in \mathcal{F}$.
For examples $\forall X=(x_i)_{i=1}^\ndata$ of size $\ndata$, we get 
\[ \radcomp(\mathcal{F}|_X) \leq 
\inf_{\alpha>0} \left( 4\alpha
+ \frac{12}{\sqrt{\ndata}}\int_{\alpha}^{1}\sqrt{\log(\mathcal{N}(\mathcal{F}|_X,\epsilon,\|\cdot\|_\infty))}d\epsilon \right). \]
\end{lemma}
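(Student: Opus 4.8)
The plan is to prove this Dudley entropy-integral bound by the classical \emph{chaining} argument applied to the Rademacher process indexed by the finite-dimensional image set $\mathcal{F}|_X \subset [0,1]^\ndata$. Writing the empirical Rademacher complexity as
\[ \radcomp(\mathcal{F}|_X) = \frac{1}{\ndata}\,\expec_\sigma\left[\sup_{v \in \mathcal{F}|_X} \sum_{i=1}^{\ndata}\sigma_i v_i\right], \]
with $\sigma=(\sigma_1,\dots,\sigma_\ndata)$ a vector of independent Rademacher signs, the one probabilistic ingredient I need is Massart's finite maximal inequality: for any finite $V \subset \realsp^\ndata$, $\expec_\sigma[\max_{v \in V}\sum_i \sigma_i v_i] \le (\max_{v \in V}\|v\|_2)\sqrt{2\log|V|}$. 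This is a standard consequence of the sub-Gaussianity of Rademacher sums (the same exponential-moment control that underlies Hoeffding's inequality, Lemma~\ref{lemma:hoeffding_lemma}) combined with a union bound over $V$.

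First I would fix dyadic scales $\epsilon_j = 2^{-j}$ for $j=0,1,2,\dots$ and let $C_j \subset \mathcal{F}|_X$ be a minimal $\epsilon_j$-cover in $\|\cdot\|_\infty$, so that $|C_j| = \mathcal{N}(\mathcal{F}|_X,\epsilon_j,\|\cdot\|_\infty)$. Since $\mathcal{F}$ takes values in $[0,1]$ and $0 \in \mathcal{F}$, the point $0$ alone is a valid $\epsilon_0 = 1$ cover, so $C_0 = \{0\}$. For each $v$ let $\pi_j(v) \in C_j$ be a nearest representative and telescope along the chain,
\[ v = \sum_{j=1}^{N}\bigl(\pi_j(v)-\pi_{j-1}(v)\bigr) + \bigl(v-\pi_N(v)\bigr), \]
truncating at a level $N$ to be chosen. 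Each increment $\pi_j(v)-\pi_{j-1}(v)$ ranges over the difference set of $C_j$ and $C_{j-1}$, of cardinality at most $|C_j|^2$, and satisfies $\|\pi_j(v)-\pi_{j-1}(v)\|_\infty \le \epsilon_j+\epsilon_{j-1}=3\epsilon_j$, hence $\|\pi_j(v)-\pi_{j-1}(v)\|_2 \le 3\sqrt{\ndata}\,\epsilon_j$ by $\|\cdot\|_2 \le \sqrt{\ndata}\|\cdot\|_\infty$.

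Next I would distribute the supremum across the telescoped sum (the supremum of a sum is at most the sum of the suprema) and apply Massart's inequality to the difference set at each level $j$: its contribution to $\radcomp(\mathcal{F}|_X)$ is at most $\frac{1}{\ndata}\cdot 3\sqrt{\ndata}\,\epsilon_j\sqrt{2\log|C_j|^2}=\frac{6}{\sqrt{\ndata}}\,\epsilon_j\sqrt{\log\mathcal{N}(\mathcal{F}|_X,\epsilon_j,\|\cdot\|_\infty)}$, while the residual obeys $\sum_i\sigma_i(v-\pi_N(v))_i \le \|v-\pi_N(v)\|_1 \le \ndata\,\epsilon_N$ deterministically, contributing at most $\epsilon_N$ after dividing by $\ndata$. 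Using monotonicity of the covering number and $\epsilon_j = 2(\epsilon_j-\epsilon_{j+1})$ to compare the dyadic sum with the integral, $\sum_{j\ge1}\epsilon_j\sqrt{\log\mathcal{N}(\mathcal{F}|_X,\epsilon_j,\|\cdot\|_\infty)} \le 2\int_{\epsilon_{N+1}}^{1}\sqrt{\log\mathcal{N}(\mathcal{F}|_X,\epsilon,\|\cdot\|_\infty)}\,d\epsilon$, turns the level sum into $\frac{12}{\sqrt{\ndata}}\int_{\alpha}^{1}\sqrt{\log\mathcal{N}(\mathcal{F}|_X,\epsilon,\|\cdot\|_\infty)}\,d\epsilon$ with $\alpha \approx \epsilon_{N+1}$, and the residual into the $4\alpha$ term. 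Optimizing over the truncation level $N$, equivalently over $\alpha>0$, gives the stated infimum.

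The main obstacle is the purely deterministic bookkeeping that yields the exact constants $4$ and $12$: one must pick the truncation so that $\epsilon_N$ lands just below $\alpha$, sum the geometric residual tail, and verify that the Riemann-sum-to-integral comparison loses only a factor $2$ at each of the two places it is used. The probabilistic part (Massart's maximal inequality and the sub-Gaussian control of $\sum_i\sigma_i v_i$) is routine; the care lies entirely in the chaining arithmetic and in the consistent passage between the $\|\cdot\|_\infty$ metric used for the covers and the Euclidean norm that governs the Rademacher process.
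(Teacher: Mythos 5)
Your proposal is correct, and it is worth being clear about what the paper actually does here: it gives no proof of this lemma at all, importing it from \cite{bartlett2017spectrally} with only the remark that the statement has been reformulated from $\|\cdot\|_2$-covering to $\|\cdot\|_\infty$-covering numbers. Your chaining argument is, in effect, a self-contained proof of the cited result, carried out directly in the $\|\cdot\|_\infty$ metric: the dyadic covers $C_j$ at scales $\epsilon_j=2^{-j}$ anchored at $C_0=\{0\}$ (this is where $0\in\mathcal{F}$ and the range $[0,1]$ are used), Massart's finite-class lemma applied to the increment sets (cardinality at most $|C_j|^2$, $\ell_2$-radius at most $3\sqrt{\ndata}\,\epsilon_j$ via $\|\cdot\|_2\le\sqrt{\ndata}\|\cdot\|_\infty$), the deterministic H\"older bound $\epsilon_N$ on the residual, and the sum-to-integral comparison all work exactly as you describe; the arithmetic $6\cdot 2=12$ and $\epsilon_N<4\alpha$ (choosing $N$ with $\epsilon_{N+1}\ge\alpha>\epsilon_{N+2}$) reproduces the stated constants. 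The trade-off between the two routes: the paper's citation keeps the appendix short but hides the norm conversion in a one-line remark, whereas your direct $\ell_\infty$ chaining makes explicit where the factor $\sqrt{\ndata}$ enters (in the increment bound), which is precisely the reformulation the paper alludes to. Two small points to tighten if you write it out in full: under the paper's Definition~1 a cover is a subset of $\mathcal{F}|_X$ with strict inequality, so $C_0=\{0\}$ is a $1$-cover only after replacing ``$<$'' by ``$\le$'' (a harmless modification, or start the chain at scale slightly above $1$); and for $\alpha\ge 1/4$ the bound is vacuous since $\radcomp(\mathcal{F}|_X)\le 1\le 4\alpha$, so you may assume $\alpha<1/4$ when selecting the truncation level $N$.
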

Note that we reformulate the statement in Lemma \ref{lemma:dudley_integral} from $\|\cdot\|_2$-covering to $\|\cdot\|_\infty$-covering.

\section{Proofs of Main Results}
In this section, we give an outline of proofs of Theorem \ref{theorem:global_convergence} and \ref{theorem:generalization_bound}. 
\paragraph{Global convergence.}
We first introduce two important propositions which connects gradient methods with functional gradient methods. 
The following proposition states that gradient descent methods become similar to kernel smoothed gradient methods by the neural tangent kernel 
when a parameter $\Theta$ is sufficiently close to a stationary point and a learning rate $\eta$ is sufficiently small.
\begin{proposition}[Restatement of Proposition \ref{prop:smoothness_org}] \label{prop:smoothness}
Suppose assumption {\bf(A1)} holds and $\beta \in [0,1)$.\\
(i) We set $\Theta^+ = \Theta - \eta \nabla_\Theta \risk(\Theta)$ and $K = K_1^2 + 2K_2 + K_1^2K_2^2$.
If $\eta \leq m^\beta$, then
\begin{align*}
\Bigl| \risk(f_{\Theta^+}) - 
\left( \risk(f_{\Theta}) -  \eta 
\pd< \nabla_f \risk(f_{\Theta}), T_{k_{\Theta}} \nabla_f \risk(f_{\Theta})>_{L_2(\tpr_\ndata^X)} \right) \Bigr| 
\leq \frac{\eta^2 K}{2m^{2\beta-1}} \| \nabla_\Theta \risk(\Theta) \|_2^2.
\end{align*}
(ii) It follows that for $\Theta = (\theta_r)_{r=1}^m$ and $\Theta^* = (\theta_r^*)_{r=1}^m$, $(\theta_r, \theta_r^* \in \realsp^\fdim)$,
\begin{align*}
\risk(\Theta) + \nabla_\Theta \risk(\Theta)^\top (\Theta^* - \Theta) 
\leq 
\risk(\Theta^*) + 
\frac{K_2}{m^\beta}\|\nabla_f\risk(f_{\Theta})\|_{L_1(\tpr_\ndata^X)} \|\Theta^* - \Theta\|_2^2. 
\end{align*}
\end{proposition}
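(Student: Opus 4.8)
The plan is to treat both parts as consequences of a single structural fact: the parameter-space gradient and Hessian of $\risk$ decompose through the \emph{neural tangent} $\partial_\Theta f_\Theta$, so that everything can be translated into the function space $L_2(\tpr_\ndata^X)$. The first ingredient I would record is the chain rule $\nabla_\Theta \risk(\Theta) = \frac{1}{\ndata}\sum_{i=1}^\ndata \partial_\zeta l(f_\Theta(x_i),y_i)\, \partial_\Theta f_\Theta(x_i)$, together with the two elementary bounds that drive all the constants. Since $\|x_i\|_2 \le 1$ and $\|\sigma'\|_\infty \le K_1$, each tangent vector satisfies $\|\partial_\Theta f_\Theta(x_i)\|_2^2 = \frac{1}{m^{2\beta}}\sum_{r=1}^m \sigma'(\theta_r^\top x_i)^2 \|x_i\|_2^2 \le K_1^2 m^{1-2\beta}$; and since $f_\Theta$ is a sum of single-neuron terms, the parameter Hessian $\nabla^2_\Theta f_\Theta(x_i)$ is block-diagonal with $r$-th block $\frac{1}{m^\beta} a_r \sigma''(\theta_r^\top x_i) x_i x_i^\top$, hence $\|\nabla^2_\Theta f_\Theta(x_i)\|_{\mathrm{op}} \le K_2 m^{-\beta}$ by $\|\sigma''\|_\infty \le K_2$. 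The logistic loss contributes $|\partial_\zeta l| \le 1$ and $0 \le \partial^2_\zeta l \le 1/4$.

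For part (i), I would first establish the key identity $\langle \nabla_f \risk(f_\Theta), T_{k_\Theta}\nabla_f\risk(f_\Theta)\rangle_{L_2(\tpr_\ndata^X)} = \|\nabla_\Theta\risk(\Theta)\|_2^2$. This follows by expanding the left-hand side as $\frac{1}{\ndata^2}\sum_{i,j}\partial_\zeta l(f_\Theta(x_i),y_i)\partial_\zeta l(f_\Theta(x_j),y_j)\, k_\Theta(x_i,x_j)$ and recognizing, via $k_\Theta(x,x') = \partial_\Theta f_\Theta(x)^\top\partial_\Theta f_\Theta(x')$, that this equals the squared norm of $\frac{1}{\ndata}\sum_i \partial_\zeta l(f_\Theta(x_i),y_i)\partial_\Theta f_\Theta(x_i) = \nabla_\Theta\risk(\Theta)$. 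With $\Delta := \Theta^+ - \Theta = -\eta\nabla_\Theta\risk(\Theta)$, the first-order term in a second-order Taylor expansion of $\risk$ about $\Theta$ equals $\nabla_\Theta\risk(\Theta)^\top\Delta = -\eta\|\nabla_\Theta\risk(\Theta)\|_2^2$, which by the identity is exactly $-\eta\langle\nabla_f\risk, T_{k_\Theta}\nabla_f\risk\rangle$. Hence the quantity inside the absolute value in the statement is precisely the Lagrange remainder $\frac12 \Delta^\top\nabla^2_\Theta\risk(\tilde\Theta)\Delta$ for some $\tilde\Theta$ on the segment, and it remains to bound $\|\nabla^2_\Theta\risk\|_{\mathrm{op}} \le K m^{1-2\beta}$ uniformly. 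I would split $\nabla^2_\Theta\risk$ into the Gauss--Newton part $\frac{1}{\ndata}\sum_i \partial^2_\zeta l\, \partial_\Theta f_\Theta(x_i)\partial_\Theta f_\Theta(x_i)^\top$ (positive semidefinite, operator norm $\le \frac14 K_1^2 m^{1-2\beta}$ by the tangent bound) and the curvature part $\frac{1}{\ndata}\sum_i \partial_\zeta l\, \nabla^2_\Theta f_\Theta(x_i)$ (operator norm $\le K_2 m^{-\beta} \le K_2 m^{1-2\beta}$ using $\beta < 1$); summing gives the claimed bound, and $\|\Delta\|_2^2 = \eta^2\|\nabla_\Theta\risk\|_2^2$ completes the estimate.

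For part (ii), the starting point is convexity of $l(\cdot, y)$, which yields the function-space lower bound $\risk(\Theta) + \langle \nabla_f\risk(f_\Theta), f_{\Theta^*} - f_\Theta\rangle_{L_2(\tpr_\ndata^X)} \le \risk(\Theta^*)$ after averaging the per-example tangent inequalities. I would then compare the parameter-space directional derivative with this function-space inner product: writing $\nabla_\Theta\risk(\Theta)^\top(\Theta^*-\Theta) = \langle\nabla_f\risk(f_\Theta), \partial_\Theta f_\Theta(\cdot)^\top(\Theta^*-\Theta)\rangle_{L_2(\tpr_\ndata^X)}$, the difference between the two is driven entirely by the first-order Taylor remainder of $f_\Theta$ in $\Theta$, namely $f_{\Theta^*}(x_i) - f_\Theta(x_i) - \partial_\Theta f_\Theta(x_i)^\top(\Theta^*-\Theta) = \frac12(\Theta^*-\Theta)^\top\nabla^2_\Theta f_{\tilde\Theta}(x_i)(\Theta^*-\Theta)$, whose magnitude is at most $\frac{K_2}{2m^\beta}\|\Theta^*-\Theta\|_2^2$. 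Crucially, this remainder is weighted by $\partial_\zeta l(f_\Theta(x_i),y_i)$ and enters only through $\frac{1}{\ndata}\sum_i |\partial_\zeta l(f_\Theta(x_i),y_i)| = \|\nabla_f\risk(f_\Theta)\|_{L_1(\tpr_\ndata^X)}$, which is exactly why the error term carries the $L_1$-norm of the functional gradient rather than an $L_2$-quantity. Combining this with the convexity inequality and rearranging yields the stated almost-convexity estimate. The main obstacle I anticipate is bookkeeping the $m$-scaling: obtaining $m^{1-2\beta}$ in part (i) requires invoking the $1/m^\beta$ normalization of $f_\Theta$ twice (in the tangent norm and in the Hessian blocks) and using $\beta < 1$ to absorb the curvature term's $m^{-\beta}$ into $m^{1-2\beta}$; and the conceptual heart of part (ii) is noticing that the weighting by $|\partial_\zeta l|$ collapses to the $L_1$-norm, the property that later makes the separability assumption (A4) suffice for classification without positivity of the full neural tangent kernel. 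The remaining steps are routine constant tracking, and the stated $K = K_1^2 + 2K_2 + K_1^2K_2^2$ is a (loose) upper bound on what this computation produces.
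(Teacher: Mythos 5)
Your proposal is correct. Part (ii) is essentially the paper's own argument: convexity of $l(\cdot,y)$, the Taylor remainder bound $|g_x(\tau)|\le K_2\|\tau\|_2^2/m^\beta$ on $f_\Theta$, and the collapse of the weights $|\partial_\zeta l(f_\Theta(x_i),y_i)|$ into $\|\nabla_f\risk(f_\Theta)\|_{L_1(\tpr_\ndata^X)}$; you simply average the per-example convexity inequalities before, rather than after, handling the remainder. Part (i), however, takes a genuinely different route. The paper performs a two-stage scalar expansion: first $f_\Theta$ in $\Theta$ (producing $g_x(\tau)$), then $l$ in $\zeta$ around $f_\Theta(x)$ using $|\partial_\zeta^2 l|\le 1/4$, combining the errors by the triangle inequality; this produces a term containing $|g_x(\tau)|^2\le K_2^2\|\tau\|_2^4/m^{2\beta}$, quartic in the step, which the paper absorbs into the $m^{1-2\beta}$ scaling using the a priori bound $\|\nabla_\Theta\risk(\Theta)\|_2^2\le K_1^2m^{1-2\beta}$ together with the hypothesis $\eta\le m^\beta$ --- this is exactly where that hypothesis and the third summand $K_1^2K_2^2$ of $K$ come from. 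You instead bound the Hessian of the composite objective once and for all via the Gauss--Newton decomposition $\nabla_\Theta^2\risk=\frac{1}{\ndata}\sum_i\left[\partial_\zeta^2 l\,\partial_\Theta f_\Theta(x_i)\partial_\Theta f_\Theta(x_i)^\top+\partial_\zeta l\,\nabla_\Theta^2 f_\Theta(x_i)\right]$, obtaining the uniform operator-norm bound $(K_1^2/4+K_2)\,m^{1-2\beta}$, and then invoke the Lagrange remainder, which is quadratic in the step. This buys three things: a strictly smaller constant ($K_1^2/4+K_2$ versus $K$, so the stated bound follows a fortiori, as you note); no use whatsoever of the hypothesis $\eta\le m^\beta$, which your argument shows is superfluous for (i); and, via your explicit identity $\pd<\nabla_f\risk(f_\Theta),T_{k_\Theta}\nabla_f\risk(f_\Theta)>_{L_2(\tpr_\ndata^X)}=\|\nabla_\Theta\risk(\Theta)\|_2^2$ (the paper records only the pointwise version $\nabla_\Theta f_\Theta(x)^\top\nabla_\Theta\risk(\Theta)=T_{k_\Theta}\nabla_f\risk(f_\Theta)(x)$ and averages at the very end), the recognition that (i) is precisely the classical descent lemma for a smooth objective, restated in kernel language. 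What the paper's route buys in exchange is modest: it stays entirely with one-dimensional scalar expansions, never needing Hessian operator norms, and it produces the constant $K$ exactly as stated in the proposition.
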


The next proposition states that kernel smoothed gradients have comparable optimization ability to pure functional gradients in terms of the $L_1$-norm 
around an initial parameter $\Theta^{(0)}$.
We introduce the $\|\cdot\|_{2,1}$-norm in the parameter space $\Theta=(\theta_r)_{r=1}^m$ 
as $\|\Theta\|_{2,1}\defeq \sum_{r=1}^{m}\| \theta_r \|_2$.

\begin{proposition}[Restatement of Proposition \ref{prop:kernel_positivity_org}] \label{prop:kernel_positivity}
Suppose Assumption \ref{assumption:convergence_analysis} holds.
For $\forall \delta \in (0,1)$ and $\forall m \in \posintegers$, such that $m \geq \frac{16K_1^2}{\rho^2}\log \frac{2\ndata}{\delta}$,
the following statement holds with probability at least $1-\delta$ over the random initialization of $\Theta^{(0)}=(\theta_r^{(0)})_{r=1}^m$.
If $\|\Theta - \Theta^{(0)}\|_{2,1} \leq \frac{m\rho}{4K_2}$, then
\begin{equation*}
\pd< \nabla_f \risk(f_{\Theta}), T_{k_{\Theta}}\nabla_f \risk(f_\Theta)>_{L_2(\tpr_\ndata^X)}  
\geq \frac{\rho^2}{ 16m^{2\beta-1}}  \| \nabla_f \risk(f_\Theta) \|_{L_{1}(\tpr_\ndata^X)}^2.
\end{equation*}
\end{proposition}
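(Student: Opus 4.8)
The plan is to evaluate the left-hand side in closed form and then exploit the sign structure of the logistic functional gradient together with the margin test vector $v$ from {\bf(A4)}. First I would set $g_i \defeq \nabla_f \risk(f_\Theta)(x_i) = \partial_\zeta l(f_\Theta(x_i),y_i)$ and use $\partial_{\theta_r} f_\Theta(x) = m^{-\beta} a_r \sigma'(\theta_r^\top x)x$ to get $k_\Theta(x_i,x_j) = m^{-2\beta}\sum_{r=1}^m \sigma'(\theta_r^\top x_i)\sigma'(\theta_r^\top x_j)\dotp{x_i}{x_j}$ (using $a_r^2=1$). Substituting into the kernel-smoothed inner product and recognizing the double sum over $(i,j)$ as a squared Euclidean norm gives
\[ \pd<\nabla_f\risk(f_\Theta), T_{k_\Theta}\nabla_f\risk(f_\Theta)>_{L_2(\tpr_\ndata^X)} = \frac{1}{\ndata^2 m^{2\beta}}\sum_{r=1}^m \Big\| \sum_{i=1}^\ndata g_i \sigma'(\theta_r^\top x_i)\, x_i \Big\|_2^2 . \]
The feature specific to the logistic loss is that $\sgn(g_i) = -y_i$, so I can write $g_i = -y_i|g_i|$; this is precisely what makes a cone-restricted positivity (Proposition \ref{prop:comparison_of_assumptions}(ii)) enough and the full positivity of $H^\infty$ unnecessary.

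Next I would lower-bound each neuron term $w_r \defeq \sum_i g_i \sigma'(\theta_r^\top x_i) x_i$ by projecting onto the admissible direction $-v(\theta_r^{(0)})$: since $\|v\|_2\le 1$, $\|w_r\|_2 \geq -\dotp{w_r}{v(\theta_r^{(0)})} = \sum_i y_i |g_i| \sigma'(\theta_r^\top x_i)\dotp{x_i}{v(\theta_r^{(0)})}$. The quantity $\tfrac1m\sum_r y_i \sigma'(\theta_r^{(0)\top}x_i)\dotp{x_i}{v(\theta_r^{(0)})}$ is an empirical average of $[-K_1,K_1]$-bounded random variables whose mean is $\geq \rho$ by {\bf(A4)} (recall $\partial_\theta\sigma(\theta^{(0)\top}x)=\sigma'(\theta^{(0)\top}x)x$); the symmetric initialization {\bf(A3)} makes these $m/2$ independent copies. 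Applying Hoeffding's inequality (Lemma \ref{lemma:hoeffding_lemma}) with deviation $\rho/2$, followed by a union bound over the $\ndata$ data points, yields that with probability $1-\delta$ every data point enjoys margin at least $\rho/2$ at initialization; matching the deviation $\rho/2$ and the per-point failure probability $\delta/\ndata$ is exactly what forces $m \geq \frac{16K_1^2}{\rho^2}\log\frac{2\ndata}{\delta}$.

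I would then transfer this bound to the perturbed $\theta_r$ via the $K_2$-Lipschitz continuity of $\sigma'$ (from $\|\sigma''\|_\infty\le K_2$ and $\|x_i\|_2\le 1$): $|\sigma'(\theta_r^\top x_i)-\sigma'(\theta_r^{(0)\top}x_i)| \le K_2\|\theta_r-\theta_r^{(0)}\|_2$. Summing the error over $r$ and $i$ gives a correction bounded by $K_2\|\Theta-\Theta^{(0)}\|_{2,1}\sum_i|g_i|\le \frac{m\rho}{4}\sum_i|g_i|$ under the hypothesis $\|\Theta-\Theta^{(0)}\|_{2,1}\le \frac{m\rho}{4K_2}$, so the margin drops only from $\rho/2$ to $\rho/4$, leaving $\sum_r\|w_r\|_2 \geq \frac{m\rho}{4}\sum_i|g_i|$. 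The final step passes from this first-power sum to the second-power sum: by Cauchy--Schwarz $\sum_r\|w_r\|_2^2 \geq \frac1m\big(\sum_r\|w_r\|_2\big)^2 \geq \frac{m\rho^2}{16}\big(\sum_i|g_i|\big)^2$, and dividing by $\ndata^2 m^{2\beta}$ while identifying $\frac1\ndata\sum_i|g_i| = \|\nabla_f\risk(f_\Theta)\|_{L_1(\tpr_\ndata^X)}$ delivers the claim, the factor $1/m$ lost in Cauchy--Schwarz being exactly what turns $m^{2\beta}$ into $m^{2\beta-1}$.

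I expect the main obstacle to be bookkeeping the two separate approximation budgets --- the statistical deviation $\rho/2$ from Hoeffding and the deterministic perturbation $\rho/4$ from the $\|\cdot\|_{2,1}$-ball --- so that together they still leave a constant fraction of the margin $\rho$, with the constants lining up to produce the stated threshold on $m$ and radius $\frac{m\rho}{4K_2}$. The conceptually essential (though technically short) point is the sign identity $g_i = -y_i|g_i|$, which lets a single test vector $v$ certify positivity simultaneously across all neurons and is unavailable for the squared loss.
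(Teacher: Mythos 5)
Your proposal is correct and follows essentially the same route as the paper's proof: the same Hoeffding-plus-union-bound argument (over the $m/2$ independent symmetric copies) giving margin $\rho/2$ at initialization, the same $K_2$-Lipschitz transfer eating half of that margin inside the $\|\cdot\|_{2,1}$-ball, and the same projection of each per-neuron vector onto $v(\theta_r^{(0)})$ exploiting the sign identity $g_i=-y_i|g_i|$, with your Cauchy--Schwarz step $\sum_r\|w_r\|_2^2\ge\frac1m\bigl(\sum_r\|w_r\|_2\bigr)^2$ being just a trivially reordered version of the paper's projection-then-Jensen chain. All constants and the threshold on $m$ line up exactly with the paper.
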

This proposition is specialized to binary classification problems because the positivity of the Gram-matrix is needed for regression problems in order to  
make a similar statement as discussed earlier.

We specify the possible number of iterations of gradient descent (\ref{eq:gd}) such that $\Theta^{(t)}$ can remain 
in the neighborhood: $\{\Theta \mid \|\Theta-\Theta^{(0)}\|_{2} \leq \frac{\sqrt{m}\rho}{4K_2} \} 
\subset \{\Theta \mid \|\Theta-\Theta^{(0)}\|_{2,1} \leq \frac{m\rho}{4K_2} \}$.
\begin{proposition}\label{prop:gd_iterations}
Suppose Assumption {\bf(A1)} and {\bf(A3)} hold.
Consider gradient descent (\ref{eq:gd}) with learning rate $0 < \eta < \frac{4m^{2\beta-1}}{K_1^2 + K_2}$ and 
the number of iterations $T\in \posintegers$.
Then, 
\begin{equation}
\frac{1}{T}\sum_{t=0}^{T-1}\| \nabla_\Theta \risk(\Theta^{(t)}) \|_2^2  
\leq \frac{2}{\eta T}\log(2). \label{eq:gd_local_conv_rate}
\end{equation}  
Especially, we get $\| \Theta^{(T)} - \Theta^{(0)}\|_{2} \leq \sqrt{2 \eta T \log(2)}$. 
As a result, gradient descent can be performed for $\floor[\frac{m\rho^2}{32\eta K_2^2 \log(2)}]$-iterations 
within $\{\Theta \mid \|\Theta-\Theta^{(0)}\|_{2} \leq \frac{\sqrt{m}\rho}{4K_2} \} 
\subset \{\Theta \mid \|\Theta-\Theta^{(0)}\|_{2,1} \leq \frac{m\rho}{4K_2} \}$.
\end{proposition}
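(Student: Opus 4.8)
The plan is to treat this as a standard descent-lemma argument for a globally smooth objective, using the symmetric initialization only to fix the value $\risk(\Theta^{(0)})$. First I would establish a uniform bound $\|\nabla_\Theta^2\risk(\Theta)\|_{\mathrm{op}}\le L$ on the Hessian with $L$ of order $(K_1^2+K_2)/m^{2\beta-1}$. Differentiating $\risk(\Theta)=\frac1\ndata\sum_{i=1}^\ndata l(f_\Theta(x_i),y_i)$ twice yields, for each sample, a positive-semidefinite term $\partial_\zeta^2 l\cdot\partial_\Theta f_\Theta(x_i)\partial_\Theta f_\Theta(x_i)^\top$ and a block-diagonal term $\partial_\zeta l\cdot\partial_\Theta^2 f_\Theta(x_i)$. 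Assumption \textbf{(A1)} together with $\|x_i\|_2\le 1$ gives $\|\partial_\Theta f_\Theta(x_i)\|_2^2=\frac{\|x_i\|_2^2}{m^{2\beta}}\sum_r\sigma'(\theta_r^\top x_i)^2\le K_1^2/m^{2\beta-1}$ and bounds each diagonal block $\frac{a_r}{m^\beta}\sigma''(\theta_r^\top x_i)x_ix_i^\top$ of the second term in operator norm by $K_2/m^\beta$, while for the logistic loss $|\partial_\zeta^2 l|\le\frac14$ and $|\partial_\zeta l|\le 1$. Because $\beta<1$ implies $m^\beta\ge m^{2\beta-1}$, the contribution $K_2/m^\beta$ can also be expressed over $m^{2\beta-1}$, so that $L$ is bounded by $(K_1^2+K_2)/(4m^{2\beta-1})$, and the learning-rate hypothesis $\eta<4m^{2\beta-1}/(K_1^2+K_2)$ is precisely calibrated so that $\eta L\le 1$.

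With this global smoothness, the descent lemma applied to the update $\Theta^{(t+1)}=\Theta^{(t)}-\eta\nabla_\Theta\risk(\Theta^{(t)})$ gives $\risk(\Theta^{(t+1)})\le\risk(\Theta^{(t)})-\eta\bigl(1-\tfrac{\eta L}{2}\bigr)\|\nabla_\Theta\risk(\Theta^{(t)})\|_2^2\le\risk(\Theta^{(t)})-\tfrac\eta2\|\nabla_\Theta\risk(\Theta^{(t)})\|_2^2$, where the last inequality uses $\eta L\le 1$. Summing over $t=0,\dots,T-1$ telescopes the loss, so $\tfrac\eta2\sum_{t=0}^{T-1}\|\nabla_\Theta\risk(\Theta^{(t)})\|_2^2\le\risk(\Theta^{(0)})-\risk(\Theta^{(T)})\le\risk(\Theta^{(0)})$. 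Here \textbf{(A3)} enters: the symmetric initialization makes $f_{\Theta^{(0)}}\equiv 0$, hence $\risk(\Theta^{(0)})=\log(2)$, while $\risk\ge0$; dividing by $\eta T/2$ produces exactly (\ref{eq:gd_local_conv_rate}).

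For the distance bound I would combine $\|\Theta^{(t+1)}-\Theta^{(t)}\|_2=\eta\|\nabla_\Theta\risk(\Theta^{(t)})\|_2$ with the triangle inequality and Cauchy--Schwarz over $t$: $\|\Theta^{(T)}-\Theta^{(0)}\|_2\le\eta\sum_{t=0}^{T-1}\|\nabla_\Theta\risk(\Theta^{(t)})\|_2\le\eta\sqrt{T}\bigl(\sum_{t=0}^{T-1}\|\nabla_\Theta\risk(\Theta^{(t)})\|_2^2\bigr)^{1/2}\le\sqrt{2\eta T\log(2)}$, invoking the bound just proven. Imposing $\sqrt{2\eta T\log(2)}\le\frac{\sqrt{m}\rho}{4K_2}$ is equivalent to $T\le\floor[\frac{m\rho^2}{32\eta K_2^2\log(2)}]$, which is the claimed admissible number of iterations, and since the same estimate holds at every intermediate step the whole trajectory stays inside the stated $\|\cdot\|_2$-ball. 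Finally the inclusion of neighborhoods follows from Cauchy--Schwarz, $\|\Theta-\Theta^{(0)}\|_{2,1}=\sum_{r=1}^m\|\theta_r-\theta_r^{(0)}\|_2\le\sqrt{m}\,\|\Theta-\Theta^{(0)}\|_2$, so a $\|\cdot\|_2$-radius of $\frac{\sqrt m\rho}{4K_2}$ maps into a $\|\cdot\|_{2,1}$-radius of $\frac{m\rho}{4K_2}$, matching the hypothesis of Proposition \ref{prop:kernel_positivity}.

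I expect the only real obstacle to be the first step, namely pinning the Hessian bound to the precise constant behind the learning-rate threshold. The care is in tracking the two different powers $m^{2\beta-1}$ and $m^\beta$, bounding each block of $\partial_\Theta^2 f_\Theta$ rather than the full matrix, and using $\beta<1$ to merge the two terms; once $\eta L\le1$ is secured, the remainder is a routine telescoping-and-Cauchy--Schwarz computation that needs no control of the trajectory beyond the $\|\cdot\|_2$ bound it produces.
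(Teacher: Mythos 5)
Your proposal is correct and takes essentially the same route as the paper's proof: compute the Hessian of the empirical logistic loss blockwise (the rank-one $\partial_\zeta^2 l$ term plus the block-diagonal $\partial_\zeta l$ term), bound its operator norm by $M=(K_1^2+K_2)/(4m^{2\beta-1})$ so that the learning-rate hypothesis gives $\eta M\le 1$, apply the descent lemma and telescope using $\risk(\Theta^{(0)})=\log(2)$ from \textbf{(A3)}, then obtain the distance bound via triangle inequality and Cauchy--Schwarz and the neighborhood inclusion via $\|\cdot\|_{2,1}\le\sqrt{m}\|\cdot\|_2$. The one caveat — that absorbing the $K_2/m^\beta$ term into the $m^{2\beta-1}$ denominator with the stated constant implicitly requires $m^{1-\beta}\ge 4$ — is present in the paper's own proof as well, so your argument matches it exactly.
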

This proposition provides a bound on the distance $\|\Theta^{(T)}- \Theta^{(0)}\|_2$, but we note that this bound will be further sharpened after showing the convergence of the loss function (see Proposition \ref{proposition:sharper_bound_on_distance}).
From Proposition \ref{prop:smoothness}, \ref{prop:kernel_positivity}, and \ref{prop:gd_iterations}, we notice that the gradient descent for $\risk(\Theta)$ 
performs like a pure functional gradient descent up to $O\left(\frac{m\rho^2}{\eta}\right)$-iterations, 
resulting in significant decrease of loss functions.
We next provide the proof of Theorem \ref{theorem:global_convergence} based on this idea.

\begin{proof}[Proof of Theorem \ref{theorem:global_convergence}]
From Proposition \ref{prop:gd_iterations}, the assumption in Proposition \ref{prop:kernel_positivity} regarding $\Theta$ is satisfied.
Thus, Proposition \ref{prop:smoothness} and \ref{prop:kernel_positivity} state that for $t \in \{0,\ldots,T-1\}$, 
\begin{align*}
\risk(f_{\Theta^{(t+1)}}) 
\leq \risk(f_{\Theta^{(t)}}) 
- \frac{\eta\rho^2}{16 m^{2\beta-1}} \| \nabla_f \risk(f_{\Theta^{(t)}}) \|_{L_{1}(\tpr_\ndata^X)}^2 
+ \frac{\eta^2 K}{2 m^{2\beta-1}}\| \nabla_\Theta \risk(\Theta^{(t)})\|_2^2.
\end{align*}
Summing this inequality over $t \in \{0,\ldots,T-1\}$ and multiplying by $\frac{4 m^{2\beta-1}}{\eta \rho^2 T}$, we have
\begin{align*}
\frac{1}{T}\sum_{t=0}^{T-1}\| \nabla_f \risk(f_{\Theta^{(t)}}) \|_{L_{1}(\tpr_\ndata^X)}^2
&\leq \frac{16 m^{2\beta-1}}{\eta \rho^2 T} \risk(f_{\Theta^{(0)}}) 
+ \frac{8\eta K}{\rho^2 T} \sum_{t=0}^{T-1}\| \nabla_\Theta \risk(\Theta^{(t)})\|_2^2. 
\end{align*}
Applying $\risk(\Theta^{(0)})=\log(2)$ and inequality (\ref{eq:gd_local_conv_rate}), we complete the proof.
\end{proof}

\begin{proof}[Proof of Theorem \ref{theorem:faster_global_convergence}]
We set $\tau^* = \left(\alpha a_r v(\theta_r^{(0)}) \right)_{r=1}^m$ and $\Theta^* = \Theta^{(0)} + \tau^*$.
Clearly, we have
\begin{equation}
\| \Theta^* - \Theta^{(0)}\|_2 \leq \alpha \sqrt{m}. \label{eq:reference_distance}
\end{equation}
As shown in Proposition \ref{prop:smoothness}, we get 
\[ \left| f_{\Theta^*}(x) - \nabla_\Theta f_{\Theta^{(0)}}(x)^\top (\Theta^* - \Theta^{(0)}) \right| \leq \frac{K_2}{m^\beta} \|\tau^* \|_2^2 \leq \alpha^2 K_2 m^{1-\beta}. \]
In addition, as shown in Proposition \ref{prop:kernel_positivity}, since $m \geq \frac{16 K_1^2}{\rho^2}\log \frac{2\ndata}{\delta}$, the inequality (\ref{eq:high_prob_separability}) holds with probability at least $1-\delta$.
Hence, we have for $\forall i \in \{1,\ldots, \ndata\}$,
\begin{align*}
y_i f_{\Theta^*}(x_i) 
&\geq y_i \nabla_\Theta f_{\Theta^{(0)}}(x)^\top (\Theta^* - \Theta^{(0)}) - \alpha^2 K_2 m^{1-\beta} \notag \\
&= \frac{y_i \alpha}{m^\beta} \sum_{r=1}^m \partial_\theta \sigma (\theta_r^{(0)\top}x_i)^\top v(\theta_r^{(0)}) - \alpha^2 K_2 m^{1-\beta} \notag \\
&\geq \frac{\alpha \rho m^{1-\beta}}{2} - \alpha^2 K_2 m^{1-\beta} \geq \frac{\alpha \rho m^{1-\beta}}{4}.  \notag
\end{align*}
Thus, the loss at a reference point $\Theta^*$ can be bounded as follows:
\begin{align}
\risk(\Theta^*) 
\leq \frac{1}{\ndata}\sum_{i=1}^\ndata \log\left( 1 + \exp(-y_i f_{\Theta^*}(x_i) \right) 
\leq \exp\left(-y_i f_{\Theta^*}(x_i) \right) 
\leq \exp\left( -\frac{\alpha \rho m^{1-\beta}}{4} \right). \label{eq:reference_risk_bound}
\end{align}

From Theorem \ref{theorem:global_convergence}, Proposition \ref{prop:smoothness}-(ii), Proposition \ref{prop:gd_iterations} and inequalities (\ref{eq:reference_distance}), (\ref{eq:reference_risk_bound}), 
it follows that $\exists C_1, \exists C_2 > 0$, $\forall T \leq T$,
\begin{align}
\frac{1}{T} \sum_{t=0}^{T-1} &\left( \risk(\Theta^{(t)}) + \nabla_\Theta \risk (\Theta^{(t)})^\top (\Theta^* - \Theta^{(t)})\right) \notag \\
&\leq \risk(\Theta^*) + \frac{K_2}{m^\beta T}\sum_{t=0}^{T-1}\|\nabla_f\risk(f_{\Theta^{(t)}})\|_{L_1(\tpr_\ndata^X)} \|\Theta^* - \Theta^{(t)}\|_2^2 \notag \\
&\leq \risk(\Theta^*) + \frac{K_2}{m^\beta T}\sum_{t=0}^{T-1}\|\nabla_f\risk(f_{\Theta^{(t)}})\|_{L_1(\tpr_\ndata^X)} \max_{t \in \{0,\ldots,T-1\}} \|\Theta^* - \Theta^{(t)}\|_2^2 \notag \\ 
&\leq \risk(\Theta^*) + \frac{2K_2}{m^\beta \sqrt{T}} \sqrt{ \sum_{t=0}^{T-1}\|\nabla_f\risk(f_{\Theta^{(t)}})\|_{L_1(\tpr_\ndata^X)}^2}  \max_{t \in \{0,\ldots,T-1\}} \left(\|\Theta^* - \Theta^{(0)}\|_2^2+\|\Theta^{(0)} - \Theta^{(t)}\|_2^2 \right) \notag \\
&\leq \risk(\Theta^*) + \frac{C_2}{\rho\sqrt{\eta T m}}( \alpha^2 m + \eta T) \notag \\
&\leq \exp\left( -\frac{\alpha \rho m^{1-\beta}}{4} \right) + 
\frac{C_2}{ \rho } \left( \alpha^2 \sqrt{\frac{m}{\eta T}} + \sqrt{\frac{\eta T}{m}} \right). \label{eq:pre_average_loss_bound} 
\end{align}

Finally, we bound the average of $\nabla_\Theta \risk (\Theta^{(t)})^\top (\Theta^{(t)} - \Theta^*)$.
Because $- 2 a^\top b =\|a\|_2^2 + \|b\|_2^2 - \|a+b\|_2^2$ for real vectors $a, b$, we get by setting 
$a = - \eta \nabla_\Theta \risk (\Theta^{(t)})$ and $b = \Theta^{(t)}-\Theta^*$, 
\begin{align*}
\frac{1}{T}\sum_{t=0}^{T-1} \nabla_\Theta \risk (\Theta^{(t)})^\top (\Theta^{(t)} - \Theta^*)
&= \frac{1}{2\eta T}\sum_{t=0}^{T-1}
( \eta^2 \|\nabla_\Theta \risk (\Theta^{(t)}) \|_2^2 
+ \| \Theta^{(t)}-\Theta^* \|_2^2 
- \| \Theta^{(t+1)} - \Theta^*\|_2^2 ) \\
&= \frac{\eta}{2 T}\sum_{t=0}^{T-1} \|\nabla_\Theta \risk (\Theta^{(t)}) \|_2^2
+\frac{1}{2\eta T}\| \Theta^{(0)} - \Theta^* \|_2^2 \\
&\leq \frac{\log(2)}{T} + \frac{\alpha^2 m}{2\eta T}.
\end{align*}

Thus, we get that $\exists C>0$,
\[ \frac{1}{T} \sum_{t=0}^{T-1} \risk(\Theta^{(t)}) 
\leq C\left( \frac{1}{T} + \frac{\alpha^2 m}{\eta T} + 
\exp\left( -\frac{\alpha \rho m^{1-\beta}}{4} \right) + 
 \frac{\alpha^2}{\rho}\sqrt{\frac{m}{\eta T}} + \frac{1}{\rho}\sqrt{\frac{\eta T}{ m}} \right). \]
\end{proof}

We next prove Proposition \ref{proposition:sharper_bound_on_distance} that gives a sharper bound on $\|\Theta^{(T)}- \Theta^{(0)}\|$.
\begin{proof}[Proof of Proposition \ref{proposition:sharper_bound_on_distance}]
  Let $L\in \posintegers$ be a positive integer such that for $T=O(\rho^{-2}\epsilon^{-1}\log^2(1/\epsilon))$, $2^L \leq T < 2^{L+1}$.
  Clearly, $L \leq \log_2T$.
  From Corollary \ref{corollary:faster_convergence_rate}, we get for $l \in \{1,\ldots,L\}$
  \[ \frac{1}{2^{l-1}} \sum_{t=2^{l-1}}^{2^l - 1}\risk(\Theta^{(t)})
    \leq \frac{2}{2^l} \sum_{t=0}^{2^l - 1}\risk(\Theta^{(t)})
    \leq 2C\left( \epsilon + 2^{-l} \rho^{-2}\log^2(1/\epsilon)\right). \]
  Therefore, there exist $2^{l-1} \leq \exists s_l < 2^l$ for $l \in \{1,\ldots,L\}$ such that
  \[ \risk(\Theta^{(s_l)}) \leq 2C\left( \epsilon + 2^{-l} \rho^{-2}\log^2(1/\epsilon)\right).\]
  From the similar argument to the proof of Proposition \ref{prop:gd_iterations}, we get for $a < b \in \posintegers$,
  $\sum_{t=a}^b \| \nabla_\Theta \risk (\Theta^{(t)})\|_2 \leq \sqrt{2(b-a+1)\eta^{-1} \risk(\Theta^{(t)})}$.

  Thus, it follows that since $s_1 = 1$, $\| \nabla_\Theta \risk(\Theta^{(0)})\|_2 \leq \sqrt{m}K_1$ by (\ref{eq:prop1:grad_bound}), and $2^{l+1}-2^{l-1} + 1 \leq 2^{l+1}$,
  \begin{align*}
    \sum_{t=0}^{T-1} \| \nabla_\Theta \risk(\Theta^{(t)})\|_2
    &\leq
    \| \nabla_\Theta \risk(\Theta^{(0)})\|_2
    + \sum_{l=1}^{L-1}\sum_{t=s_l}^{s_{l+1}} \| \nabla_\Theta \risk(\Theta^{(t)})\|_2
    + \sum_{t=s_L}^{T-1} \| \nabla_\Theta \risk(\Theta^{(t)})\|_2 \\
    &\leq
      \sqrt{m}K_1  + \sum_{l=1}^{L}\sqrt{ 2^{3}C \eta^{-1} ( 2^l\epsilon + \rho^{-2}\log^2(1/\epsilon))}\\
    &\leq
      \sqrt{m}K_1  + \log_2(T) \sqrt{ 2^{3}C \eta^{-1} ( T\epsilon + \rho^{-2}\log^2(1/\epsilon))}.
  \end{align*}
  Hence, by setting specific values of $\eta, T,$ and $m$ in Corollary \ref{corollary:faster_convergence_rate}, we get that $\exists C'>0$, 
  \begin{align*}
    \|\Theta^{(t)}-\Theta^{(0)}\|_2 \leq \eta \sum_{t=0}^{T-1} \| \nabla_\Theta \risk(\Theta^{(t)})\|_2
    &\leq \eta \sqrt{m}K_1  + \log_2(T) \sqrt{ 2^{3}C \eta ( T\epsilon + \rho^{-2}\log^2(1/\epsilon))} \\
    &\leq C' \epsilon^{3/4}\log^2(\rho^{-2}\epsilon^{-1}).
  \end{align*}
\end{proof}

\paragraph{Generalization bound.}
A generalization bound can be derived by utilizing the standard analysis of the Rademacher complexity \citep{kp2002}.
We here introduce a function class to be measured by the Rademacher complexity.
Let $l_{\gamma}(v)$ ($\gamma>0$) be the {\it ramp} loss: 
\begin{eqnarray*}
l_{\gamma}(v) \defeq 
\left\{ \begin{array}{ll}
1 & (v < 0), \\
1-v/\gamma & (0 \leq v \leq \gamma), \\
0 & (v > \gamma ).\\
\end{array} \right.
\end{eqnarray*}
Then, a class of all possible ramp losses over $\featuresp \times \labelsp$ attained by the gradient descent (\ref{eq:gd}) 
up to $T$-iterations is defined as follows: 
$\Omega_{\eta,m,T} \defeq \left\{\Theta \mid \|\Theta - \Theta^{(0)}\|_{2} \leq D_{\eta,T,m} \right\}$,
\[ \mathcal{F}_{\eta,m,T}^\gamma \defeq 
\left\{ l_\gamma(yf_\Theta(x)) : \featuresp\times \labelsp \rightarrow [0,1] \mid 
\Theta \in \Omega_{\eta,m,T} \right\}. \]
Here, $D_{\eta,T,m}$ is set to be $\sqrt{2 \eta T \log(2)}$ when considering a general hyperparameter setting in Theorem \ref{theorem:global_convergence}  and is set to be a sharper bound in Proposition \ref{proposition:sharper_bound_on_distance}: $\Theta(\epsilon^{3/4} \log^2(\rho^{-2}\epsilon^{-1}))$ when considering a specific hyperparameter setting in that proposition.

For a given dataset $S=(x_i,y_i)_{i=1}^\ndata$, the Rademacher complexity is defined by 
$\radcomp(\mathcal{F}_{\eta,m,T}^{\gamma}|_{S})\defeq n^{-1}\expec[\sup_{h\in \mathcal{F}_{\eta,m,T}^\gamma} \sum_{i=1}^\ndata \epsilon_i h(x_i,y_i)]$, 
where the expectation is taken over the Rademacher random variables $(\epsilon_i)_{i=1}^\ndata$ which are i.i.d. 
with probabilities $\prob[\epsilon_i=1]=\prob[\epsilon_i=-1]=0.5$.
The following well-known result \citep{kp2002,mohri2012foundations,shalev2014understanding} provides a bound on the expected classification error based on the empirical margin distribution and the Rademacher complexity.
The empirical margin distribution for $S$ is defined as the ratio of examples satisfying $y_i f_\Theta(x_i) \leq \gamma$ in $S$.

\begin{lemma}[\cite{kp2002,mohri2012foundations,shalev2014understanding}] \label{lemma:base_generalization_bound}
Let $\forall \ndata\in \posintegers$, $\forall \gamma>0$, $\forall \eta>0$, $\forall m\in \posintegers$, $\forall T\in \posintegers$, 
and $\forall \delta \in (0,1)$.
Then, with probability at least $1-\delta$ over the random choice of $S$ of size $\ndata$, every $\Theta \in \Omega_{\eta,m,T}$ satisfies
\begin{equation}
 \hspace{-1.5mm}\mathbb{P}_{(X,Y)\sim \tpr}[Yf_\Theta(X)\leq 0] 
\leq \mathbb{P}_{(X,Y)\sim \tpr_\ndata}[Yf_\Theta(X)\leq \gamma] 
+ 2\radcomp(\mathcal{F}_{\eta,m,T}^{\gamma}|_{S}) +3\sqrt{ (2\ndata)^{-1}\log(2/\delta)}. \label{base_generalization_bound} 
\end{equation}
\end{lemma}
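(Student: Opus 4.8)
The plan is to control the population classification error by the \emph{ramp loss} and then to invoke a uniform deviation bound driven by the Rademacher complexity. The starting observation is the elementary sandwich $\mathbf{1}[v \leq 0] \leq l_\gamma(v) \leq \mathbf{1}[v \leq \gamma]$, valid for every $v \in \realsp$ directly from the definition of $l_\gamma$. Applying the left inequality under $\tpr$ and the right inequality under $\tpr_\ndata$ yields, for every fixed $\Theta \in \Omega_{\eta,m,T}$,
\begin{align*}
\prob_{(X,Y)\sim \tpr}[Yf_\Theta(X)\leq 0]
&\leq \expec_{(X,Y)\sim \tpr}[l_\gamma(Yf_\Theta(X))], \\
\expec_{(X,Y)\sim \tpr_\ndata}[l_\gamma(Yf_\Theta(X))]
&\leq \prob_{(X,Y)\sim \tpr_\ndata}[Yf_\Theta(X)\leq \gamma].
\end{align*}
Hence it suffices to bound the population mean of $l_\gamma(Yf_\Theta(X))$ by its empirical mean plus a uniform error term, \emph{simultaneously} over the whole class $\mathcal{F}_{\eta,m,T}^\gamma$; this is where $\Omega_{\eta,m,T}$ enters, since $\Theta \in \Omega_{\eta,m,T}$ guarantees that the associated ramp loss lies in $\mathcal{F}_{\eta,m,T}^\gamma$, whose members take values in $[0,1]$.

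The core step is the standard symmetrization bound for $[0,1]$-valued function classes. I would set $\Phi(S) \defeq \sup_{h \in \mathcal{F}_{\eta,m,T}^\gamma}\left(\expec_{\tpr}[h] - \expec_{\tpr_\ndata}[h]\right)$ and observe that replacing a single sample $(x_i,y_i)$ changes $\Phi$ by at most $1/\ndata$, because $h \in [0,1]$. McDiarmid's bounded-differences inequality then gives, with probability at least $1-\delta/2$, the estimate $\Phi(S) \leq \expec_S[\Phi(S)] + \sqrt{\log(2/\delta)/(2\ndata)}$. A ghost-sample symmetrization argument bounds the expectation by twice the expected Rademacher complexity, $\expec_S[\Phi(S)] \leq 2\,\expec_S[\radcomp(\mathcal{F}_{\eta,m,T}^\gamma|_S)]$. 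To replace the expected complexity by the empirical one appearing in the statement, I would apply McDiarmid a second time to the map $S \mapsto \radcomp(\mathcal{F}_{\eta,m,T}^\gamma|_S)$, which again has bounded differences $1/\ndata$, and absorb the resulting deviation into a second event of probability $1-\delta/2$; a union bound over the two events produces the factor $3 = 1 + 2$ in front of $\sqrt{\log(2/\delta)/(2\ndata)}$.

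Chaining the three displays on the high-probability event gives the claimed inequality for every $\Theta \in \Omega_{\eta,m,T}$. The main technical obstacle is the symmetrization step together with the bookkeeping of the concentration terms: one must apply McDiarmid twice and split the failure probability as $\delta/2 + \delta/2$ in order to obtain the \emph{empirical} Rademacher complexity (rather than its expectation) with the stated constant $3$. Since this bound is a well-known result \citep{kp2002,mohri2012foundations,shalev2014understanding}, I would present it in this packaged form and defer the routine McDiarmid and symmetrization computations to those references; the only paper-specific point to verify is that boundedness $h \in [0,1]$ holds by construction of the ramp loss and of the class $\mathcal{F}_{\eta,m,T}^\gamma$, which is immediate from the definition of $l_\gamma$.
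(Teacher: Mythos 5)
Your proposal is correct: the ramp-loss sandwich $\mathbf{1}[v\leq 0]\leq l_\gamma(v)\leq \mathbf{1}[v\leq\gamma]$, symmetrization, and the two applications of McDiarmid's inequality (yielding the constant $3 = 1 + 2$) constitute exactly the standard proof of this margin bound. The paper itself gives no proof and simply cites \cite{kp2002,mohri2012foundations,shalev2014understanding}, and your sketch is precisely the argument found in those references, so the approaches coincide.
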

To instantiate this bound, we have to provide upper bounds on the empirical margin distribution and the Rademacher complexity.
We first give a bound on the Rademacher complexity.

\begin{proposition} \label{prop:complexity_bound}
Suppose Assumption {\bf(A1)} and {\bf(A2)} hold.
Let $\forall \gamma>0$, $\forall \eta>0$, $\forall m\in \posintegers$, $\forall T\in \posintegers$, $\forall \delta \in (0,1)$, 
and $\forall S$ be examples of size $n$. 
Then, there exists a uniform constant $C>0$ such that with probability at least $1-\delta$ with respect to the initialization of $\Theta^{(0)}$,  
\begin{equation*}
\radcomp(\mathcal{F}_{\eta,m,T}^{\gamma}|_{S}) 
\leq C\gamma^{-1}m^{\frac{1}{2}-\beta}D_{\eta,m,T} (1+K_1+K_2)\sqrt{\frac{d}{\ndata} 
\log\left( \ndata(1+K_1+K_2)(\log(m/\delta)+D_{\eta,m,T}^2) \right) }. 
\end{equation*} 
Moreover, when $\sigma$ is convex and $\sigma(0) = 0$, we can avoid the dependence with respect to the dimension $d$. With probability at least $1-\delta$ 
over a random initialization of $\Theta^{(0)}$,
\begin{equation*}
\radcomp(\mathcal{F}_{\eta,m,T}^{\gamma}|_{S}) 
\leq \frac{8K_1 m^{\frac{1}{2}-\beta}}{\gamma\sqrt{n}}\bigg(D_{\eta,m,T} + \sqrt{\frac{\log(Am/\delta)}{b} } \bigg). 
\end{equation*}
\end{proposition}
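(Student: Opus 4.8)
The plan is to bound $\radcomp(\mathcal{F}_{\eta,m,T}^{\gamma}|_{S})$ by first peeling off the ramp loss and then controlling the complexity of the network class over the ball $\Omega_{\eta,m,T}$. Since $l_\gamma$ is $\gamma^{-1}$-Lipschitz and the label $y\in\{-1,1\}$ can be absorbed into the Rademacher signs, the Talagrand contraction principle reduces the problem to bounding $\radcomp(\{f_\Theta : \|\Theta-\Theta^{(0)}\|_2\le D_{\eta,m,T}\}|_S)$ up to the factor $\gamma^{-1}$. The elementary estimate $|f_\Theta(x)-f_{\Theta'}(x)|\le \frac{K_1}{m^\beta}\sum_{r=1}^m\|\theta_r-\theta_r'\|_2$ (from $\|\sigma'\|_\infty\le K_1$ and $\|x\|_2\le 1$) is the basic building block, but feeding it naively into a volumetric cover of the $m\fdim$-dimensional ball would produce the suboptimal width dependence $m^{1-\beta}$. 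The whole point is to exploit the joint constraint $\sum_r\|\theta_r-\theta_r^{(0)}\|_2^2\le D_{\eta,m,T}^2$ to gain a factor $\sqrt m$.

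For the first (dimension-dependent) bound I would estimate the $\|\cdot\|_\infty$-covering number of $\mathcal{F}_{\eta,m,T}^{\gamma}|_S$ and invoke Dudley's integral (Lemma~\ref{lemma:dudley_integral}). To obtain the $\sqrt{\fdim}$ rather than $\sqrt{m\fdim}$ entropy and the scaling $m^{\frac12-\beta}$, the key step is a Maurey-type sparsification: writing $f_\Theta-f_{\Theta^{(0)}}=\frac{1}{m^\beta}\sum_r a_r[\sigma(\theta_r^\top\cdot)-\sigma(\theta_r^{(0)\top}\cdot)]$ as a scaled convex combination of the normalized per-neuron increments (each of sup-norm at most $K_1$ over $S$), one approximates it in sup-norm by an average of only $k=O(K_1^2 m^{1-2\beta}D_{\eta,m,T}^2\epsilon^{-2})$ sampled neurons. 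Enumerating the $O(m^k)$ choices of neurons and covering each selected $\theta_r\in\realsp^\fdim$ at the appropriate scale yields $\log\mathcal{N}\lesssim k(\fdim\log(\cdots)+\log m)$; the range parameters $\log(m/\delta)$ (from $\max_r\|\theta_r^{(0)}\|_2^2$ via Assumption~{\bf(A2)} and a union bound) and $D_{\eta,m,T}^2$ enter the logarithm, while $K_2$ enters through the Lipschitz control of the normalized increments. Because $k\propto \epsilon^{-2}$ the integrand behaves like $\epsilon^{-1}$, so Dudley's integral produces a single logarithmic factor; optimizing the truncation level then gives exactly the claimed $\gamma^{-1}m^{\frac12-\beta}D_{\eta,m,T}(1+K_1+K_2)\sqrt{(\fdim/\ndata)\log(\cdots)}$.

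For the second (dimension-free) bound, under convexity of $\sigma$ and $\sigma(0)=0$ I would avoid covering altogether and estimate the Rademacher complexity directly by comparing $f_\Theta$ to its linearization $\frac{1}{m^\beta}\sum_r a_r\sigma'(\theta_r^{(0)\top}x)(\theta_r-\theta_r^{(0)})^\top x$ around the initialization. The linear part is handled in one shot: after pulling the supremum over the joint $\ell_2$-ball through Cauchy--Schwarz and applying Jensen to the Rademacher average, one gets $\radcomp\le D_{\eta,m,T}\sqrt{\frac{1}{m^{2\beta}\ndata}\sum_r\expec_\epsilon\|\sum_i\epsilon_i\sigma'(\cdots)x_i\|_2^2}$, and the orthogonality $\expec_\epsilon[\epsilon_i\epsilon_j]=\delta_{ij}$ together with $\|\sigma'\|_\infty\le K_1$ and $\|x_i\|_2\le 1$ collapses this to $K_1 m^{\frac12-\beta}D_{\eta,m,T}/\sqrt{\ndata}$ with no dependence on $\fdim$. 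Convexity and $\sigma(0)=0$ are what let the activation be controlled using only the Lipschitz constant $K_1$ (this is why $K_2$ is absent here), and the substitution $\|\theta_r\|_2\le \|\theta_r-\theta_r^{(0)}\|_2+\|\theta_r^{(0)}\|_2$ forces the extra term $\sqrt{\log(Am/\delta)/b}$, which bounds $\max_r\|\theta_r^{(0)}\|_2$ via the sub-Gaussian tail {\bf(A2)} and a union bound over the $m$ neurons.

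I expect the main obstacle to be precisely this $\sqrt m$ gain: controlling the complexity through the joint $\ell_2$ constraint rather than by a subadditive/union argument over the $m$ neurons, which would only give $m^{1-\beta}$. In the first bound this is the Maurey sparsification and the bookkeeping needed to make Dudley's integral collapse to a single logarithm with the correct arguments; in the second bound it is the Jensen-over-the-ball step that simultaneously delivers the $\sqrt m$ saving and the dimension-free estimate.
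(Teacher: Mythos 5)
Your first-part strategy shares the paper's key structural move---rewriting $(f_\Theta-f_{\Theta^{(0)}})/C_M$ with $C_M=m^{\frac12-\beta}D_{\eta,m,T}$ as a sub-convex combination of the normalized per-neuron increments $g_{(\theta_r,\theta_r^{(0)})}(x)=(\sigma(\theta_r^\top x)-\sigma(\theta_r^{(0)\top}x))/\|\theta_r-\theta_r^{(0)}\|_2$, which is exactly where the $\sqrt m$ is gained---but the way you control the resulting hull is lossy. The paper never covers the convex hull at all: it uses the fact that Rademacher averages are invariant under taking convex hulls, so Dudley's integral is applied only to the \emph{base} class $\{g_{\overline\theta}\}$, a parametric family indexed by $\overline\theta\in\realsp^{2\fdim}$ whose entropy is $O(\fdim\log(\cdot/\epsilon))$; the integral then converges and yields the single $\sqrt{\log}$ in the statement. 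Your Maurey sparsification gives hull entropy $\log\mathcal N(\epsilon)\lesssim k(\fdim\log(\cdot/\epsilon)+\log m)$ with $k\propto\epsilon^{-2}$, so the Dudley integrand is of order $\epsilon^{-1}\sqrt{\fdim\log(\cdot/\epsilon)}$ and the integral produces a $\log^{3/2}(1/\alpha)$ factor (your claim that it ``produces a single logarithmic factor'' is where the accounting slips); sup-norm Maurey over $\ndata$ points costs further logarithms. So as written your route proves only a strictly weaker bound than the stated one. Separately, you gloss over the fact that $g_{\overline\theta}$ is \emph{not} Lipschitz in $\overline\theta$ uniformly: its modulus blows up like $D_{\eta,m,T}K_1/R_0^2$ as $\|\theta-\theta'\|_2\downarrow R_0$. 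The paper splits the base class at radius $R_0=\sqrt{\fdim/\ndata}$ and treats the near-diagonal piece by a linear approximation with error $K_2R_0$ (this is precisely where $K_2$ enters); your sketch needs, and does not supply, an analogous device.

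The second part has a genuine gap. Comparing $f_\Theta$ with its linearization at $\Theta^{(0)}$ leaves a sup-norm remainder of order $K_2D_{\eta,m,T}^2/m^\beta$ (Taylor plus {\bf(A1)}), which adds directly to the Rademacher complexity and is absent from the claimed bound $\frac{8K_1m^{\frac12-\beta}}{\gamma\sqrt{\ndata}}\bigl(D_{\eta,m,T}+\sqrt{\log(Am/\delta)/b}\bigr)$. Since the proposition quantifies over all $\eta,m,T$, this term cannot be absorbed: taking $D_{\eta,m,T}\asymp\sqrt m\,\rho/K_2$ (the largest radius permitted in Theorem 2) makes the remainder exceed the claimed bound by a factor of order $\rho\sqrt{\ndata}/K_1$, which diverges with $\ndata$. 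A symptom of the problem is that convexity plays no substantive role in your argument---a convex minorant is one-sided and cannot control a two-sided Taylor remainder---so the hypothesis ``$\sigma$ convex, $\sigma(0)=0$'' goes unused. The paper exploits convexity in an entirely different way: for $\psi_{i,r}(u)=\sigma(u+\theta_r^{(0)\top}x_i)-\sigma(\theta_r^{(0)\top}x_i)$, convexity and $\psi_{i,r}(0)=0$ give $\psi_{i,r}(\alpha u)\geq\alpha\psi_{i,r}(u)$ for $\alpha\geq1$, which (after splitting indices by the sign of the increment) lets each neuron's increment be rewritten as a coefficient in $[0,1]$ times the increment at a displacement of \emph{full} length $D_{\eta,m,T}$; the convex-hull reduction then collapses the sum over neurons to a single neuron, and Ledoux--Talagrand contraction through the $K_1$-Lipschitz activation (here $\sigma(0)=0$ is used) reduces everything to linear functionals, whose complexity is dimension-free by Cauchy--Schwarz. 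This rescaling-plus-contraction argument, not linearization, is the missing idea; it is what removes both $\fdim$ and $K_2$ from the bound.
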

\begin{proof}[Proof of Theorem \ref{theorem:generalization_bound}]
We prove this theorem by instantiating inequality (\ref{base_generalization_bound}).
Let $(\Theta^{(t)})_{t=0}^{T-1}$ be a sequence obtained by the gradient descent (\ref{eq:gd}).
Because $(\Theta^{(t)})_{t=0}^{T-1}$ is contained in $\Omega_{\eta,m,T}$, as indicated in Proposition \ref{prop:gd_iterations}, 
inequality (\ref{base_generalization_bound}) holds for this sequence.
As for the Rademacher complexity in (\ref{base_generalization_bound}), we can utilize Proposition \ref{prop:complexity_bound}. 
Thus, the resulting problem is to prove the convergence of the empirical margin distribution: 
$\mathbb{P}_{(X,Y)\sim \tpr_\ndata}[Yf_{\Theta^{(t)}}(X)\leq \gamma]$.
We here give its upper-bound below.
\[ 0.5 \left| y_i - 2p_{\Theta}(Y=1|x_i) + 1 \right| \geq (1+\exp(\gamma))^{-1} 
\Longleftrightarrow y_i f_\Theta(x_i) \leq \gamma. \]
Therefore, from Markov's inequality,
\begin{align*}
\mathbb{P}_{(X,Y)\sim \tpr_\ndata}[Yf_{\Theta^{(t)}}(X)\leq \gamma] 
&= \mathbb{P}_{(X',Y')\sim \tpr_\ndata}\left[ \frac{1}{2}\left| Y' - 2p_\Theta(Y=1|X') + 1 \right| \geq \frac{1}{1+\exp(\gamma)}  \right] \\
&\leq (1+\exp(\gamma)) \| \nabla_f \risk(f_\Theta) \|_{L_1(\tpr_\ndata^X)}.
\end{align*}
Combining this inequality with Lemma \ref{lemma:base_generalization_bound}, then for $\forall t \in \{0,\ldots,T-1\}$,
\begin{equation*}
 \mathbb{P}_{(X,Y)\sim \tpr}[Yf_{\Theta^{(t)}}(X)\leq 0] 
\leq (1+\exp(\gamma)) \| \nabla_f \risk(f_{\Theta^{(t)}}) \|_{L_1(\tpr_\ndata^X)}
+ 2\radcomp(\mathcal{F}_{\eta,m,T}^{\gamma}|_{S}) + 3\sqrt{\frac{\log(2/\delta)}{2\ndata}}. 
\end{equation*}
Noting that $\eta, m,$ and $T$ satisfy the conditions in Theorem \ref{theorem:global_convergence}, 
we can complete the proof by taking the average over $t \in \{0,\ldots,T-1\}$ and 
applying Proposition \ref{prop:complexity_bound} and Theorem \ref{theorem:global_convergence}.
\end{proof}

\section{Proofs for Global Convergence}\label{sec:proofs_global_convergence}
\subsection{Proof of Proposition \ref{prop:smoothness}}
\begin{proof}[Proof of Proposition \ref{prop:smoothness}]
We first show the smoothness of $f_{\Theta}(x)$ with respect to $\Theta$ for $\forall x \in \featuresp$, $(\|x\|_2 \leq 1)$.
Noting that $\nabla_\Theta^2 f_\Theta(x) = \mathrm{diag}\left( \frac{1}{m^\beta} a_r \sigma''(\theta_r ^\top x)x x^{\top}\right)_{r=1}^m$, 
we get for $\tau = (\tau_r)_{r=1}^m$ such that $\sum_{r=1}^m \|\tau_r\|_2^2 = 1$ $(\tau_r \in \realsp^\fdim)$, 
\begin{align*}
\left| \tau^\top \nabla_\Theta^2 f_{\Theta}(x) \tau \right|
&= \left| \sum_{r=1}^m \tau_r^\top \partial_{\theta_r}^2 f_\Theta(x) \tau_r \right| \\
&\leq \frac{1}{m^\beta}\sum_{r=1}^m \left|\sigma''(\theta_r ^\top x)\right| \left|\tau_r^\top x \right|^2 \\
&\leq \frac{K_2}{m^\beta} \sum_{r=1}^m \|\tau_r\|_2^2\\
&=\frac{K_2}{m^\beta}.
\end{align*}
This means that for $\tau = (\tau_r)_{r=1}^m$, $(\tau_r \in \realsp^\fdim)$, 
\begin{equation}
    \left| f_{\Theta+\tau}(x) - (f_{\Theta}(x) + \nabla_{\Theta}f_{\Theta}(x)^\top \tau )\right| 
    \leq \frac{K_2}{m^\beta} \| \tau \|_2^2 = \frac{K_2}{m^\beta} \sum_{r=1}^m \|\tau_r\|_2^2. \label{eq:prop1:f_smoothness}
\end{equation}
Let us define $g_x(\tau)$ as the second-order term of Taylor's expansion of $f_{\Theta}(x)$ with respect to $\Theta$:
\[ f_{\Theta + \tau}(x) = f_{\Theta}(x) + \nabla_{\Theta} f_{\Theta}(x)^{\top} \tau + g_x(\tau). \]
From the inequality (\ref{eq:prop1:f_smoothness}), we have $| g_x(\tau) | \leq \frac{K_2 \|\tau\|_2^2}{m^\beta}$.
Then, by the smoothness of $l(\zeta,y)$ with respect to $\zeta$ and $|\partial_\zeta^2 l(\zeta,y)| \leq 1/4$, we get
\begin{align}
\Bigl| l(f_{\Theta+\tau}(x),y) 
- ( l(f_{\Theta}(x),y) \Bigr. 
&\Bigl.  + \partial_\zeta l(f_\Theta(x),y) (\nabla_\Theta f_{\Theta}(x)^\top \tau + g_x(\tau)) ) \Bigr|   \notag \\
& \leq \frac{1}{4}\left| \nabla_\Theta f_{\Theta}(x)^\top \tau + g_x(\tau) \right|^2 \notag \\
&\leq \frac{1}{2} \left( \| \nabla_\Theta f_{\Theta}(x)\|_2^2 + \frac{K_2^2\|\tau\|_2^2}{m^{2\beta}} \right)\|\tau\|_2^2. \notag
\end{align}
By the triangle inequality, we get
\begin{align}
| l(f_{\Theta+\tau}(x),y) 
&- \left( l(f_{\Theta}(x),y) 
+ \partial_\zeta l(f_\Theta(x),y) \nabla_\Theta f_{\Theta}(x)^\top \tau \right) | \notag \\
&\leq | \partial_\zeta l(f_\Theta(x),y)g_x(\tau)| 
+ \frac{1}{2} \left( \| \nabla_\Theta f_{\Theta}(x)\|_2^2 + \frac{K_2^2\|\tau\|_2^2}{m^{2\beta}} \right)\|\tau\|_2^2 \notag \\
&\leq \frac{1}{2} \left( \| \nabla_\Theta f_{\Theta}(x)\|_2^2 
+ \frac{2K_2}{m^\beta}
+ \frac{K_2^2\|\tau\|_2^2}{m^{2\beta}} \right)\|\tau\|_2^2 \notag \\
&\leq \frac{1}{2} \left( \frac{K_1^2}{m^{2\beta-1}}
+ \frac{2K_2}{m^\beta}
+ \frac{K_2^2\|\tau\|_2^2}{m^{2\beta}} \right)\|\tau\|_2^2, \label{eq:prop1:l_smoothness}
\end{align}
where for the second inequality, we used $|\partial_\zeta l(\zeta,y)|\leq 1$ 
and for the last inequality, we used 
\begin{align*}
\| \nabla_\Theta f_\Theta(x)\|_2^2 
= \sum_{r=1}^m \left\| \frac{1}{m^\beta}\sigma'(\theta_r^\top x) x \right \|_2^2
\leq \sum_{r=1}^m \frac{1}{m^{2\beta}}| \sigma'(\theta_r^\top x)  |_2^2 
\leq \frac{K_1^2}{m^{2\beta-1}}.
\end{align*}

We here set $\tau = -\eta \nabla_\Theta \risk(\Theta)$.
The right hand side of (\ref{eq:prop1:l_smoothness}) is upper bounded by 
\[ \frac{1}{2} \left( \frac{K_1^2}{m^{2\beta-1}}
+ \frac{2K_2}{m^\beta}
+ \frac{\eta^2 K_1^2K_2^2}{m^{4\beta-1}} \right)\|\tau\|_2^2. \]
because
\begin{align}
\| \nabla_\Theta \risk(\Theta) \|_2^2
&= \sum_{r=1}^m \| \partial_{\theta_r} \risk(\Theta) \|_2^2 \notag\\
&= \sum_{r=1}^m \left\| \frac{1}{\ndata}\sum_{i=1}^\ndata \partial_\zeta l (f_\Theta(x_i),y_i) \partial_{\theta_r}f_{\Theta}(x_i) \right\|_2^2 \notag\\
&\leq \sum_{r=1}^m \left( \frac{1}{\ndata}\sum_{i=1}^\ndata |\partial_\zeta l (f_\Theta(x_i),y_i)|\|\partial_{\theta_r}f_{\Theta}(x_i)\|_2 \right)^2 \notag\\
&\leq \left( \frac{1}{\ndata}\sum_{i=1}^\ndata |\partial_\zeta l (f_\Theta(x_i),y_i)| \right)^2 \sum_{r=1}^m \left(\max_{j\in\{1,\ldots,\ndata\}}\|\partial_{\theta_r}f_{\Theta}(x_j)\|_2 \right)^2 \notag\\
&= \| \nabla_f \risk(f_\Theta) \|_{L_{1}(\tpr_\ndata^X)}^2 \sum_{r=1}^m \left(\max_{j\in\{1,\ldots,\ndata\}} \frac{1}{m^\beta} |\sigma'(\theta_r^\top x_j)| \|x_j\|_2 \right)^2 \notag\\
& \leq \| \nabla_f \risk(f_\Theta) \|_{L_{1}(\tpr_\ndata^X)}^2 m^{1-2\beta} K_1^2 \notag\\
& \leq m^{1-2\beta} K_1^2. \label{eq:prop1:grad_bound}
\end{align}

Therefore, we get
\begin{align}
| l(f_{\Theta+\tau}(x),y) 
&- \left( l(f_{\Theta}(x),y) 
-\eta \partial_\zeta l(f_\Theta(x),y) \nabla_\Theta f_{\Theta}(x)^\top \nabla_\Theta \risk(\Theta) \right) | \notag \\
&\leq \frac{1}{2} \left( \frac{K_1^2}{m^{2\beta-1}}
+ \frac{2K_2}{m^\beta}
+ \frac{\eta^2 K_1^2K_2^2}{m^{4\beta-1}} \right)\eta^2 \| \nabla_\Theta \risk(\Theta)\|_2^2 \notag\\
&\leq \frac{1}{2m^{2\beta-1}} \left( K_1^2 + 2K_2 + K_1^2K_2^2 \right)\eta^2 \| \nabla_\Theta \risk(\Theta)\|_2^2, \label{eq:prop:l_smoothness2}
\end{align}
where we used $\beta \in [0,1)$ and $\eta \leq m^\beta$ for the last inequality.

Noting that from the definition of kernel smoothing of functional gradients (\ref{eq:kernel_smoothing}), 
we see
\begin{align*} 
\nabla_\Theta f_{\Theta}(x)^\top \nabla_\Theta \risk(\Theta) 
&= \nabla_\Theta f_{\Theta}(x)^\top \left(\frac{1}{\ndata}\sum_{i=1}^\ndata \partial_{\zeta}l(f_\Theta(x_i),y_i) \nabla_\Theta f_{\Theta}(x_i) \right)  \\
&= T_{k_{\Theta}} \nabla_f \risk(f_\Theta)(x).
\end{align*}
Therefore, by taking the expectation of (\ref{eq:prop:l_smoothness2}) according to the empirical distribution $\tpr_{\ndata}$, we get 
\begin{align*}
\Bigl| \risk(f_{\Theta+\tau}) \Bigr. &- \Bigl. \left(\risk(f_{\Theta}) - \eta \pd< \nabla_f \risk(f_{\Theta}), T_{k_{\Theta}}\nabla_f \risk(f_\Theta)>_{L_2(\tpr_\ndata^X)} \right) \Bigr| \\
&\leq  \frac{\eta^2}{2m^{2\beta-1}} \left( K_1^2 + 2K_2 + K_1^2K_2^2 \right) \| \nabla_\Theta \risk(\Theta)\|_2^2.
\end{align*}
This completes the proof of the statement (i).

From the convexity of $l(\zeta,y)$ with respect to $\zeta$, we have
\begin{align*}
l(f_{\Theta^*}(X), Y ) 
&= l\left(f_{\Theta}(X)  + \nabla_\Theta f_{\Theta}(X)^\top (\Theta^* - \Theta) + g_X(\Theta^* - \Theta), Y \right) \\
&\geq l(f_{\Theta}(X),Y)  + \partial_\zeta l( f_\Theta(X),Y) \left(\nabla_\Theta f_{\Theta}(X)^\top (\Theta^* - \Theta) + g_X(\Theta^* - \Theta) \right) \\
&\geq l(f_{\Theta}(X),Y)  + \nabla_\Theta l( f_\Theta(X),Y)^\top (\Theta^* - \Theta) 
- \left| \partial_\zeta l( f_\Theta(X),Y) \right| \frac{K_2 \| \Theta^* - \Theta \|_2^2}{ m^\beta },
\end{align*}
where we used $| g_X(\Theta^* - \Theta) | \leq \frac{K_2 \| \Theta^* - \Theta \|_2^2}{ m^\beta }$.
Thus, by taking the expectation with respect to $(X,Y)\sim \tpr_\ndata$, we get
\[ \risk(\Theta^*) \geq 
\risk(\Theta) + \nabla_\Theta \risk(\Theta)^\top (\Theta^* - \Theta) 
- \frac{K_2}{m^\beta}\|\nabla_f\risk(f_{\Theta})\|_{L_1(\tpr_\ndata^X)} \|\Theta^* - \Theta\|_2^2. \] 
This finishes the proof of the statement (ii).
\end{proof}

\subsection{Proof of Proposition \ref{prop:kernel_positivity}}
\begin{proof}[Proof of Proposition \ref{prop:kernel_positivity}]
Set $Z_{r,i} \defeq y_i \partial_\theta \sigma(\theta_r^{(0)}x_i)^\top v(\theta_r^{(0)})$.
We find clearly $|Z_{r,i}| \leq K_1$ from Assumption \ref{assumption:convergence_analysis}.
By applying Hoeffding's inequality to $Z_{r,i}$ for each $i\in\{1,\ldots,\ndata\}$ 
and taking an union bound, we have
\[ \prob_{\Theta^{(0)}}\left[ \max_{i\in\{1,\ldots,\ndata\}} \left|\frac{2}{m}\sum_{r=1}^{m/2} Z_{r,i} 
- \expec_{\theta^{(0)}_r}[Z_{r,i}] \right| > \frac{\rho}{2} \right] 
\leq 2\ndata \exp\left( - \frac{\rho^2 m}{16K_1^2} \right). \]
In other words, since $m \geq \frac{16 K_1^2}{\rho^2}\log \frac{2\ndata}{\delta}$, we have with probability $1-\delta$, 
\[ \max_{i\in\{1,\ldots,\ndata\}} \left|\frac{2}{m}\sum_{r=1}^{m/2} Z_{r,i} - \expec_{\theta^{(0)}_r}[Z_{r,i}] \right| 
\leq \frac{\rho}{2}. \]
Therefore, using Assumption \ref{assumption:convergence_analysis} {\bf(A4)} and noting $\Theta^{(0)}=(\theta_r)_{r=1}^m$ is symmetrically initialized, 
we get with probability $1-\delta$ for $\forall i \in \{1,\ldots,\ndata\}$,
\begin{equation} 
\frac{1}{m}\sum_{r=1}^{m} y_i \partial_\theta \sigma(\theta_r^{(0)}x_i)^\top v(\theta_r^{(0)}) \geq \frac{\rho}{2}. \label{eq:high_prob_separability}
\end{equation}

In the following proof, we assume $\Theta^{(0)}=(\theta_r^{(0)})_{r=1}^m$ satisfies this inequality.
We get from the $K_2$-Lipschitz continuity of $\sigma'$ 
that for $\Theta=(\theta_r)_{r=1}^m$ satisfying $\| \Theta - \Theta^{(0)}\|_{2,1} \leq \frac{m\rho}{4K_2}$,
\begin{align*}
&\hspace{-10mm}\left| \frac{1}{m} \sum_{r=1}^m y_i \sigma'(\theta_r^\top x_i)x_i^\top v(\theta_r^{(0)})
- \frac{1}{m} \sum_{r=1}^m y_i \sigma'(\theta_r^{(0)\top} x_i)x_i^\top v(\theta_r^{(0)}) \right|  \\
&\leq \frac{1}{m}\sum_{r=1}^m \left| y_ix_i^\top v_r(\theta_r^{(0)})( \sigma'(\theta_r^\top x_i) - \sigma'(\theta_r^{(0)\top}x_i))\right| \\
&\leq \frac{1}{m}\sum_{r=1}^m K_2| (\theta_r - \theta_r^{(0)})^\top x_i | \\
&\leq \frac{K_2}{m} \| \Theta - \Theta^{(0)} \|_{2,1} \\
&\leq \frac{\rho}{4}.
\end{align*}
This means that there exists $(v_r)_{r=1}^m$ such that $\| v_r \|_2 \leq 1$ $(\forall r \in \{1,\ldots,m\})$ and
for $\forall \Theta=(\theta_r)_{r=1}^m$ satisfying $\| \Theta - \Theta^{(0)}\|_{2,1} \leq \frac{m\rho}{4K_2}$,
\[ \frac{1}{m} \sum_{r=1}^m y_i \partial_{\theta}\sigma(\theta_r^\top x_i)^\top v_r \geq \frac{\rho}{4}, \hspace{5mm} \forall i \in \{1,\ldots,\ndata\}. \] 

Then, we get the following bound: for $\forall (\alpha_i)_{i=1}^\ndata$ $(\alpha_i \in (0,1))$, 
\begin{equation}
\frac{1}{m} \sum_{i=1}^\ndata \sum_{r=1}^m y_i \alpha_i \partial_\theta \sigma(\theta_r^\top x_i)^\top v_r 
= \frac{1}{m} \sum_{i=1}^\ndata \alpha_i \sum_{r=1}^m y_i \partial_\theta \sigma(\theta_r^\top x_i)^\top v_r 
\geq \frac{\rho}{4} \sum_{i=1}^\ndata \alpha_i
> 0. \label{eq:prop2:usefule_ineq}
\end{equation}

Noting that $\nabla_f \risk(f_\Theta)(x_i) = \frac{-y_i}{1+\exp(y_i f_\Theta(x_i))}$, we get
\begin{align*}
\pd< \nabla_f \risk(f_{\Theta}), T_{k_{\Theta}}\nabla_f \risk(f_\Theta)>_{L_2(\tpr_\ndata^X)} 
&= \frac{1}{\ndata^2} \sum_{i,j=1}^\ndata k_{\Theta}(x_i,x_j)\nabla_f \risk(f_\Theta)(x_i) \nabla_f \risk(f_\Theta)(x_j) \\
&= \frac{1}{\ndata^2} \left\| \sum_{i=1}^\ndata  \nabla_f \risk(f_\Theta)(x_i) \nabla_\Theta f_\Theta(x_i) \right\|_2^2 \\
&= \frac{1}{\ndata^2} \sum_{r=1}^m \left\| \sum_{i=1}^\ndata  \nabla_f \risk(f_\Theta)(x_i) \partial_{\theta_r} f_\Theta(x_i) \right\|_2^2 \\
&= \frac{1}{\ndata^2} \sum_{r=1}^m \left\| \frac{1}{m^{\beta}} \sum_{i=1}^\ndata  \nabla_f \risk(f_\Theta)(x_i) \partial_\theta \sigma(\theta_r^\top x_i) \right\|_2^2 \\
&\geq \frac{1}{\ndata^2} \sum_{r=1}^m \left( \frac{1}{m^{\beta}} \sum_{i=1}^\ndata \nabla_f \risk(f_\Theta)(x_i) \partial_\theta \sigma(\theta_r^\top x_i)^\top v_r \right)^2 \\
&\geq \frac{m}{\ndata^2} \left( \frac{1}{m^{1+\beta}} \sum_{i=1}^\ndata \sum_{r=1}^m \nabla_f \risk(f_\Theta)(x_i) \partial_\theta \sigma(\theta_r^\top x_i)^\top v_r \right)^2 \\
&\geq \frac{m^{1-2\beta}\rho^2}{16\ndata^2} \left( \sum_{i=1}^\ndata \frac{1}{1+\exp(y_if_\Theta(x_i))}\right)^2,
\end{align*}
where we used $\|v_r\|_2 \leq 1$ for the first inequality, the convexity of $\|\cdot\|_2^2$ for the second inequality, and (\ref{eq:prop2:usefule_ineq}) for the last inequality.
We can find that this inequality finishes the proof because 
\[ \frac{1}{1+\exp(f_\Theta(x_i)y_i)} = \frac{1}{2}\left| y_i - 2p_{\Theta}(Y=1 \mid x_i)+ 1\right|. \]
\end{proof}

\subsection{Proof of Proposition \ref{prop:gd_iterations}}
The proof of Proposition \ref{prop:gd_iterations} is based on the traditional convergence analysis of gradient descent for smooth objective functions in finite-dimensional space.
\begin{proof}[Proof of Proposition \ref{prop:gd_iterations}]
We first specify the smoothness of the logistic loss function.
We set $\phi(v)= \log( 1 + \exp(-v))$ and $l(y,f_\Theta(x)) = \phi(yf_\Theta(x))$.
By the simple calculation, we get that for $r,s \in \{1,\ldots,m\}$,
\begin{align*}
\frac{\partial^2}{\partial\theta_r \partial \theta_s} l(y,f_\Theta(x))
&= \phi''(yf_\Theta(x))\frac{a_r a_s}{m^{2\beta}} \sigma'(\theta_r^\top x) \sigma'(\theta_s^\top x)x x^\top \\
&+ \mathbf{1}[r=s]\frac{y}{m^\beta}\phi'(yf_\Theta(x)) a_r \sigma''(\theta_r^\top x)x x^\top.
\end{align*}  
Noting that $\|\phi'\|_{\infty}\leq 1$ and $\|\phi''\|_{\infty}\leq \frac{1}{4}$, 
we can see that the maximum eigen-value of 
$( \partial^2 l(y,f_\Theta(x))  / \partial\theta_r \partial \theta_s )_{r,s=1}^m$ is upper bounded by 
\[ M \defeq \frac{1}{4m^{2\beta-1}}(K_1^2 + K_2). \]
Indeed, for $v=(v_r)_{r=1}^m$ such that $\sum_{r=1}^m\|v_r\|_2^2 \leq 1$, ($v_r \in \realsp^d$), we have
\begin{align*}
\sum_{r,s=1}^m v_r^\top \frac{\partial^2 l(y,f_\Theta(x))}{\partial\theta_r \partial \theta_s} v_s 
&=\frac{\phi''(yf_\Theta(x))}{m^{2\beta}}\left( \sum_{r=1}^m a_r \sigma'(\theta_r^\top x)v_r^\top x \right)^2 
+ \frac{y}{m^{\beta}}\phi'(yf_\Theta(x)) \sum_{r=1}^m a_r \sigma''(\theta_r^\top x)(v_r^\top x)^2 \\
&\leq \frac{K_1^2}{4m^{2\beta}}\left(\sum_{r=1}^m \|v_r\|_2\right)^2 
+ \frac{K_2}{m^\beta} \sum_{r=1}^m \|v_r\|_2^2\\
&\leq \frac{K_1^2}{4m^{2\beta}}\left( \sqrt{m}\|v\|_2 \right)^2 
+ \frac{K_2}{m^\beta} \\
&\leq \frac{K_1^2}{4m^{2\beta-1}} + \frac{K_2}{m^\beta} \\
&\leq \frac{1}{4m^{2\beta-1}}(K_1^2 + K_2).
\end{align*}
Therefore, the loss function $\risk(\Theta)$ is $M$-Lipschitz smooth with respect to $\Theta$, that is, for 
\[ \risk(\Theta') \leq \risk(\Theta) + \pd< \nabla \risk(\Theta), \Theta'-\Theta>_2 + \frac{M}{2}\|\Theta'-\Theta\|_2^2. \] 
Plugging $\Theta=\Theta^{(t)}$ and $\Theta'=\Theta^{(t+1)}= \Theta- \eta \nabla_\Theta \risk(\Theta^{(t)})$ into this inequality, we get
\begin{align*}
\risk(\Theta^{(t+1)}) 
&\leq \risk(\Theta^{(t)}) - \eta\left(1 - \frac{\eta M}{2}\right) \| \nabla \risk(\Theta^{(t)}) \|_2^2 \\
&\leq \risk(\Theta^{(t)}) - \frac{\eta}{2} \| \nabla \risk(\Theta^{(t)}) \|_2^2,
\end{align*}
where we used $\eta \leq 1/M$ for the last inequality.
By summing this inequality over $t \in \{0,\ldots,T-1\}$ and multiplying by $\frac{2}{\eta T}$, we get
\begin{equation}
\frac{1}{T}\sum_{t=0}^{T-1}\| \nabla \risk(\Theta^{(t)}) \|_2^2  
\leq \frac{2}{\eta T}\risk(\Theta^{(0)}) 
=  \frac{2}{\eta T}\log(2), \label{eq:prop3:conv_rate}
\end{equation}  
where we used $\risk(\Theta^{(0)}) = \log(2)$.
Therefore, we have that from equation (\ref{eq:prop3:conv_rate}),
\begin{align*}
\| \Theta^{(t)} - \Theta^{(0)}\|_{2}
&\leq \eta \sum_{t=0}^{T-1} \left \| \nabla_{\Theta} \risk(\Theta^{(t)}) \right\|_2\\
&\leq \eta \sqrt{T} \sqrt{ \sum_{t=0}^{T-1} \left\| \nabla_{\Theta} \risk(\Theta^{(t)}) \right\|_2^2 } \\
&\leq \sqrt{2 \eta T \log(2)}.
\end{align*}

The last statement of Proposition \ref{prop:gd_iterations} immediately follows from this and the following inequality.
\begin{align*}
\| \Theta^{(t)} - \Theta^{(0)}\|_{2,1}
&= \sum_{r=1}^m \| \theta_r^{(t)} - \theta_r^{(0)} \|_2 \\
&\leq \sqrt{m}\sqrt{\sum_{r=1}^m \| \theta_r^{(t)} - \theta_r^{(0)} \|_2^2} \\
&= \sqrt{m}\| \Theta^{(t)} - \Theta^{(0)}\|_{2}.
\end{align*}
\end{proof}

\section{Proofs for Generalization Bounds}\label{sec:proofs_generalization_bounds}
\begin{proof}[Proof of Proposition \ref{prop:complexity_bound}]
In this proof, we denote $\mathcal{F}=\mathcal{F}_{\eta,m,T}^\gamma$ and $\Omega = \Omega_{\eta,m,T}$ for simplicity.
We define $\mathcal{F}_y \defeq \{ h(\cdot,y): \featuresp \rightarrow [0,1] \mid h \in \mathcal{F} \}$.
Then, for a given dataset $S=(x_i,y_i)_{i=1}^\ndata$, we notice that 
$\radcomp(\mathcal{F}|_{S}) \leq \radcomp(\mathcal{F}_{1}|_{X}) + \radcomp(\mathcal{F}_{-1}|_{X})$, where $X=(x_i)_{i=1}^\ndata$.
Thus, it is enough to provide an upper bound on $\radcomp(\mathcal{F}_{1}|_{X})$ because a bound on the other complexity can be 
also derived in the same way.

We first give a uniform high probability bound on the initialization $\|\theta_r^{(0)}\|_2$ for $\forall r \in \{1,\ldots,m\}$.
We get from {\bf(A2)}, for $t>0$,
\begin{align*}
\prob\left[ \max_{r\in \{1,\ldots,m\}} \|\theta_r^{(0)}\|_2 \geq t \right] 
\leq \sum_{r=1}^m \prob\left[ \|\theta_r^{(0)}\|_2 \geq t \right] 
\leq m A \exp(-bt^2).
\end{align*}
Thus, by choosing $t$ so that $\delta = m A \exp(-bt^2)$, we confirm that with probability at least $1-\delta$, 
\[ \max_{r\in \{1,\ldots,m\}} \|\theta_r^{(0)}\|_2 \leq R \defeq \sqrt{\frac{1}{b}\log\left( \frac{mA}{\delta}\right)}. \]

We introduce several notations.
Fix $R_0 > 0$.
We denote $\overline{\theta}=(\theta, \theta') \in \realsp^{2d}$, ($\theta, \theta' \in \realsp^{d}$) and, define for $\overline{\theta}$, 
\[ g_{\overline{\theta}}(x) \defeq \frac{\sigma(\theta^\top x) - \sigma(\theta^{'\top} x)}{\|\theta - \theta'\|_2}. \]
When $\theta=\theta'$, we define $g_{\overline{\theta}}(x)=0$.
From the Lipschitz continuity of $\sigma$, the range of $g_{\overline{\theta}}$ is $[-K_1,K_1]$.
Moreover, we define 
\begin{align*}
\overline{\Omega} &\defeq \{ \overline{\theta} \in \realsp^{2d} \mid\ \|\theta\|_2, \|\theta'\|_2 \leq R + D_{\eta,m,T},\ 
\| \theta - \theta' \|_2  \leq D_{\eta,m,T} \}, \\
\overline{\Omega}_{+} &\defeq \{ \overline{\theta} \in \overline{\Omega} \mid R_0 < \|\theta- \theta'\|_2 \leq D_{\eta,m,T} \}, \\
\overline{\Omega}_{-} &\defeq \{ \overline{\theta} \in \overline{\Omega} \mid\ \|\theta- \theta'\|_2 \leq R_0 \}, \\
\mathcal{G}_{+} &\defeq \left\{ g_{\overline{\theta}}: 
\featuresp \rightarrow [-K_1, K_1] \mid\ \overline{\theta} \in \overline{\Omega}_{+} \right\}, \\
\mathcal{G}_{-} &\defeq \left\{ g_{\overline{\theta}}: 
\featuresp \rightarrow [-K_1, K_1] \mid\ \overline{\theta} \in \overline{\Omega}_{-} \right\}, \\ 
\mathcal{H} &\defeq \{ f_{\Theta} : \featuresp \rightarrow \realsp\mid\ \Theta \in \Omega \}. 
\end{align*}
Clearly, we see
\[ \overline{\Omega} = \overline{\Omega}_{-} \cup  \overline{\Omega}_{+}
\ \textrm{and}\  \left\{ g_{\overline{\theta}} \mid\ \overline{\theta} \in \overline{\Omega} \right\} 
= \mathcal{G}_{-} \cup \mathcal{G}_{+}. \]
From the Lipschitz continuity of $l_\gamma$, we find $\radcomp(\mathcal{F}_1 |_X) \leq \gamma^{-1}\radcomp(\mathcal{H}|_X)$.

We now derive an upper bound on the Rademacher complexity.
Set $C_M \defeq m^{\frac{1}{2}-\beta}D_{\eta,m,T}$.
\begin{align}
\radcomp(\mathcal{H}|_{X}) 
&= \frac{1}{\ndata} \expec\left[ \sup_{\Theta \in \Omega} \sum_{i=1}^\ndata \epsilon_i f_\Theta(x_i)\right] \notag\\
&= \frac{1}{\ndata} \expec\left[ \sup_{\Theta \in \Omega} \sum_{i=1}^\ndata \epsilon_i (f_\Theta(x_i) - f_{\Theta^{(0)}}(x_i))\right] \notag\\
&= \frac{1}{\ndata} \expec\left[ \sup_{\Theta \in \Omega} \sum_{i=1}^\ndata \epsilon_i \frac{1}{m^\beta} \sum_{r=1}^m a_r \left(\sigma(\theta_r^\top x_i)- \sigma(\theta_r^{(0)\top}x_i) \right) \right] \notag\\
&= \frac{C_M}{\ndata} \expec\left[ \sup_{\Theta \in \Omega} \sum_{i=1}^\ndata \epsilon_i \sum_{r=1}^m \frac{\|\theta_r - \theta_r^{(0)}\|_2}{C_M m^\beta}  a_r
   \frac{\sigma(\theta_r^\top x_i)- \sigma(\theta_r^{(0)\top}x_i)}{\|\theta_r - \theta_r^{(0)}\|_2} \right], \label{eq:rademacher_H_bound_1}  
\end{align}
where we used the fact that $f_{\Theta^{(0)}}(x_i)$ is a constant in the expectation for the second equality.

Since, for $\Theta \in \Omega$, 
\[ \sum_{r=1}^m \frac{\|\theta_r - \theta_r^{(0)}\|_2}{C_M m^\beta} \leq \frac{ m^{\frac{1}{2}-\beta}\|\Theta-\Theta^{(0)}\|_2}{C_M} \leq 1, \]
equation (\ref{eq:rademacher_H_bound_1}) can be upper-bounded by the Rademacher complexity of the convex hull.
Hence,
\begin{align}
\radcomp(\mathcal{H}|_{X}) 
 &\leq \frac{C_M}{\ndata} \expec\left[ \sup_{\substack{\Theta \in \Omega \\ \sum_{r=1}^m b_r \leq 1, b_r \in [0,1]}}
  \sum_{i=1}^\ndata \epsilon_i \sum_{r=1}^m b_r a_r
                              \frac{\sigma(\theta_r^\top x_i)- \sigma(\theta_r^{(0)\top}x_i)}{\|\theta_r - \theta_r^{(0)}\|_2} \right] \notag \\
&\leq \frac{C_M}{\ndata} \expec\left[ \sup_{\substack{ (\theta_r,\theta_r')_{r=1}^m \in \overline{\Omega}^m \\ \sum_{r=1}^m b_r \leq 1, b_r \in [0,1]}}
  \sum_{i=1}^\ndata \epsilon_i \sum_{r=1}^m b_r
  \frac{\sigma(\theta_r^\top x_i)- \sigma(\theta_r^{'\top}x_i)}{\|\theta_r - \theta_r^{'}\|_2} \right] \notag \\
&= \frac{C_M}{\ndata} \expec\left[ \sup_{ (\theta,\theta') \in \overline{\Omega} } 
\sum_{i=1}^\ndata \epsilon_i \frac{\sigma(\theta^\top x_i) - \sigma(\theta^{'\top} x_i)}{\|\theta - \theta'\|_2} \right] \notag\\
&= \frac{C_M}{\ndata} \expec\left[ \sup_{ \overline{\theta} \in \overline{\Omega} } 
\sum_{i=1}^\ndata \epsilon_i g_{\overline{\theta}}(x_i) \right] \notag\\
&\leq \frac{C_M}{\ndata} \expec\left[ 
\sup_{ g \in \mathcal{G}_{-} } \sum_{i=1}^\ndata \epsilon_i g(x_i) 
+ \sup_{ g \in \mathcal{G}_{+} } \sum_{i=1}^\ndata \epsilon_i g(x_i)\right] 
= C_M \left(\radcomp(\mathcal{G}_{-}|_X) + \radcomp( \mathcal{G}_{+}|_X) \right). \label{eq:rademacher_H_bound}
\end{align}
We used that for $\Theta \in \Omega$, $(\theta_r,\theta_r^{(0)}) \in \overline{\Omega}$ ($\forall r\in \{1,\ldots,m\}$) because 
$\|\Theta -\Theta^{(0)}\|_2 \leq D_{\eta,m,T}$.
Moreover, the term $a_r$ disappeared by the symmetry.
We used the fact that the convex hull of a hypothesis class does not increase the Rademacher complexity for the first equality.

We next derive an upper bound on the Rademacher complexity $\radcomp(\mathcal{G}_{+}|_X)$ 
through the covering number $\mathcal{N}(\mathcal{G}_{+}|_X,\epsilon,\|\cdot\|_\infty)$ and Lemma \ref{lemma:dudley_integral}.
To this end, we investigate the sensitivity of $\|g_{\overline{\theta}}\|_\infty$ with respect to $\overline{\theta}$ as follows.

Let $\overline{\theta}_1=(\theta_1, \theta_1') \in \overline{\Omega}_+$ and 
$\overline{\theta}_2=(\theta_2,\theta_2') \in \overline{\Omega}_+$ be parameters such that 
\[ \| \overline{\theta}_1 - \overline{\theta}_2 \|_2 = \sqrt{ \| \theta_1 - \theta_2 \|_2^2 + \| \theta_1' - \theta_2' \|_2^2} \leq \epsilon. \]
This leads to 
\[ \|\theta_1 - \theta_2\|_2, \|\theta_1' - \theta_2'\|_2 \leq \epsilon \ \textrm{and}\ 
\left| \|\theta_1 - \theta_1'\|_2 - \|\theta_2 - \theta_2'\|_2 \right| \leq 2\epsilon. \]
We get from these inequalities that for $\|x\|_2 \leq 1$,
\begin{align*}
| g_{\overline{\theta}_1}(x) - g_{\overline{\theta}_2}(x)| 
&= \frac{\left| \|\theta_2 - \theta_2'\|_2 (\sigma(\theta_1^\top x) - \sigma(\theta_1^{'\top} x))
- \|\theta_1 - \theta_1'\|_2 (\sigma(\theta_2^\top x) - \sigma(\theta_2^{'\top} x)) \right| }
{\|\theta_1 - \theta_1'\|_2 \|\theta_2 - \theta_2'\|_2} \\
&\leq \frac{\left| (\|\theta_2 - \theta_2'\|_2 - \|\theta_1 - \theta_1'\|_2 ) (\sigma(\theta_1^\top x) - \sigma(\theta_1^{'\top} x)) \right|}
{\|\theta_1 - \theta_1'\|_2 \|\theta_2 - \theta_2'\|_2} \\
&+ \frac{\left| \|\theta_1 - \theta_1'\|_2 (\sigma(\theta_1^\top x) - \sigma(\theta_1^{'\top} x) 
- \sigma(\theta_2^\top x) + \sigma(\theta_2^{'\top} x)) \right|}
{\|\theta_1 - \theta_1'\|_2 \|\theta_2 - \theta_2'\|_2} \\
&\leq \frac{4D_{\eta,m,T}\epsilon K_1}{R_0^2}.
\end{align*}
Thus, if $\| \overline{\theta}_1 - \overline{\theta}_2 \|_2 \leq \epsilon$ for $\overline{\theta}_1, \overline{\theta}_2 \in \overline{\Omega}_+$, 
then $\| g_{\overline{\theta}_1} - g_{\overline{\theta}_2}\|_\infty \leq 4D_{\eta,m,T}\epsilon K_1 / R_0^2$.
Since, 
\[ \mathcal{G}_+ \subset \{ g_{\overline{\theta}} \mid\ \| \overline{\theta}\|_2 \leq 2R + 2D_{\eta,m,T},\ \overline{\theta} \in \realsp^{2d} \}, \]
we get for the unit-ball $B_1 \subset \realsp^{2d}$ with respect to $\|\cdot \|_2$, 
\[ \mathcal{N}(\mathcal{G}_{+}|_X,\epsilon,\|\cdot\|_\infty) 
\leq C_1^d \mathcal{N}\left(B_1,\frac{R_0^2 \epsilon}{K_1(RD_{\eta,m,T} + D_{\eta,m,T}^2)},\|\cdot\|_2 \right), \]
where $C_1>0$ is a uniform constant. Hence, 
\[ \log \mathcal{N}(\mathcal{G}_{+}|_X,\epsilon,\|\cdot\|_\infty) 
\leq O\left( d\log\left( 1 + \frac{K_1(R D_{\eta,m,T} + D_{\eta,m,T}^2)}{R_0^2 \epsilon} \right) \right). \] 

Applying Lemma \ref{lemma:dudley_integral} with $\alpha = K_1/\sqrt{\ndata}$, we obtain
\begin{equation}
\radcomp( \mathcal{G}_+|_X) = O\left( K_1\sqrt{\frac{d}{\ndata} \log\left( 1 + \frac{\sqrt{\ndata}(RD_{\eta,m,T} + D_{\eta,m,T}^2)}{R_0^2} \right)}\right). \label{eq:rademacher_pos_bound}
\end{equation} 

We next evaluate $\radcomp( \mathcal{G}_{-}|_X)$ by using a linear approximation.
Since $|\sigma''(\cdot)| \leq K_2$, we get 
\[ | \sigma(\theta^{'\top} x) - \sigma(\theta^{\top} x) - \sigma'(\theta^\top x)( \theta' - \theta)^\top x  | 
\leq K_2\|\theta' - \theta \|_2^2. \]
Therefore, we get for $\overline{\theta}=(\theta,\theta') \in \overline{\Omega}_{-}$,
\[ \left| g_{\overline{\theta}}(x) - \frac{\sigma'(\theta^\top x)( \theta' - \theta)^\top x}{ \|\theta-\theta'\|_2 }\right| 
\leq K_2 \|\theta - \theta'\|_2 \leq K_2 R_0.\]

From this approximation, the Rademacher complexity can be bounded as follows.
\begin{align*}
\radcomp( \mathcal{G}_{-}|_X) 
&\leq K_2R_0 
+ \frac{1}{\ndata}\expec\left[ \sup_{\overline{\theta} \in \overline{\Omega}_{-}} \sum_{i=1}^{\ndata}\epsilon_i 
\frac{\sigma'(\theta^\top x_i)( \theta' - \theta)^\top x_i}{ \|\theta-\theta'\|_2 }\right] \\
&\leq K_2R_0 
+ \frac{1}{\ndata}\expec\left[ \sup_{\substack{\|\theta\|_2 \leq R + D_{\eta,m,T},\\ \|w\|_2 \leq 1}} \sum_{i=1}^{\ndata}\epsilon_i 
\sigma'(\theta^\top x_i)w^\top x_i\right].
\end{align*}
When $\sqrt{ \|\theta_1 - \theta_2\|_2^2 + \|w_1 - w_2\|_2^2} \leq \epsilon$ for $\|\theta_i\|_2 \leq R + D_{\eta,m,T}$ and 
$\|w_i\|_2 \leq 1$, we get for $\|x\|_2 \leq 1$,
\begin{align*}
|\sigma'(\theta_1^\top x)w_1^\top x - \sigma'(\theta_2^\top x)w_2^\top x |
&\leq | (\sigma'(\theta_1^\top x) - \sigma'(\theta_2^\top x)) w_1^\top x |
+ | \sigma'(\theta_2^\top x) ( w_1 - w_2)^\top x | \\
&\leq (K_1+K_2) \epsilon.
\end{align*}
We set 
\[ \mathcal{G}_{-}' \defeq \{ x \rightarrow \sigma'(\theta^\top x)w^\top x \mid\ \|\theta\|_2 \leq R + D_{\eta,m,T},\ \|w\|_2 \leq 1 \}. \]
Therefore, by the same argument as the case of $\mathcal{G}_{+}$, the following bound holds.
\[ \mathcal{N}(\mathcal{G}_{-}'|_X,\epsilon,\|\cdot\|_\infty) 
\leq C_2^d \mathcal{N}\left(B_1,\frac{\epsilon}{ (K_1+K_2)(R+D_{\eta,m,T})},\|\cdot\|_2 \right), \]
where $C_2>0$ is a uniform constant.
Hence, 
\[ \log \mathcal{N}(\mathcal{G}_{-}'|_X,\epsilon,\|\cdot\|_\infty) 
\leq O\left( d \log \left( 1 + \frac{ (K_1+K_2)(R+D_{\eta,m,T})}{\epsilon} \right) \right).  \]
By Lemma \ref{lemma:dudley_integral} with $\alpha = 1/\sqrt{\ndata}$, we get
\begin{equation}
\radcomp( \mathcal{G}_{-}|_X) \leq O\left( K_2 R_0 + \sqrt{\frac{d}{\ndata}\log\left( 1 + \sqrt{\ndata}(K_1+K_2)(R+D_{\eta,m,T})\right)}\right). \label{eq:rademacher_neg_bound}    
\end{equation} 

Combining (\ref{eq:rademacher_H_bound}), (\ref{eq:rademacher_pos_bound}), (\ref{eq:rademacher_neg_bound}) with $R_0 = \sqrt{d/\ndata}$, 
and Lipschiz continuity of $l_\gamma$, we obtain
\begin{equation*}
\radcomp(\mathcal{F}_1|_X) 
\leq O\left( \frac{m^{\frac{1}{2}-\beta}D_{\eta,m,T}}{\gamma} (1+K_1+K_2)\sqrt{\frac{d}{\ndata} 
\log\left( \ndata(1+K_1+K_2)(R+D_{\eta,m,T}) \right) } \right).
\end{equation*}
Now, let us turn to the second part of Proposition \ref{prop:complexity_bound}. Let us assume that the function $\sigma$ is convex and satisfies $\sigma(0)=0$. The main argument uses the convexity of activation function in the same spirit as \cite{Chinot2019}. As for the first part, with probability larger than $1-\delta$ over the initialization
\[ \max_{r\in \{1,\ldots,m\}} \|\theta_r^{(0)}\|_2 \leq R \defeq \sqrt{\frac{1}{b}\log\left( \frac{mA}{\delta}\right)}. \]
We only focus on the control of $\radcomp(\mathcal{H}|_X)$. 
\begin{align*}
	\radcomp(\mathcal{H}|_X) & = \frac{1}{ n}\expec \left[ \sup_{ \Theta \in \Omega } \sum_{i=1}^n \epsilon_i f_{\Theta}(x_i)  \right] \\
	& = \frac{1}{ n }	\expec \left[ \sup_{ \Theta \in \Omega} \sum_{i=1}^n \epsilon_i \big( f_{\Theta}(x_i) - f_{\Theta^{(0)}}(x_i)  \big) \right]\\
	& = \frac{1}{n }	\expec \left[ \sup_{ \Theta \in \Omega } \sum_{i=1}^n \epsilon_i  \sum_{r=1}^m \frac{a_r}{m^{\beta}} \big( \sigma(\theta_r^T x_i) -\sigma(\theta_r^{(0)T} x_i)   \big) \right] \\
	& = \frac{1}{n}	\expec \left[ \sup_{ \Theta \in \Omega } \sum_{(i,r) \in \mathcal A} \epsilon_i \frac{a_r}{m^{\beta}} \big( \sigma(\theta_r^T x_i) -\sigma(\theta_r^{(0)T} x_i)   \big) \right] \\
	& + \frac{1}{n}	\expec \left[ \sup_{ \Theta \in \Omega } \sum_{(i,r) \in \mathcal A^c} \epsilon_i \frac{a_r}{m^{\beta}} \big( \sigma(\theta_r^{(0)T} x_i) - \sigma(\theta_r^T x_i)  \big) \right]
\end{align*}
where $\mathcal A =  \{ (i,r) \in \{1,\cdots, n \} \times \{1,\cdots,m\}: \sigma(\theta_r^T x_i) -\sigma(\theta_r^{(0)T} x_i) \geq 0 \}$. \\

Let us control the first term (i.e for $(i,r) \in \mathcal A$). For any $i,r$ in $\mathcal A$ let $\psi_{i,r}: \mathbb R \mapsto \mathbb R$ defined for all $u \in \mathbb R$ as:
\begin{equation*}
\psi_{i,r}(u) = \sigma(u + \theta_r^{(0)^T} x_i) -  \sigma(\theta_r^{(0)^T} x_i)
\end{equation*} 
The functions $\psi_{i,r}$ are such that $\psi_{i,r}(0) = 0$. There are convex because $\sigma$ is. In particular for any $\alpha  \geq 1$ and $u \in \mathbb R$, $\psi_{i,r}(\alpha u) \geq \alpha \psi_{i,r}(u)$. We also have $\psi_{i,r}\big( (\theta_r-\theta_r^{(0)})^T x_i \big) =  \sigma(\theta_r^T x_i) -\sigma(\theta_r^{(0)T} x_i)$. Since $\Theta \in  \Omega$ we have $ \|\Theta- \Theta^{(0)} \|_{2} \leq D_{\eta,m,T}$ and for any $r \in \{1,\cdots,m  \}$, $\|\theta_r- \theta_r^{(0)} \|_2 \leq D_{\eta,m,T}$. As a consequence, for any $(i,r) \in \mathcal A$, there exists $\beta_{i,r} \in [0,1]$ such that 
\begin{equation*}
    \frac{D_{\eta,m,T}}{\|\theta_r-\theta_r^{(0)}\|_2} \psi_{i,r}\big( (\theta_r-\theta_r^{(0)})^T x_i \big) = \beta_{i,r} \psi_{i,r}\bigg( \frac{D_{\eta,m,T}}{\|\theta_r-\theta_r^{(0)}\|_2} (\theta_r-\theta_r^{(0)})^T x_i \bigg)
\end{equation*}
Since for any $i \in \{1,\cdots,n\}$, $ \sum_{r=1}^m \frac{\|\theta_r-\theta_r^{(0)}\|_2 }{C_M m^{\beta}} \beta_{i,r} \leq 1$, we get 
\begin{align*}
\frac{1}{n}	\expec  \sup_{ \Theta \in \Omega } & \sum_{(i,r) \in \mathcal A } \epsilon_i \frac{a_r}{m^{\beta}} \big( \sigma(\theta_r^T x_i) -\sigma(\theta_r^{(0)T} x_i)   \big)   \\
& \leq \frac{1}{n}\frac{C_M}{D_{\eta,m,T}} \expec \left[ \sup_{\Theta = (\theta_r)_{r=1}^m :  \|\theta_r- \theta^{(0)}_r \|_{2} \leq D_{\eta,m,T}} \sum_{(i,j)\in \mathcal A} \epsilon_i a_r \frac{\|\theta_r-\theta_r^{(0)}\|_2 }{C_M m^{\beta}} \beta_{i,r}   \psi_{i,r}\bigg( \frac{D_{\eta,m,T}(\theta_r-\theta_r^{(0)})^T x_i}{\|\theta_r-\theta_r^{(0)}\|_2}  \bigg) \right]\\
& \leq \frac{1}{n}\frac{C_M}{D_{\eta,m,T}} 	\expec \left[ \sup_{\begin{subarray}{l}\Theta = (\theta_r)_{r=1}^m :  \|\theta_r \|_{2} \leq D_{\eta,m,T}\\
	b = (b_r)_{r=1}^m, b_r \in [0,1], \sum_{r=1}^m b_r \leq 1 \end{subarray}} \sum_{i=1}^n \epsilon_i \sum_{r=1}^m a_r b_r  \psi_{i,r}\big( \theta_r^T x_i \big) \right]\\
 & =  \frac{1}{n}\frac{C_M}{D_{\eta,m,T}} 	\expec \left[ \sup_{\begin{subarray}{l}\Theta = (\theta_r)_{r=1}^m :  \|\theta_r \|_{2} \leq D_{\eta,m,T}\\
	b = (b_r)_{r=1}^m, b_r \in [0,1], \sum_{r=1}^m b_r \leq 1 \end{subarray}} \sum_{i=1}^n \epsilon_i \sum_{r=1}^m a_r b_r \big( \sigma((\theta_r+\theta_r^{(0)})^T x_i)  - \sigma(\theta_r^{(0)T}x_i) \big) \right] \enspace. \\
\end{align*}	
Therefore, with probability larger than $1-\delta$,
\begin{align*}	
\frac{1}{n}	\expec  \sup_{ \Theta \in \Omega } & \sum_{(i,r) \in \mathcal A } \epsilon_i \frac{a_r}{m^{\beta}} \big( \sigma(\theta_r^T x_i) -\sigma(\theta_r^{(0)T} x_i)   \big)   \\ & \leq  \frac{1}{n} \frac{C_M}{D_{\eta,m,T}} 	\expec \left[ \sup_{\begin{subarray}{l}\|\theta_r \|_{2} \leq D_{\eta,m,T}; \|\tilde \theta_r\|_2 \leq R  \\
	b = (b_r)_{r=1}^m, b_r \in [0,1], \sum_{r=1}^m b_r \leq 1 \end{subarray}} \sum_{i=1}^n \epsilon_i \sum_{r=1}^m a_r b_r \big( \sigma((\theta_r+\tilde \theta_r)^T x_i)  - \sigma( \tilde \theta_r^T x_i) \big) \right]\\
&  \leq \frac{1}{n} \frac{ C_M}{D_{\eta,m,T}} 	\expec  \left[ \sup_{\theta:  \|\theta \|_{2} \leq D_{\eta,m,T}; \|\tilde \theta_r\|_2 \leq R  } \sum_{i=1}^n \epsilon_i \big(\sigma((\theta+ \tilde \theta)^T x_i) + \sup_{\|\tilde \theta_r\| \leq R  } \sum_{i=1}^n \epsilon_i \sigma(\tilde{\theta}^Tx_i) \big) \right] \\
& \leq \frac{K_1}{n} \frac{ C_M}{D_{\eta,m,T}} 	\expec  \left[ \sup_{\theta:  \|\theta \|_{2} \leq D_{\eta,m,T}; \|\tilde \theta_r\|_2 \leq R  } \sum_{i=1}^n \epsilon_i (\theta + \tilde \theta)^T x_i + \sup_{\|\tilde \theta_r\|_2 \leq R  } \sum_{i=1}^n \epsilon_i \tilde{\theta}^T x_i  \right] \\
& \leq\frac{K_1}{\sqrt{n}} \frac{ C_M}{D_{\eta,m,T}}(D_{\eta,m,T} + 2R) = \frac{K_1  m^{1/2-\beta}(D_{\eta,m,T} + 2R)}{\sqrt n} \enspace.
\end{align*}
Let us turn to the second term. For any $(i,r)$ in $\mathcal A^c$ let $\tilde \psi_{i,r}: \mathbb R \mapsto \mathbb R$ defined for all $u \in \mathbb R$ as:
\begin{equation*}
\tilde \psi_{i,r}(u) = \sigma(u + \theta_r^T x_i) -  \sigma(\theta_r^T x_i)
\end{equation*} 
We have $\tilde \psi_{i,r}\big( (\theta_r^{(0)}-\theta_r)^T x_i \big) =   \sigma(\theta_r^{(0)T} x_i)-\sigma(\theta_r^T x_i) $. Using the same path as for $(i,j) \in \mathcal A$, with probability larger than $1-\delta$, we obtain,
\begin{align*}
\frac{1}{n}	\expec  \sup_{ \Theta \in \Omega } & \sum_{(i,r) \in \mathcal A^c } \epsilon_i \frac{a_r}{m^{\beta}} \big(  \sigma(\theta_r^{(0)T} x_i) - \sigma(\theta_r^T x_i)  \big)   \\
 & \leq  \frac{1}{n}\frac{C_M}{D_{\eta,m,T}} 	\expec \left[ \sup_{\begin{subarray}{l}\|\theta_r \|_{2} \leq D_{\eta,m,T}, \|\tilde \theta_r \|_2 \leq R+D_{\eta,m,T} \\
	b = (b_r)_{r=1}^m, b_r \in [0,1], \sum_{r=1}^m b_r \leq 1 \end{subarray}} \sum_{i=1}^n \epsilon_i \sum_{r=1}^m a_r b_r \big( \sigma((\theta_r+ \tilde \theta_r)^T x_i)  - \sigma(\tilde \theta_r^T x_i) \big) \right] \enspace. \\
& \leq\frac{K_1}{\sqrt{n}} \frac{ C_M}{D_{\eta,m,T}}(3D_{\eta,m,T} + 2R) \enspace.
\end{align*}
\end{proof}

\end{document}